\documentclass{article}

\usepackage{microtype}
\usepackage{graphicx}
\usepackage{subcaption}
\usepackage{booktabs} %

\usepackage{hyperref}

\usepackage[accepted]{icml2026}

\usepackage{amsmath}
\usepackage{amssymb}
\usepackage{mathtools}
\usepackage{amsthm}

\usepackage{booktabs}   %
\usepackage{makecell}   %
\usepackage{graphicx}   %
\usepackage[table]{xcolor}  %
\usepackage{amsmath,amssymb,amsthm}
\usepackage{multirow}
\usepackage{wrapfig}

\usepackage[most]{tcolorbox}
\tcbuselibrary{listings}

\usepackage{pifont}
\newcommand{\markscale}{1.5}

\newcommand{\cmark}{\textcolor{green!60!black}{\scalebox{\markscale}{\ding{51}}}}  %
\newcommand{\xmark}{\textcolor{red!70!black}{\scalebox{\markscale}{\ding{55}}}}    %

\newcommand{\AppTOCSection}[2]{%
  \noindent\hyperref[#1]{\textbf{#2}}\dotfill\pageref{#1}\par
}
\newcommand{\AppTOCSubsection}[2]{%
  \noindent\hspace*{1.5em}\hyperref[#1]{#2}\dotfill\pageref{#1}\par
}

\usepackage[most]{tcolorbox}
\tcbuselibrary{listings,skins,breakable}
\usepackage{xcolor}
\usepackage{enumitem}

\newtcblisting{PromptBox}[1][]{%
  enhanced,
  breakable,
  colback=gray!5,
  colframe=gray!50,
  boxrule=0.6pt,
  arc=2mm,
  left=1.2mm,right=1.2mm,top=1mm,bottom=1mm,
  listing only,
  listing options={basicstyle=\ttfamily\small,breaklines=true,columns=fullflexible},
  title={Prompt},
  fonttitle=\bfseries,
  #1
}

\newtcolorbox{MetricsBox}[1][]{%
  enhanced,
  breakable,
  colback=gray!5,
  colframe=gray!50,
  boxrule=0.6pt,
  arc=2mm,
  left=1.2mm,right=1.2mm,top=1mm,bottom=1mm,
  title={Metrics},
  fonttitle=\bfseries,
  before upper={%
    \setlength{\parindent}{0pt}%
    \setlength{\parskip}{0.35em}%
  },
  #1
}

\newtcolorbox{RubricBox}[1][]{%
  enhanced,
  breakable,
  colback=gray!5,
  colframe=gray!50,
  boxrule=0.6pt,
  arc=2mm,
  left=1.2mm,right=1.2mm,top=1mm,bottom=1mm,
  fonttitle=\bfseries,
  before upper={%
    \setlength{\parindent}{0pt}%
    \setlength{\parskip}{0.35em}%
  },
  #1
}

\newlist{rubric}{description}{1}
\setlist[rubric]{%
  nosep,
  leftmargin=1.8em,
  labelsep=0.45em,
  itemsep=0.15em,
  topsep=0.15em,
  parsep=0pt
}

\newcommand{\Crit}[2]{\textbf{#1}\hfill\textit{(#2)}}

\definecolor{GEMframe}{RGB}{120,170,235} %
\definecolor{R1frame}{RGB}{115,205,170}  %

\newtcolorbox{CaseBox}[2][]{%
  enhanced,
  breakable,
  colback=white,          %
  colframe=gray!50,
  boxrule=0.6pt,
  arc=2mm,
  left=1.2mm,right=1.2mm,top=1mm,bottom=1mm,
  title={#2},
  fonttitle=\bfseries,
  before upper={%
    \setlength{\parindent}{0pt}%
    \setlength{\parskip}{0.35em}%
  },
  #1
}

\newtcolorbox{ModelBox}[2][]{%
  enhanced,
  breakable,
  colback=white,
  colframe=gray!50,
  boxrule=0.6pt,
  arc=2mm,
  left=1.2mm,right=1.2mm,top=1mm,bottom=1mm,
  title={#2},
  fonttitle=\bfseries,
  before upper={%
    \setlength{\parindent}{0pt}%
    \setlength{\parskip}{0.35em}%
  },
  #1
}

\usepackage[capitalize,noabbrev]{cleveref}

\theoremstyle{plain}
\newtheorem{theorem}{Theorem}[section]

\newtheorem{lemma}[theorem]{Lemma}

\theoremstyle{definition}

\newtheorem{assumption}[theorem]{Assumption}
\theoremstyle{remark}

\usepackage[disable,textsize=tiny]{todonotes}

\icmltitlerunning{ECG-R1: Protocol-Guided and Modality-Agnostic MLLM for Reliable ECG Interpretation}

\begin{document}

\twocolumn[
  \icmltitle{ECG-R1: Protocol-Guided and Modality-Agnostic MLLM for \\ Reliable ECG Interpretation}

  \icmlsetsymbol{equal}{*}

  \begin{icmlauthorlist}
    \icmlauthor{Jiarui Jin}{pku_sist,pku_hds,pku_state}
    \icmlauthor{Haoyu Wang}{pku_hds}
    \icmlauthor{Xingliang Wu}{tianjin}
    \icmlauthor{Xiaocheng Fang}{pku_sist,pku_hds,pku_state}
    \icmlauthor{Xiang Lan}{nus}
    \icmlauthor{Zihan Wang}{pku_sist,pku_state} \\
    \icmlauthor{Deyun Zhang}{heartvoice}
    \icmlauthor{Bo Liu}{pku_sist}
    \icmlauthor{Yingying Zhang}{tencent} 
    \icmlauthor{Xian Wu}{tencent}
    \icmlauthor{Hongyan Li}{pku_sist,pku_state}
    \icmlauthor{Shenda Hong}{pku_hds}
  \end{icmlauthorlist}

  \icmlaffiliation{pku_sist}{School of Intelligence Science and Technology, Peking University}
  \icmlaffiliation{pku_hds}{National Institute of Health Data Science, Peking University}
  \icmlaffiliation{pku_state}{State Key Laboratory of General Artificial Intelligence, Peking University}
  \icmlaffiliation{tianjin}{Tianjin Institute of Cardiology, the Second Hospital of Tianjin Medical University}
  \icmlaffiliation{nus}{National University of Singapore}
  \icmlaffiliation{tencent}{Jarvis Lab, Tencent}
  \icmlaffiliation{heartvoice}{HeartVoice Medical Technology}

  \icmlcorrespondingauthor{Shenda Hong}{hongshenda@pku.edu.cn}
  \icmlcorrespondingauthor{Hongyan Li}{leehy@pku.edu.cn}
  \icmlcorrespondingauthor{Xian Wu}{kevinxwu@tencent.com}

  \icmlkeywords{Machine Learning, ICML}

  \vskip 0.3in
]

\printAffiliationsAndNotice{}

\begin{abstract}

Electrocardiography (ECG) serves as an indispensable diagnostic tool in clinical practice, yet existing multimodal large language models (MLLMs) remain unreliable for ECG interpretation, often producing plausible but clinically incorrect analyses. To address this, we propose ECG-R1, the first reasoning ECG MLLM designed for reliable ECG interpretation via three innovations. First, we construct the interpretation corpus using \textit{Protocol-Guided Instruction Data Generation}, grounding interpretation in measurable ECG features and monograph-defined quantitative thresholds and diagnostic logic. Second, we present a modality-decoupled architecture with \textit{Interleaved Modality Dropout} to improve robustness and cross-modal consistency when either the ECG signal or ECG image is missing. Third, we present \textit{Reinforcement Learning with ECG Diagnostic Evidence Rewards} to strengthen evidence-grounded ECG interpretation. Additionally, we systematically evaluate the ECG interpretation capabilities of proprietary, open-source, and medical MLLMs, and provide the first quantitative evidence that severe hallucinations are widespread, suggesting that the public should not directly trust these outputs without independent verification. Code is available at \href{https://github.com/PKUDigitalHealth/ECG-R1}{here}.

\end{abstract}

\section{Introduction}

Electrocardiogram (ECG), which captures the electrical activity of the heart through surface electrodes, is the foundation for diagnosing cardiac diseases \cite{hong2020opportunities,  wang2026position}. As the most common data in cardiology, the reliability of ECG interpretation is crucial. Incorrect interpretations can directly result in patients not receiving appropriate treatment, leading to more severe health problems or life-threatening situations \cite{doi:10.1161/CIRCULATIONAHA.106.623652, ECGInterpretation_Rafie_2021}. In clinical practice, automated algorithms extract ECG features (e.g.,  R-R interval) and generate preliminary diagnostic hypotheses, which are then cross-checked and confirmed by clinicians against medical protocols and twelve-lead waveform analysis to ensure the accuracy and reliability of the interpretation and diagnosis.

Deep learning methods have made promising advancements in cardiac anomaly detection \cite{ CLOCSContrastive_Kiyasseh_2021, Intra-InterSubject_Lan_2022, GuidingMasked_Na_2024, jin2025reading, li2025electrocardiogram, Self-AlignmentLearning_Jin_2025, wang2026sediff}, but still lack sufficient language interpretation capabilities and model interpretability. In recent years, medical multimodal large language models (medical MLLMs) have rapidly advanced in tasks such as medical data interpretation and clinical report generation \cite{VisualInstruction_Huang_2025}. However, on routine clinical ECGs, our experiments indicate that existing medical MLLMs remain markedly inadequate in diagnostic reliability in two aspects:

\begin{figure*}[t]
    \centering

    \begin{minipage}[c]{0.49\textwidth}
        \centering
        \setlength{\tabcolsep}{6pt}
        \renewcommand{\arraystretch}{2}
        \setlength{\aboverulesep}{0.8ex}
        \setlength{\belowrulesep}{0.8ex}

        \resizebox{\linewidth}{!}{%
            \begin{tabular}{cccccc}
            \toprule
            \multirow{2}{*}{\textbf{Method Category}} &
            \multicolumn{2}{c}{\textbf{Modality}} &
            \multicolumn{1}{c}{\textbf{Corpus}} &
            \multicolumn{2}{c}{\textbf{Properties}} \\
            \cmidrule(lr){2-3} \cmidrule(lr){4-4} \cmidrule(lr){5-6}
            & \textbf{Image} & \textbf{Signal} & \textbf{Reliability} &
            \textbf{Robustness} & \textbf{Consistency} \\
            \midrule
            \makecell[c]{General \& Medical \\ MLLMs}    & \cmark & \xmark & \xmark & \xmark & \xmark \\
            \makecell[c]{Previous ECG \\ MLLMs}         & \cmark & \cmark & \xmark & \xmark & \xmark \\
            \makecell[c]{ECG-R1 (Ours)}                 & \cmark & \cmark & \cmark & \cmark & \cmark \\
            \bottomrule
            \end{tabular}%
            }

        {\scriptsize
        \cmark~Supported\quad
        \xmark~Not Supported
        }
    \end{minipage}\hfill%
    \begin{minipage}[c]{0.48\textwidth}
        \centering
        \includegraphics[width=\linewidth]{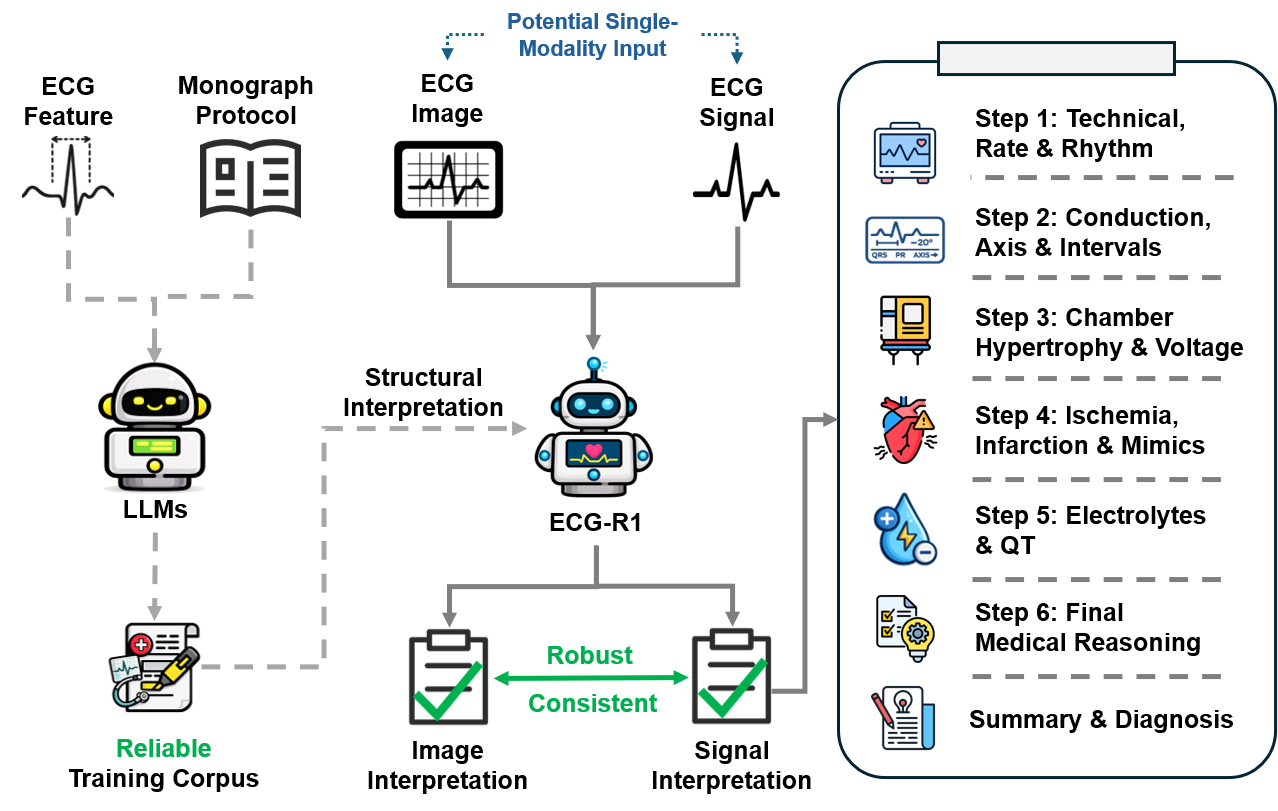}
    \end{minipage}

    \caption{\textbf{Left:} Attribute comparison among general/medical MLLMs, previous ECG-specialized MLLMs, and ECG-R1. General/medical MLLMs typically cannot perform signal analysis and lack high-quality ECG interpretation corpora, which often leads to hallucinated, clinically incorrect interpretations at test time. Previous ECG-specialized MLLMs often construct training corpus by purely prompting LLMs from ECG features, thereby introducing medical errors that render the corpus unreliable, and they are neither robust nor cross-modal consistent under modality missing. \textbf{Right:} ECG-R1 follows a monograph-defined protocol to generate structured, clinically aligned interpretations, remaining robust and cross-modal consistent under modality missing.}
    \label{fig:introducing}
\end{figure*}

{\bfseries Hallucination in ECG Interpretation.} Recent proprietary MLLMs like GPT-5.1, and medical MLLMs like MedGemma \cite{sellergren2025medgemma}, often generate ECG interpretations that seemingly exhibit structural completeness and appropriate clinical terminology. However, quantitative evaluation reveals that these outputs contain numerous hallucinated statements and exhibit low diagnostic accuracy (see Section~\ref{subsec:result_interpretation} for details). The fundamental cause lies in the scarcity of high-quality ECG interpretation data during training. Although the recently ECG-Grounding \cite{GEMEmpowering_Lan_2025a} dataset provides evidence-based ECG interpretation data, its interpretation is still generated by purely prompting general-purpose LLMs that rely heavily on pretrained knowledge, which can introduce clinical errors.

{\bfseries Instability and Inconsistency under Modality Missing.} As a common clinical data format, ECG is routinely presented in two forms: signals and images. Since these modalities convey consistent underlying semantic content, a model's interpretation of the same ECG should remain cross-modally consistent. However, most current medical MLLMs are limited to image-based ECG analysis and do not support signal (time-series) analysis. Although recent ECG omni-perception MLLMs like GEM \cite{GEMEmpowering_Lan_2025a}, integrate ECG signals and ECG images, quantitative results show substantial performance degradation in metrics such as diagnostic accuracy and cross-modal inconsistency under modality missing conditions (Figure \ref{fig:modality_missing} and Table~\ref{tab:modality_consistency}, respectively), which undermines the reliability of ECG interpretation across varying data completeness conditions.

To address the above challenges, we propose ECG-R1, the first reasoning ECG MLLM for ECG
interpretation. ECG-R1 differs from previous methods in three key aspects. First, to mitigate the issue of hallucination in interpretations, we introduce a novel \textbf{\textit{Protocol-Guided Instruction Data Generation}} paradigm. This paradigm incorporates a grounding feature extractor to extract precise physiological characteristics from ECG signals (e.g., R-R interval) and integrates a five-phase protocol extracted from a medical monograph, guiding the interpretation generation model to follow predefined thresholds and systematic diagnostic logic. Second, to address the issue of modality missing, we present a modality-decoupled architecture and introduce the theoretically motivated training strategy \textbf{\textit{Interleaved Modality Dropout (IMD)}}, which explicitly simulates random modality dropouts and interleaved inputs during training to enhance robustness and cross-modal consistency under various data completeness scenarios. Finally, as the first reasoning ECG MLLM in this field, we introduce the \textbf{\textit{Reinforcement Learning with ECG Diagnostic Evidence Rewards (EDER)}}. Unlike general reasoning LLMs (e.g., DeepSeek-R1~\cite{guo2025deepseek}) that mainly optimize final-answer correctness, ECG-R1 additionally rewards structured intermediate reasoning, complementing answer-level optimization with process-level clinical reasoning and thereby improving ECG interpretation accuracy and reliability. In summary, the main contributions of this work are as follows:
\begin{itemize}

\item We propose ECG-R1, the first reasoning ECG MLLM for ECG interpretation, and further optimize it via reinforcement learning with ECG Diagnostic Evidence Rewards to improve ECG interpretation capability.

\item We introduce Protocol-Guided Instruction Data Generation, leveraging monograph-defined thresholds and diagnostic rules to generate the corpus of structured six-step analyses with a final summary and diagnosis, while surfacing plausible unannotated abnormalities.

\item We present Interleaved Modality Dropout, a theoretically motivated training strategy that simulates modality dropouts and token-block order swapping, and provide theoretical guarantees for robustness and cross-modal consistency.

\item To facilitate future research, we systematically evaluate ECG interpretation across proprietary, open-source, and medical MLLMs, revealing substantial limitations in current models, and further validate ECG-R1 via licensed-cardiologist evaluation, demonstrating superior reliability and clinical usefulness.

\end{itemize}

\begin{figure*}[t] %
    \centering
    \includegraphics[width=1.0\textwidth]{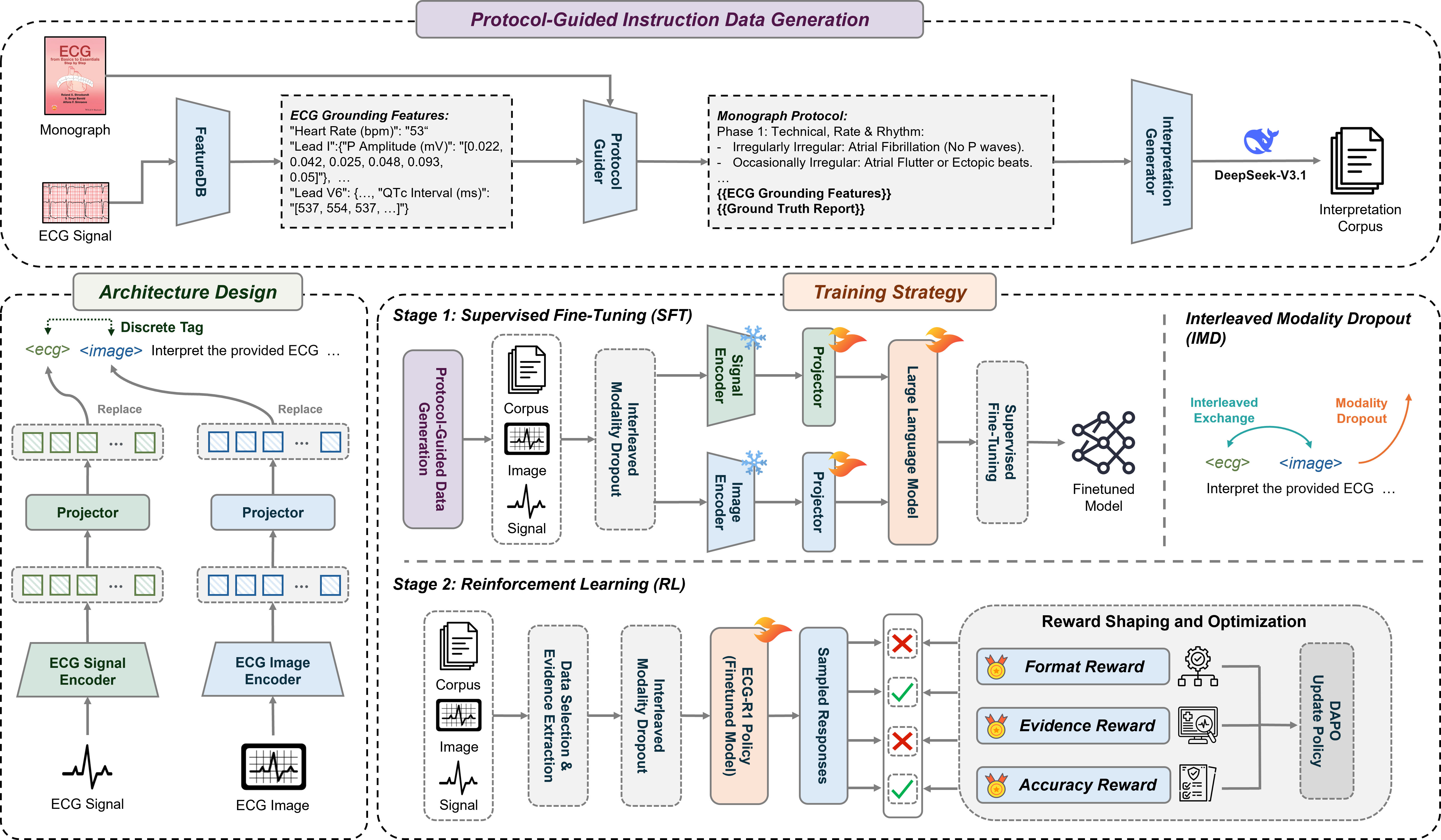} %
    \caption{Framework of ECG-R1. Instruction generation builds a protocol-guided interpretation corpus by combining ECG grounding features with the monograph protocol. Architecture adopts a decoupled dual-encoder design with lightweight projectors to align modality-specific representations into a shared LLM space. Training follows a two-stage strategy with SFT followed by RL, and integrates IMD to enhance robustness and cross-modal consistency under modality missing.} %
    \label{fig:framework} %
\end{figure*}

\paragraph{Conflict of Interest Disclosure.}
Yingying Zhang and Xian Wu are employees of Tencent. The other authors declare no financial conflicts of interest related to this work.

\section{Method}
\subsection{Framework Overview}
We overview the key characteristics of ECG-R1 in Figure~\ref{fig:introducing} and illustrate its end-to-end training pipeline in Figure~\ref{fig:framework}. Given multimodal input $x=(x^{\text{text}},x^{I},x^{T})$, where $x^{\text{text}}$ denotes the instruction text, $x^{I}$ the rendered ECG image, and $x^{T}$ the ECG time-series signal, the model generates an ECG interpretation $y$. We present ECG-R1 in three components: instruction corpus curation (Section~\ref{subsec:data}), architecture design (Section~\ref{subsec:encoding}), and the training strategy (Sections~\ref{subsec:imd}--\ref{subsec:rl}).

\subsection{Protocol-Guided Instruction Data Generation}
\label{subsec:data}
Instruction data underpins multimodal training and critically determines how an MLLM responds to clinical queries. Recent efforts (e.g., ECG-Grounding) generate fine-grained supervision by purely prompting general-purpose LLMs to produce causal-style interpretations from diagnosis labels and extracted signal features. However, purely prompt-based generation is largely driven by pretrained priors rather than explicit diagnostic criteria, which can introduce clinically implausible causal attributions and factual medical errors (Figure~\ref{fig:dataset}). To mitigate these issues, we introduce \emph{Protocol-Guided Instruction Data Generation}, which generates supervision by adhering to standardized ECG interpretation protocols derived from a clinical monograph.

{\bfseries Grounding Features Extraction.}
To enable feature-grounded ECG interpretation, we extract structured physiological evidence from the raw ECG time-series. For each lead and detected heartbeat, we measure fiducial-point amplitudes, waveform morphology, and key intervals, and
organize them into beat-wise, time-ordered feature sequences (e.g., $[QRS_1,\ldots,QRS_K]$ for QRS duration over $K$ beats). In practice, we construct 14 sequences per lead across 12 leads,
including heart rate, RR intervals, P amplitude/duration, PR interval, QRS amplitude/duration, T amplitude/duration, ST
descriptors, and QT/QTc intervals. The extraction is defined as $\boldsymbol{x}^{fs}=\mathrm{FeatureDB}(\boldsymbol{x}^{T}),$ where $\boldsymbol{x}^{fs}$ is a feature dictionary and
FeatureDB \cite{hong2019combining} is a deterministic, non-trainable extractor.

{\bfseries Protocol-Guided Diagnosis Guider.}
Given the extracted feature dictionary $\boldsymbol{x}^{fs}$, our goal is to generate a low-hallucination target response
$y$ without costly expert annotations. We therefore design a protocol-guided diagnosis guider that composes a
protocol-aware prompt: $\boldsymbol{x}^{p}=\mathrm{ProtocolGuider}(\boldsymbol{x}^{fs},x^{\text{protocol}}),$ which steers an LLM toward evidence-based ECG interpretation.
Here, $x^{\text{protocol}}$ is derived from Chapter~23
(\textit{How to Read an ECG}) of \emph{ECG from Basics to Essentials: Step by Step} \cite{stroobandt2015ecg}. The original procedure is reorganized
into five phases: (i) Technical, Rate \& Rhythm, (ii) Conduction, Axis \& Intervals, (iii) Chamber Hypertrophy \& Voltage,
(iv) Ischemia, Infarction \& Mimics, and (v) Electrolytes \& QT. We further enforce differential exclusion with explicit negatives to rule out key mimics.

{\bfseries ECG Protocol-Guided Grounding CoT Data.}
Using the proposed protocol-guided generation pipeline, we curate instruction--response pairs from
MIMIC-IV-ECG \cite{johnson2023mimic}, where each $y$ follows a fixed schema: a \texttt{<think>} block encoding the six-step protocol reasoning
trace (five phases plus a final medical reasoning step), a brief summary narrative, and an \texttt{<answer>} block
providing the final diagnosis. We employ DeepSeek-V3.1-Terminus \cite{liu2024deepseek} as an interpretation generator. Given a protocol-structured prompt $\boldsymbol{x}^{p}$, it produces an interpretation $y=\mathrm{InterpretationGenerator}(\boldsymbol{x}^{p}),$ where $y$ corresponds to a single protocol-guided instruction instance in our corpus. Repeating this procedure yields $30{,}000$ protocol-guided instruction samples. Figure~\ref{fig:dataset} shows a qualitative comparison. Interpretations relying on pretrained priors are prone to hallucinations and factual errors. By contrast, our protocol-guided generation injects standardized thresholds and a systematic workflow, reducing hallucination-driven mistakes and revealing clinically meaningful abnormalities missed by the original report.

\begin{figure}[t] %
    \centering
    \includegraphics[width=\columnwidth]{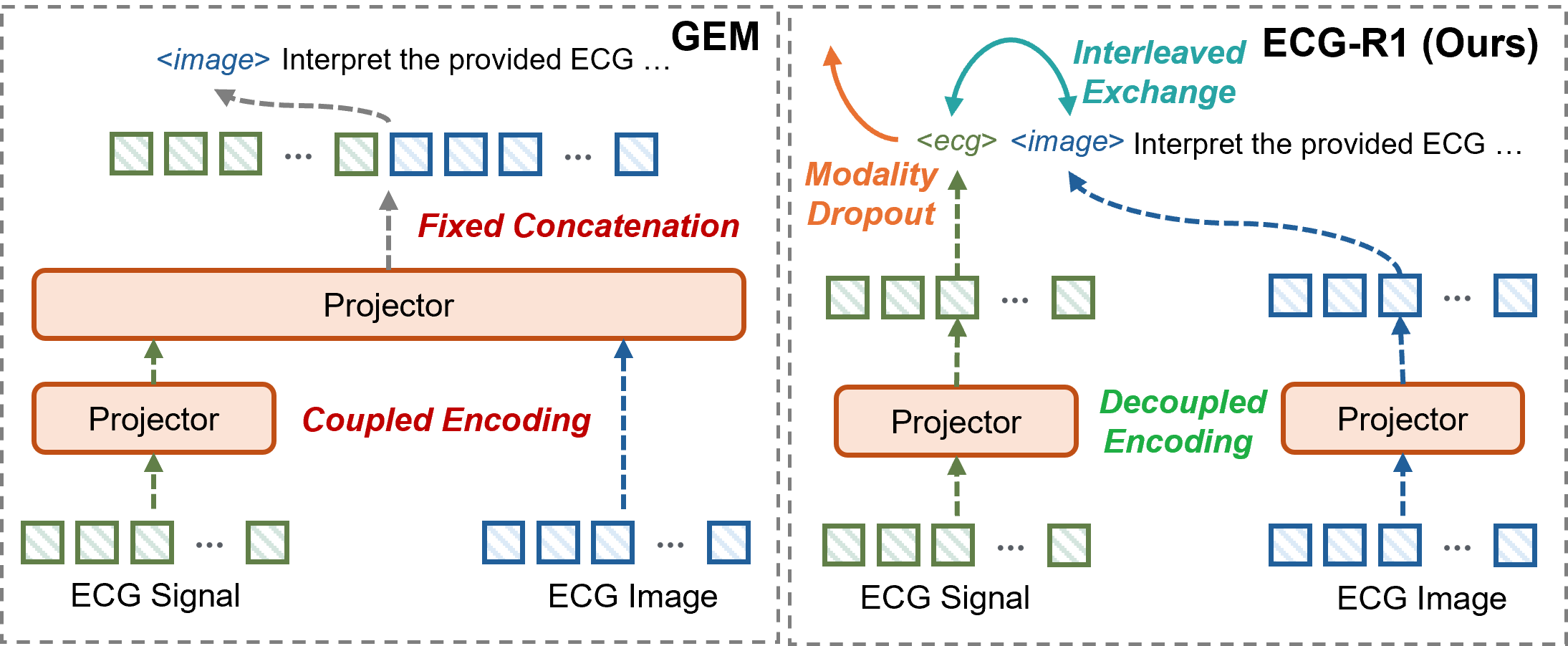} %
    \caption{Architecture Comparison of GEM and ECG-R1.} %
    \label{fig:arch_compare} %
\end{figure}

\subsection{Decoupled Modalities Encoding}
\label{subsec:encoding}
\paragraph{{\bfseries Design Motivation.}} Existing ECG omni-perception MLLMs adopt a coupled encoding strategy (Figure \ref{fig:arch_compare}): time-series (TS) embeddings are
projected through a TS--Image alignment layer and then reuse the Image--Language projector before being injected into
the \texttt{<image>} token space with fixed concatenation. This design (i) tightly couples TS encoding to the \texttt{<image>} placeholder, implicitly assuming paired modalities and making single-modality inference less natural and (ii) reuses a single Image--Language projector across heterogeneous modalities, creating a shared capacity bottleneck and representational compromise.

To address this issue, we instead use a \emph{Decoupled Modalities Encoding} scheme with Qwen3-VL-8B \cite{Qwen3-VLTechnical_Bai_2025} (LLM + visual encoder) and ECG-CoCa \cite{zhao2025ecgchat}
(time-series encoder). We introduce an explicit \texttt{<ecg>} tag (placed before \texttt{<image>}) and inject the
time-series token block only at \texttt{<ecg>}, so the model encodes the time-series only when \texttt{<ecg>} appears.
Given $x^{T}$ and $x^{I}$, we compute modality tokens and map
them into the LLM embedding space via two independent projectors:
\begin{equation}
\begin{aligned}
e^{T}&=\mathrm{Encoder}_{T}(x^{T};\theta_{E_T}),\;\; z^{T}=\mathrm{Proj}_{T}(e^{T};\theta_{\mathrm{Proj}_T}),\\
e^{I}&=\mathrm{Encoder}_{I}(x^{I};\theta_{E_I}),\;\; z^{I}=\mathrm{Proj}_{I}(e^{I};\theta_{\mathrm{Proj}_I}),
\end{aligned}
\end{equation}
where $\theta_M \triangleq \{\theta_{\mathrm{Proj}_I},\,\theta_{\mathrm{Proj}_T}\}$. and $z^{T},z^{I}\in\mathbb{R}^{\cdot\times d}$.
We condition the LLM on $x^{\text{text}}$ together with modality tokens $(z^{T},z^{I})$, injecting $z^{T}$ at \texttt{<ecg>} and $z^{I}$ at \texttt{<image>}.

\subsection{Interleaved Modality Dropout}
\label{subsec:imd}

\paragraph{{\bfseries Theoretical Motivation.}}
Prior ECG omni-perception MLLMs adopt a fixed fusion pattern, typically concatenating the time-series and image token blocks in a predetermined order. Training under this single canonical fusion pattern makes the model sensitive to test-time deviations (e.g., missing a modality). Such shifts degrade performance and exacerbate cross-modal inconsistency.

\emph{Insight.}
We represent common test-time deviations of ECG fusion as a finite set of environments $\mathcal T_{\text{test}}$ (formalized in Setup),
covering modality missingness and modality-block order changes.
In ECG, the image and time-series are two renderings of the same waveform.
Assuming typical pre-processing where external visual annotations (e.g., cardiologist notes) are excluded,
the intrinsic view asymmetry $\Delta_{\text{view}}$ is expected to be small,
unlike generic omni-perception tasks where different views carry distinct information (e.g., speech for semantics; vision for spatial information).

To address these issues, we propose \emph{Interleaved Modality Dropout (IMD)}. During supervised fine-tuning (SFT) and reinforcement learning (RL) stages, we sample a transformation $\tau\sim q$ over $\mathcal T_{\text{test}}$ by (i) randomly dropping either the image or the time-series modality and (ii) optionally swapping the order of their token blocks. This corresponds to minimizing a mixture risk $R_q(\theta)=\mathbb E_{\tau\sim q}[R_\tau(\theta)]$ over all test-relevant environments instead of a single fixed pattern, which we later show provably upper-bounds the worst-environment risk $R_{\max}(\theta)$ and, in the ECG setting where intrinsic view asymmetries $\Delta_{\text{view}}$ and $\Delta_{\text{swap}}$ are negligible, also controls the cross-modal discrepancies $\mathcal F(\theta)$ and $\mathcal F_{\text{swap}}(\theta)$, thereby yielding explicit robustness and consistency guarantees without introducing additional missing-modality generators or explicit alignment losses.

\paragraph{{\bfseries Setup.}}
Let $x=(x^{\text{text}},x^{I},x^{T})$ and $y$ denote the target output text. Under teacher forcing, define the NLL loss
$\ell_\theta(x,y)\triangleq-\log P_\theta(y\mid x)$.
IMD samples a transformation $\tau\sim q$ from the finite environment set
$\mathcal T_{\text{test}}=\{\tau_I,\tau_T,\tau_{IT},\tau_{TI}\}$,
where $\tau_I,\tau_T$ drop one modality and $\tau_{IT},\tau_{TI}$ swap the two modality token blocks. In practice, $q$ is induced by two independent trials: (i) a modality-drop trial with probability $p_d$; and
(ii) conditioned on retaining both modalities, a token-swap trial with probability $p_s$.

Define the environment risk
$R_\tau(\theta)\triangleq \mathbb E_{(x,y)}[\ell_\theta(\tau(x),y)]$,
the mixture risk
$R_q(\theta)\triangleq \mathbb E_{\tau\sim q}[R_\tau(\theta)]$,
and the worst-environment risk
$R_{\max}(\theta)\triangleq \max_{\tau\in\mathcal T_{\text{test}}}R_\tau(\theta)$.

\begin{assumption}[Coverage]
\label{assump:coverage_main}
There exists $\alpha>0$ such that $q(\tau)\ge\alpha$ for all $\tau\in\mathcal T_{\text{test}}$.
\end{assumption}

\paragraph{{\bfseries Modality Robustness.}} We study robustness to the modality-drop and token-swap transformations in $\mathcal T_{\text{test}}$,
quantified by the worst-environment risk $R_{\max}(\theta)$.

\begin{theorem}[Robustness under IMD]
\label{thm:robust_main_short}
Under Assumption~\ref{assump:coverage_main}, $R_{\max}(\theta)\le \alpha^{-1}R_q(\theta)$,
where in our implementation $\alpha=\min\{p_d/2,\,(1-p_d)p_s,\,(1-p_d)(1-p_s)\}$.
\end{theorem}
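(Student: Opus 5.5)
The bound follows from nonnegativity of the loss together with the coverage assumption. The value of $\alpha$ comes from computing the sampling distribution $q$ induced by the two-trial implementation.

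\emph{Step 1 (nonnegativity).} Since $\ell_\theta(x,y)=-\log P_\theta(y\mid x)$ and $P_\theta(y\mid x)\le 1$, we have $\ell_\theta\ge 0$. Hence $R_\tau(\theta)\ge 0$ for every $\tau\in\mathcal T_{\text{test}}$, and $R_q(\theta)=\sum_{\tau}q(\tau)R_\tau(\theta)$ is a finite sum of nonnegative terms (possibly $+\infty$, in which case the claim is trivial).

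\emph{Step 2 (single-term bound).} Let $\tau^\star\in\arg\max_{\tau\in\mathcal T_{\text{test}}}R_\tau(\theta)$; it exists because $\mathcal T_{\text{test}}$ is finite. Dropping all other terms, which are nonnegative, gives
\[
R_q(\theta)\ \ge\ q(\tau^\star)R_{\tau^\star}(\theta)\ \ge\ \alpha R_{\max}(\theta)
\]
by Assumption~\ref{assump:coverage_main}. Dividing by $\alpha>0$ yields $R_{\max}(\theta)\le\alpha^{-1}R_q(\theta)$.

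\emph{Step 3 (value of $\alpha$).} We compute $q$ from the two independent trials. With probability $p_d$ a modality is dropped, and we take the image or time-series to be dropped with equal probability, so $q(\tau_I)=q(\tau_T)=p_d/2$. With probability $1-p_d$ both modalities are kept; conditionally, the blocks are swapped with probability $p_s$ and left in canonical order otherwise. This gives $q(\tau_{TI})=(1-p_d)p_s$ and $q(\tau_{IT})=(1-p_d)(1-p_s)$, or the reverse depending on which order is canonical; the minimum is the same either way. Taking the minimum over the four environments shows that coverage holds with $\alpha=\min\{p_d/2,(1-p_d)p_s,(1-p_d)(1-p_s)\}$, which is positive whenever $p_d,p_s\in(0,1)$.

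The only step needing care is Step 3. The paper does not state explicitly that the dropped modality is chosen uniformly. I would make that explicit as the implementation convention, since it is the assumption that produces the factor $p_d/2$. Everything else is the elementary Markov-type argument above.
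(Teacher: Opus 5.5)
Your proof is correct and follows the paper's argument exactly. You choose a maximizing environment $\tau^\star$, lower-bound $R_q(\theta)$ by the single term $q(\tau^\star)R_{\tau^\star}(\theta)$, apply coverage, and read $\alpha$ off the induced sampling distribution. Your explicit appeal to nonnegativity of the NLL is a step the paper leaves implicit, and the uniform choice of dropped modality that you flag is stated in the paper's appendix as $q(\tau_I)=q(\tau_T)=p_d/2$, with $\tau_{TI}$ the canonical order so that $q(\tau_{TI})=(1-p_d)(1-p_s)$ and $q(\tau_{IT})=(1-p_d)p_s$.
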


\paragraph{{\bfseries Modality Consistency.}}
Define
$\mathcal F(\theta)\triangleq \mathbb E_x\,\mathrm{TV}(P_\theta(\cdot\mid\tau_I(x)),P_\theta(\cdot\mid\tau_T(x)))$ and
$\mathcal F_{\text{swap}}(\theta)\triangleq \mathbb E_x\,\mathrm{TV}(P_\theta(\cdot\mid\tau_{IT}(x)),P_\theta(\cdot\mid\tau_{TI}(x)))$.
Allowing intrinsic information disparity, let
$\Delta_{\text{view}}\triangleq \mathbb E_x\,\mathrm{TV}(P^\star(\cdot\mid\tau_I(x)),P^\star(\cdot\mid\tau_T(x)))$ and
$\Delta_{\text{swap}}\triangleq \mathbb E_x\,\mathrm{TV}(P^\star(\cdot\mid\tau_{IT}(x)),P^\star(\cdot\mid\tau_{TI}(x)))$.
In ECG, $\tau_I(x)$ and $\tau_T(x)$ are two renderings of the same waveform, so $\Delta_{\text{view}}$ and $\Delta_{\text{swap}}$ are negligible.

\begin{theorem}[Consistency via excess risk]
\label{thm:fair_main_short}
Let $R_\tau^\star$ be the Bayes-optimal risk in environment $\tau$ and
$\varepsilon_\tau(\theta)\triangleq R_\tau(\theta)-R_\tau^\star$.
Then
\[
\begin{aligned}
\mathcal F(\theta)
&\le \Delta_{\text{view}}
+\sqrt{\varepsilon_{\tau_I}(\theta)/2}
+\sqrt{\varepsilon_{\tau_T}(\theta)/2},\\
\mathcal F_{\text{swap}}(\theta)
&\le \Delta_{\text{swap}}
+\sqrt{\varepsilon_{\tau_{IT}}(\theta)/2}
+\sqrt{\varepsilon_{\tau_{TI}}(\theta)/2}.
\end{aligned}
\]
Moreover, by coverage, for any $\tau\in\mathcal T_{\text{test}}$,
$R_q(\theta)-\bar R_q^\star\ge \alpha\,\varepsilon_\tau(\theta)$, where
$\bar R_q^\star\triangleq \mathbb E_{\tau\sim q}[R_\tau^\star]$.
\end{theorem}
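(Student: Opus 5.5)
The plan is the triangle inequality for total variation, then Pinsker's inequality to pass from per-environment KL gaps to TV gaps, and finally the identification of excess NLL risk with expected KL divergence.

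For $\mathcal F(\theta)$, fix $x$ and bound
\[
\mathrm{TV}\bigl(P_\theta(\cdot\mid\tau_I(x)),P_\theta(\cdot\mid\tau_T(x))\bigr)\le \mathrm{TV}\bigl(P_\theta(\cdot\mid\tau_I(x)),P^\star(\cdot\mid\tau_I(x))\bigr)+\mathrm{TV}\bigl(P^\star(\cdot\mid\tau_I(x)),P^\star(\cdot\mid\tau_T(x))\bigr)+\mathrm{TV}\bigl(P^\star(\cdot\mid\tau_T(x)),P_\theta(\cdot\mid\tau_T(x))\bigr).
\]
Taking $\mathbb E_x$, the middle term becomes exactly $\Delta_{\text{view}}$. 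The swap bound is identical with $\tau_{IT},\tau_{TI}$.

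For each outer term, apply Pinsker pointwise:
\[
\mathrm{TV}(P^\star,P_\theta)\le\sqrt{\mathrm{KL}(P^\star\|P_\theta)/2}.
\]
Jensen (concavity of $\sqrt{\cdot}$) then gives
\[
\mathbb E_x\,\mathrm{TV}\le\sqrt{\mathbb E_x\,\mathrm{KL}/2}.
\]
The key identity needed is that, with $P^\star(\cdot\mid\tau(x))$ the true conditional law of $y$ given the transformed input and $R^\star_\tau$ its conditional entropy (the Bayes risk under log loss),
\[
R_\tau(\theta)-R^\star_\tau=\mathbb E_{\tau(x)}\,\mathbb E_{y\sim P^\star}\Bigl[\log\frac{P^\star(y\mid\tau(x))}{P_\theta(y\mid\tau(x))}\Bigr]=\mathbb E_x\,\mathrm{KL}\bigl(P^\star(\cdot\mid\tau(x))\,\|\,P_\theta(\cdot\mid\tau(x))\bigr).
\]
This is the step needing care. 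I would state explicitly that $P^\star$ is defined as the conditional distribution of $y$ given $\tau(x)$ under the data distribution. That makes the cross-entropy decomposition exact, and it makes $\varepsilon_\tau\ge0$.

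For the final claim, write
\[
R_q(\theta)-\bar R_q^\star=\sum_{\tau'}q(\tau')\,\varepsilon_{\tau'}(\theta).
\]
Every summand is nonnegative, since each excess risk is an expected KL. Dropping all summands except the one for the given $\tau$ and using $q(\tau)\ge\alpha$ from Assumption~\ref{assump:coverage_main} yields $R_q-\bar R_q^\star\ge\alpha\,\varepsilon_\tau$. This is the same coverage argument used for Theorem~\ref{thm:robust_main_short}, but applied to excess risks rather than raw risks.

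The main obstacle is only definitional rigor: ensuring $R^\star_\tau$ is attained by $P^\star$, so that $\varepsilon_\tau$ equals an expected KL and is nonnegative. Measurability and finiteness hold because the output space is discrete token sequences.
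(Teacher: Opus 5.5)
Your proposal is correct and matches the paper's proof essentially step for step. Both use the triangle inequality for TV through $P^\star$, then Pinsker pointwise followed by Jensen for $\sqrt{\cdot}$. Both then identify $\varepsilon_\tau(\theta)$ with $\mathbb E_x D_{\mathrm{KL}}(P_\tau^\star\|P_\theta^\tau)$ via the cross-entropy decomposition, which is the paper's Lemma~\ref{lem:ce_decomp}. The mixture-dominance step is also the same, using $\varepsilon_\tau\ge 0$ and $q(\tau)\ge\alpha$.

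One small caveat on your closing remark: discreteness of the output space alone does not guarantee finite entropy, since countable supports can have infinite entropy. Finiteness of $R_\tau^\star$ is better justified by a bounded maximum generation length, a point the paper also leaves implicit.
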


\paragraph{{\bfseries Proofs.}}
See Appendix~\ref{app:proof_modality_robustness} and Appendix~\ref{app:proof_modality_fairness}.

\paragraph{{\bfseries Takeaway.}}
Theorems~\ref{thm:robust_main_short} and~\ref{thm:fair_main_short} together establish that optimizing the IMD mixture risk $R_q(\theta)$ enforces both desiderata over $\mathcal T_{\text{test}}$: by coverage, small $R_q(\theta)$ upper-bounds the worst-environment risk $R_{\max}(\theta)$, while the mixture excess risk $R_q(\theta)-\bar R_q^\star$ controls cross-modal discrepancy. In ECG, the image and time-series are two renderings of the same waveform, so $\Delta_{\text{view}}$ (and $\Delta_{\text{swap}}$) is negligible, and reducing $R_q(\theta)$ directly promotes paired-view agreement and swap invariance. In contrast, fixed concatenation minimizes only $R_{\tau_0}(\theta)$, with no guarantees on $R_{\max}(\theta)$ or cross-modal consistency in unseen test environments.

\subsection{Supervised Fine-tuning}
\label{subsec:sft}

We conduct one-epoch supervised fine-tuning (SFT) on the instruction-tuning set $\mathcal D_{\mathrm{SFT}}$
(defined in Section~\ref{subsec:training_data}), which pools our protocol-guided instruction samples with a public ECG instruction dataset,
and optimize only $\theta=\{\theta_M,\theta_{LLM}\}$ with IMD:
\begin{equation}
\min_{\theta=\{\theta_M,\theta_{LLM}\}}
\; \mathbb E_{(x,y)\sim \mathcal D_{SFT}}\big[-\log \pi_\theta(y\mid x)\big].
\end{equation}

\begin{table*}[t]
\small

\caption{Grounded ECG Interpretation Results.}

\centering
\resizebox{\linewidth}{!}{
\begin{tabular}{lcccccccc}
\toprule
\textbf{Metric} & \footnotesize\textbf{\makecell[l]{Diagnosis \\ Accuracy}} & \footnotesize\textbf{\makecell[c]{Analysis \\ Completeness}} & \footnotesize\textbf{\makecell[c]{Analysis \\ Relevance}} &\footnotesize\textbf{\makecell[c]{Lead \\ Evidence \\ Validity}} &\footnotesize\textbf{\makecell[c]{ECG\\Feature\\Grounding}} &\footnotesize\textbf{\makecell[c]{Evidence\\Based \\ Reasoning}}  & \footnotesize\textbf{\makecell[c]{Clinical \\ Diagnostic \\ Fidelity}} \\

\midrule
\rowcolor{gray!15}
\multicolumn{8}{c}{\footnotesize\textit{Proprietary MLLMs}} \\
\midrule
Gemini-3-Pro & 13.40 & 2.41 & 0.97	& 0.74 & 30.13 & 21.47 & 27.48 \\
GPT-5.1-Instant & 31.48 & 3.03 & 1.48 & 1.92 & 47.29 & 40.33 & 43.46 \\
\midrule
\rowcolor{gray!15}
\multicolumn{8}{c}{\footnotesize\textit{Open-source MLLMs}} \\
\midrule
MiMo-VL-7B-SFT  & 11.28 & 1.21 & 0.50 & 0.24 & 23.52 & 21.24 & 25.22  \\
GLM-4.1V-9B-Base & 17.53 & 1.94 & 0.78 & 0.57 & 30.83 & 23.97 & 25.24 \\
Qwen3-VL-8B-Instruct & 20.03 & 2.67 & 0.82 & 0.32 & 34.20 & 28.35 & 31.17 \\
InternVL3-8B-Instruct  & 20.94 & 1.78 & 0.97 & 0.18 & 27.80 & 20.43 & 21.55 \\
MiniCPM-V-4.5 & 25.29 & 2.83 & 1.40 & 0.45 & 38.82 & 31.12 & 34.77 \\

\midrule
\rowcolor{gray!15}
\multicolumn{8}{c}{\footnotesize\textit{Medical MLLMs}} \\
\midrule

MedVLM-R1  & 16.62 & 0.49 & 0.13 & 0.00 & 11.36 & 5.63 & 5.12 \\
Chiron-o1-8B & 21.20 & 1.57 & 1.01 & 0.45 & 30.81 & 24.87 & 25.88 \\
QoQ-Med-VL-7B  & 27.01 & 2.56 & 1.79 & 0.42 & 37.38 & 32.85 & 33.70 \\
MedGemma-4B  & 27.34 & 2.10 & 0.92 & 0.02 & 26.24 & 21.52 & 23.14 \\
MedGemma-27B  & 25.23 & 3.20 & 1.50 & 0.81 & 42.04 & 36.14 & 39.22 \\
HuatuoGPT-Vision-7B  & 29.27 & 1.83 & 1.34 & 0.18 & 33.21 & 29.41 & 29.85 \\

\midrule
\rowcolor{gray!15}
\multicolumn{8}{c}{\footnotesize\textit{ECG-specialized MLLMs}} \\
\midrule
PULSE  & 66.13 & 1.90 & 1.86 & 0.19 & 41.6 & 39.42 & 40.53 \\
GEM  & 74.70 & 4.25 & 3.79 & 4.41 & 65.34 & 63.15 & 62.90 \\
\midrule
ECG-R1 (SFT) & 79.33 & 6.36 & 4.58 & 5.53 & 79.92 & 78.08 & 83.51 \\
ECG-R1 (RL) & \textbf{80.29} & \textbf{6.51} & \textbf{4.74} & \textbf{5.81} & \textbf{80.57} & \textbf{79.08} & \textbf{84.20} \\
\bottomrule

\end{tabular}
}
\label{tab:grounded_results_combined}
\end{table*}

\subsection{Reinforcement Learning with ECG Diagnostic Evidence Rewards}
\label{subsec:rl}
After SFT, we further post-train the model with reinforcement learning (RL) to improve reasoning ability. Prior reasoning MLLMs (e.g., DeepSeek-R1) mainly reward format compliance and final-answer correctness, leaving intermediate reasoning unsupervised and thus permitting hallucinated interpretation steps. Because ECG interpretation requires stage-wise evidence grounding and differential exclusion, inspired by R1-VL \cite{zhang2025r1}, we propose \emph{ECG Diagnostic Evidence Rewards (EDER)}, which adds stepwise evidence rewards beyond outcome and format, encouraging verifiable reasoning at each step.

\paragraph{{\bfseries Training Data.}}
Starting from the overall instruction corpus $\mathcal{D}_{\mathrm{SFT}}$, we construct the RL training set $\mathcal{D}_{\mathrm{RL}}$ as a controlled subset of the ECG Protocol-Guided Grounding CoT.
Specifically, we first identify the top-500 most frequent report texts, and for each report type we retain up to 10 samples via uniform sampling without replacement (keeping all samples when fewer than 10 are available).
We then apply a global shuffle with a fixed random seed of 42, resulting in $|\mathcal{D}_{\mathrm{RL}}|=3{,}948$ training samples.

\paragraph{{\bfseries Extracting Key Diagnostic Evidence.}}
We employ DeepSeek-V3.1-Terminus as an evidence extractor.
Given the reference protocol-structured $K$-step reasoning trace $y$ associated with each training instance in $\mathcal D_{\mathrm{RL}}$ (with $K{=}6$ in our protocol),
it extracts step-specific key diagnostic evidence phrases $\mathcal E_k(y)$ that are directly relevant to the ground-truth diagnosis labels $a^\star$.
To ensure clinical specificity, we retain up to three salient evidence phrases per step, cap each phrase at six words,
and prioritize diagnostically actionable cues with explicit abnormal descriptors or negative findings.

\begin{figure*}[t] %
    \centering
    \includegraphics[width=\textwidth]{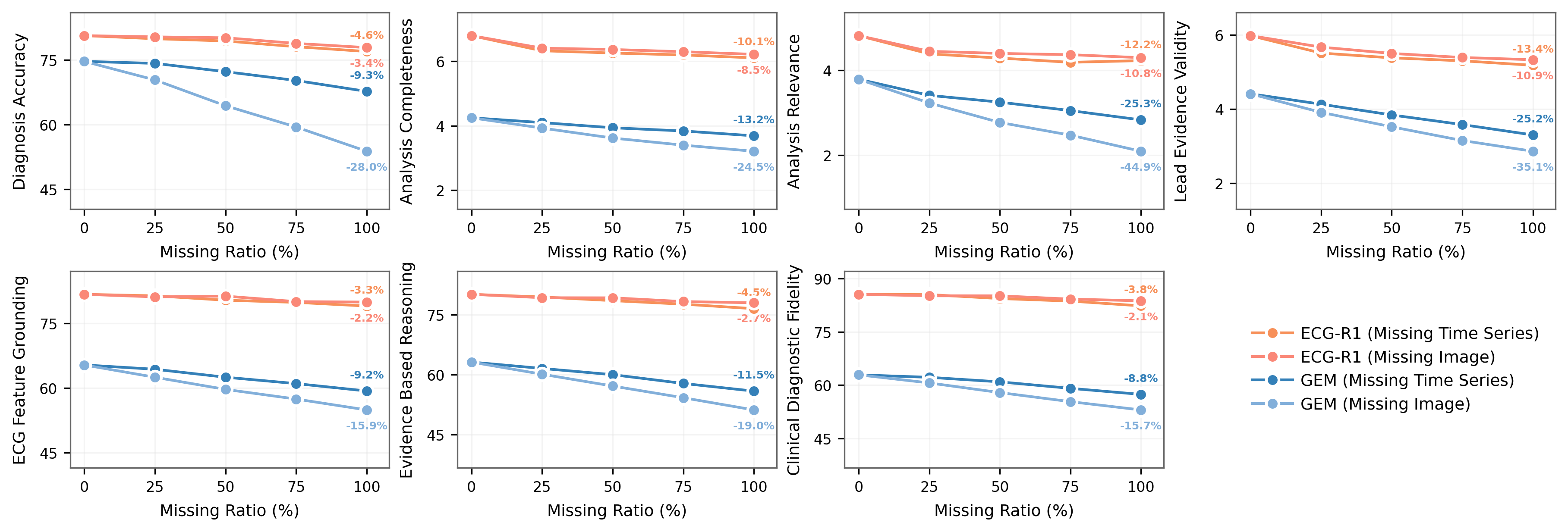} %
    \caption{Modality Missing Results between Time-Series and Image Modalities.} %
    \label{fig:modality_missing} %
\end{figure*}

\paragraph{{\bfseries Reward Shaping and Optimization.}}
Under IMD, given a training pair $(x,y)\sim \mathcal{D}_{\mathrm{RL}}$, where $x=(x^{\text{text}},x^{I},x^{T})$ and
$y$ is the protocol-structured target response containing a $K$-step interpretation trace (with $K{=}6$ in our protocol),
a brief summary narrative, and a final diagnosis $a^\star$. The policy $\pi_\theta$ samples a response
$\tilde y\sim \pi_\theta(\cdot\mid x)$ with the same structure and a predicted diagnosis $\hat a(\tilde y)$.

To reward stepwise evidence grounding, we compute a per-step evidence coverage score.
Let $\mathcal E_k(y)$ denote the set of step-specific key evidence phrases extracted from the \emph{$k$-th step text} of the target response $y$,
and let $\tilde y^{(k)}$ denote the generated text at step $k$.
We define the step reward as
\begin{equation}
r^{(k)}_{\text{step}}(x,\tilde y;\,y)=
\frac{\big|\operatorname{match}(\mathcal E_k(y),\, \tilde y^{(k)})\big|}
{\big|\mathcal E_k(y)\big|},
\end{equation}
where $\operatorname{match}(\cdot,\cdot)$ counts the number of evidence phrases that appear in $\tilde y^{(k)}$ after simple normalization
(e.g., case-folding).
We then compute the diagnostic process reward by averaging per-step scores:
\begin{equation}
R_{\text{EDER}}(x,\tilde y;\,y)=\frac{1}{K}\sum_{k=1}^{K} r^{(k)}_{\text{step}}(x,\tilde y;\,y).
\end{equation}

The diagnosis accuracy reward is computed from the predicted diagnosis in the \texttt{<answer>} block as a set-level Jaccard similarity:
\begin{equation}
R_{\text{accuracy}}(x,\tilde y)=
\frac{\big|\mathcal S(\hat a(\tilde y))\cap \mathcal S(a^\star)\big|}
{\big|\mathcal S(\hat a(\tilde y))\cup \mathcal S(a^\star)\big|}.
\end{equation}
where $\mathcal S(\cdot)$ maps a diagnosis string to a label set by splitting on semicolons for multi-label cases and treating it as a singleton otherwise.
We include a format reward $R_{\text{format}}(\tilde y)$, which assigns $1$ if $\tilde y$ contains a non-empty \texttt{<think>} block and $0$ otherwise.
Finally, the total reward is then defined as
\begin{equation}
R_{\text{total}}=R_{\text{format}}+R_{\text{accuracy}}+\lambda\,R_{\text{EDER}},
\end{equation}
where $\lambda$ controls the weight of the stepwise evidence-grounding reward. After computing rewards, we optimize the policy with DAPO~\cite{DAPOOpen-Source_Yu_2025}.
For each $x$, we sample $G$ responses $\tilde y_i$ from $\pi_{\theta_{\text{old}}}(\cdot\mid x)$, compute
$R_i=R_{\text{total}}(x,\tilde y_i;\,y)$, and set a per-response advantage
$\hat A_i=(R_i-\mathrm{mean}(\{R_j\}))/\mathrm{std}(\{R_j\})$, shared across tokens in $\tilde y_i$.
Let \(r_{i,t}(\theta)=\pi_{\theta}(\tilde y_{i,t}\!\mid x,\tilde y_{i,<t})/\pi_{\theta_{\text{old}}}(\tilde y_{i,t}\!\mid x,\tilde y_{i,<t})\).
DAPO updates \(\theta\) by maximizing the decoupled-clipping PPO objective with \(N=\sum_{i=1}^{G}|\tilde y_i|\),
\(\epsilon_{\text{low}}{=}0.2\), and \(\epsilon_{\text{high}}{=}0.3\):
\(J(\theta)=\mathbb{E}\!\left[\frac{1}{N}\sum_{i,t}\min\!\big(r_{i,t}(\theta),\tilde r_{i,t}(\theta)\big)\,\hat A_i\right]\),
where \(\tilde r_{i,t}(\theta)=\operatorname{clip}\!\big(r_{i,t}(\theta),1-\epsilon_{\text{low}},1+\epsilon_{\text{high}}\big)\).

\section{Experiments}
\subsection{Training Dataset}
\label{subsec:training_data}
We define $\mathcal D_{\mathrm{SFT}}$ as the union of our ECG Protocol-Guided Grounding
CoT and ECGInstruct~\cite{TeachMultimodal_Liu_2024}. We synthesize ECG images from raw signals using ECG-image-kit \cite{ECG-Image-Kitsynthetic_Shivashankara_2024} and extract fine-grained features via FeatureDB.

\subsection{Main Evaluation Tasks and Metrics}
{\bfseries Grounded ECG Interpretation.} 
We evaluate grounded ECG interpretation to assess whether the MLLM attains cardiologist-level competency in basic ECG reading, where accurate diagnosis must be accompanied by fine-grained evidence localization and clinically grounded justification. Performance is measured by seven rubric-based metrics (Appendix~\ref{subsec:metrics}) that jointly quantify diagnostic correctness, coverage and relevance of ECG findings, lead-wise evidence validity, and the fidelity of evidence-based clinical reasoning. For fair comparison, we adopt the ECG-Grounding test set~\cite{GEMEmpowering_Lan_2025a} comprising 2,381 samples. Finally, to avoid dependence on an older closed-source grader (GPT-4o) with potential access restrictions and long-term unavailability, we use DeepSeek-V3.1-Terminus to score the MLLM outputs under the same evaluation rubric. To support future research, we further introduce GLM-5 as an independent verifier to re-evaluate the outputs of all models, with detailed results provided in Section~\ref{subsec:interpretation_glm5}.

{\bfseries Robust and Consistent ECG Interpretation.} 
We introduce the Robust and Consistent ECG Interpretation task to evaluate model performance under different levels of data completeness, focusing on robustness and consistency in modality missing conditions for ECG omni-perception MLLMs. For robustness, under the full-modality input setting, we randomly drop either the time-series or image modality, retaining only a single modality for evaluation, and assess performance using the same test set and metrics as the Grounded ECG Interpretation task. For consistency, we evaluate time-series-only and image-only inputs on the same cases and quantify cross-modal output agreement using three text-similarity metrics, whose definitions and scoring criteria are provided in Appendix~\ref{subsec:metrics}.

\begin{table}[t]
\small
\caption{Modality Consistency Results between Time-Series and Image Modalities.}
\centering
\begin{tabular}{lccc}
\toprule
\textbf{Metric} &
\textbf{BLEU-4} &
\textbf{ROUGE-L} &
\textbf{SBERT-Score} \\
\midrule
GEM &
0.33 &
0.43 &
0.92 \\

ECG-R1 &
\textbf{0.69} &
\textbf{0.73} &
\textbf{0.97} \\

\bottomrule
\end{tabular}
\label{tab:modality_consistency}
\end{table}

{\bfseries Cardiologist Evaluation.} To assess the real-world reliability and usefulness of the interpretations, we invited four licensed cardiologists to independently review the model outputs using seven predefined clinical criteria, with detailed metric definitions and scoring rubrics provided in Appendix~\ref{subsec:metrics}. Specifically, we randomly sampled 100 test set cases and collected cardiologist ratings for interpretations generated by GEM (the strongest baseline) and ECG-R1, reporting results as mean and standard deviation.

\begin{table*}[t]
\centering
\caption{Cardiologist Evaluation of Reliability and Usefulness Metrics (Mean and STD).}
\label{tab:huam_eval_combined}
\small
\resizebox{\textwidth}{!}{%
\begin{tabular}{lccc|cccc}
\toprule
\multirow{2}{*}{\textbf{Metric}} &
\multicolumn{3}{c|}{\textbf{Reliability}} &
\multicolumn{4}{c}{\textbf{Usefulness}} \\
\cmidrule(lr){2-4}\cmidrule(lr){5-8}
& \textbf{\makecell{Analytical\\Relevance}}
& \textbf{\makecell{Analytical\\Accuracy}}
& \textbf{\makecell{Analytical\\Completeness}}
& \textbf{\makecell{Reasoning\\Quality}}
& \textbf{\makecell{Findings\\Novelty}}
& \textbf{\makecell{Clinical\\Value}}
& \textbf{\makecell{Overall\\Satisfaction}} \\
\midrule

GPT-5.1-Instant &	2.80/5 (0.73) &	2.83/5 (0.33) &	2.75/5 (0.66) &	2.80/5 (0.73) &	2.46/5 (1.00) &	2.66/5 (0.52) &	2.66/5 (0.44) \\
PULSE &	3.59/5 (1.01) &	3.97/5 (0.10) &	3.32/5 (1.53) &	3.34/5 (1.46) &	2.94/5 (1.05) &	3.89/5  (0.17) &	3.90/5 (0.15) \\

GEM  & 4.16/5 (0.78) & 3.89/5 (0.89) & 4.05/5 (0.71) & 4.03/5 (0.73) & 2.82/5 (1.59) & 3.84/5 (1.00) & 3.84/5 (0.99) \\ 
ECG-R1   & 4.55/5 (0.53) & 4.34/5 (0.66) & 4.43/5 (0.58) & 4.48/5 (0.57) & 3.25/5 (1.78) & 4.38/5 (0.64) & 4.38/5 (0.63) \\
\bottomrule
\end{tabular}%
}
\end{table*}

\subsection{Evaluation Results on ECG Interpretation}
\label{subsec:result_interpretation}
Table \ref{tab:grounded_results_combined} summarizes the performance of representative proprietary, open-source, medical, and ECG-specialized MLLMs, together with our method, on seven evaluation metrics for ECG interpretation. Among all non-ECG-specialized MLLMs, GPT-5.1 achieves the highest Diagnosis Accuracy. However, its absolute score remains low at 31.48, falling far short of the reliability required for real-world clinical deployment. Notably, despite ECG being a routine clinical modality, existing medical MLLMs still fail to interpret ECGs reliably, with Diagnosis Accuracy consistently below 30.00. A deeper analysis reveals that most non-ECG-specialized MLLMs obtain relatively high scores in Analysis Completeness, yet perform poorly in Analysis Relevance, Lead Evidence Validity, and ECG Feature Grounding. This discrepancy indicates that these models can follow instructions to generate structurally complete and seemingly comprehensive analyses (e.g., mentioning R–R intervals), but the underlying content is often incorrect, exhibiting systematic hallucinations that ultimately lead to erroneous diagnostic conclusions. Although MLLMs excel at general image understanding, our results provide the first systematic evidence that severe hallucination is pervasive in ECG interpretation, cautioning against uncritical public reliance on their outputs.

In comparisons with ECG-specialized MLLMs, ECG-R1 delivers consistent, comprehensive gains. Compared with GEM, the first ECG omni-perception MLLM and the previous best-performing model, ECG-R1 increases Diagnosis Accuracy to 80.29 and improves analysis quality (Analysis Completeness 6.51; Analysis Relevance 4.74), indicating interpretations that are more complete and better aligned with the final diagnosis. We attribute these gains to a structured, stepwise workflow that enforces systematic review, improving coverage of critical diagnostic elements and reducing missed clinically relevant findings. Lead Evidence Validity also rises to 5.81, suggesting more diagnosis-relevant, lead-specific evidence rather than template-style lead enumeration. Most notably, ECG-R1 yields a +17.49 average absolute gain over GEM on ECG Feature Grounding, Evidence-Based Reasoning, and Clinical Diagnostic Fidelity, indicating stronger grounding in verifiable ECG features and tighter evidence-to-diagnosis linkage. Moreover, we adopt explicit, monograph-defined phases with a fixed sequence of rhythm, conduction, morphology, and ischemia assessment, closely resembling real-world clinical ECG workflow before the final diagnosis. Finally, the RL model consistently outperforms the SFT model across all metrics, showing that evidence-rewarded reinforcement learning further strengthens reasoning performance. Collectively, these gains indicate that ECG-R1 makes substantial progress toward more reliable ECG interpretation.

\subsection{Evaluation Results on Robustness and Consistency under Modality Missing}

Figure \ref{fig:modality_missing} and Table \ref{tab:modality_consistency} present the robustness and consistency evaluation results specifically designed for ECG omni-perception MLLMs, with the representative model GEM used for comparison. In the robustness evaluation, GEM exhibits substantial performance degradation under modality missing conditions. In particular, when only the time-series modality is provided and the image modality is entirely missing, Diagnosis Accuracy score suffers a maximum relative drop of 28.0\%, while Analysis Relevance score experiences an even more severe maximum relative drop of 44.9\%. These results show that GEM is highly sensitive to missing modalities and lacks robustness under incomplete inputs. In contrast, ECG-R1 exhibits consistently smaller relative drops when either the time-series or image modality is removed, and even with one modality entirely absent, it still surpasses GEM using both modalities. In the consistency evaluation, we assess the agreement between interpretations generated from time-series-only and image-only inputs. GEM achieves BLEU-4 and ROUGE-L scores of 0.33 and 0.43, indicating substantial discrepancies in surface expression and content coverage across modalities. In contrast, ECG-R1 attains markedly higher scores on both metrics, demonstrating improved cross-modality textual consistency. Moreover, ECG-R1 achieves an SBERT-Score of 0.97, reflecting strong semantic alignment between interpretations produced under different modality conditions. Collectively, these results show that ECG-R1 maintains reliable ECG interpretations across varying levels of data completeness.

\subsection{Cardiologist Evaluation Results}
Table \ref{tab:huam_eval_combined} reports the ratings independently provided by four licensed cardiologists. For reliability, ECG-R1 consistently outperforms GEM across all three criteria. The gains in Analytical Relevance and Analytical Completeness are driven by a structured, stepwise interpretation procedure, which keeps the analysis diagnosis-focused and systematically covers key ECG components. Crucially, Analytical Accuracy measures whether an interpretation contains medical factual errors, where lower scores indicate more frequent or more severe errors, and thus directly reflects the severity of hallucination-like failures in ECG interpretation. Under this criterion, ECG-R1 achieves higher accuracy than GEM (4.34 vs. 3.89), because ECG-R1's interpretation corpus is generated under protocol guidance, whereas GEM's interpretation corpus relies on pretrained knowledge from LLMs alone. For usefulness, ECG-R1 consistently outperforms GEM across all four criteria. The Reasoning Quality gain reflects the clinical alignment of our monograph-derived five-phase analysis, while higher Clinical Value and Overall Satisfaction indicate more actionable support. Findings Novelty improves but remains case-dependent with greater inter-rater variability. PULSE attains slightly higher Analytical Accuracy than GEM, likely due to fewer factual errors in its concise outputs, but shows lower Completeness and Reasoning Quality. Overall, cardiologists rate ECG-R1 interpretations as more trustworthy.

\section{Conclusion}
In this work, we propose ECG-R1, the first reasoning ECG MLLM for ECG interpretation. ECG-R1 improves reliability through three innovations: Protocol-Guided Instruction Data Generation to construct structured, comprehensive interpretations and reduce factual errors; Interleaved Modality Dropout to enhance robustness and cross-modal consistency under varying data completeness; and Reinforcement Learning with ECG Diagnostic Evidence Rewards to strengthen evidence-based reasoning. We also conduct a systematic evaluation of ECG interpretation across proprietary, open-source, and medical MLLMs, exposing key limitations in these MLLMs. Experiments show that ECG-R1 consistently outperforms prior state-of-the-art methods in diagnostic accuracy and interpretation quality, while remaining stable under modality missing conditions, representing substantial progress toward reliable ECG interpretation.

\section*{Acknowledgements}
This work was supported by the National Natural Science Foundation of China under Grant Nos.62102008 and 62172018, the CCF-Tencent Rhino-Bird Open Research Fund under Grant No.CCF-Tencent RAGR20250108, the CCF-Zhipu Large Model Innovation Fund under Grant No.CCF-Zhipu202414, the PKU-OPPO Fund under Grant Nos.BO202301 and BO202503, and the Research Project of Peking University in the State Key Laboratory of Vascular Homeostasis and Remodeling under Grant No.2025-SKLVHR-YCTS-02.

The authors would also like to thank the cardiologists who participated in the cardiologist evaluation: Yiping Wang from the Department of Cardiology, Ma'anshan 17th Metallurgical Hospital, Ma’anshan, China; Yirao Tao from the Department of Cardiology, Beijing Jishuitan Hospital, Capital Medical University, Beijing, China; Xinxin Di from the Department of Electrocardiogram, The First Affiliated Hospital of USTC, Division of Life Sciences and Medicine, University of Science and Technology of China, Hefei, Anhui, China; and Jing Zhao from the ECG \& Cardiac Function Department, The First Affiliated Hospital of Anhui Medical University, Hefei, Anhui, China.

\section*{Impact Statement}

This work aims to improve the reliability of ECG interpretation produced by multimodal large language models by grounding interpretations in structured clinical logic and ECG-derived physiological measurements. If developed and validated responsibly, such approaches may help reduce hallucinated ECG interpretations produced by multimodal large language models, improve the consistency of model-generated reports under real-world data imperfections, and support research on safer AI-assisted ECG workflows.

At the same time, ECG interpretation is a high-stakes medical application, where incorrect or incomplete interpretations may affect downstream clinical assessment and patient management. Despite the improvements demonstrated in this study, the proposed methods and models are intended for research purposes and should be used only under qualified clinical oversight. Model outputs may still contain errors, omissions, or misleading statements, particularly in complex, rare, or low-quality cases. Therefore, these outputs should serve only as auxiliary references, with qualified clinicians remaining responsible for verification, interpretation, and final decision-making. Any real-world deployment would require prospective multi-site validation, clinical governance, user training, monitoring, and regulatory review.

\bibliography{ecg_r1_ref}
\bibliographystyle{icml2026}

\clearpage
\appendix
\onecolumn

\begin{center}
    {\LARGE \textbf{ECG-R1: Protocol-Guided and Modality-Agnostic MLLM for Reliable ECG Interpretation \\ Appendix}}
\end{center}

\AppTOCSection{sec:related_work}{Related Work}
\AppTOCSubsection{subsec:med_mllm}{Medical Multimodal Large Language Model}
\AppTOCSubsection{subsec:language_ecg}{Language-based ECG Analysis}

\AppTOCSection{sec:experiment_details}{Experiment Details}
\AppTOCSubsection{subsec:metrics}{Metrics}
\AppTOCSubsection{subsec:hyperparameter}{Hyperparameter Settings}
\AppTOCSubsection{subsec:baseline}{Baseline Models}

\AppTOCSection{sec:extented_results}{Extended Experiment Results}
\AppTOCSubsection{subsec:interpretation_glm5}{Grounded ECG Interpretation Results Independently Evaluated by GLM-5}
\AppTOCSubsection{subsec:dataset_comparison}{Dataset Comparison}
\AppTOCSubsection{subsec:ecgbench}{ECG-Bench Results}
\AppTOCSubsection{subsec:imd_ablation}{Ablation Study on Interleaved Modality Dropout}
\AppTOCSubsection{subsec:eder_ablation}{Ablation Study on ECG Diagnostic Evidence Rewards}
\AppTOCSubsection{subsec:analysis_gpt4o}{Numerical Analysis of the DiagnosisAccuracy Metric Evaluated by GPT-4o and DeepSeek v3.1 Terminus}
\AppTOCSubsection{subsec:limitations}{Limitations and Future Work}

\AppTOCSection{app:imd_full_proofs}{Complete Proofs for Section~\ref{subsec:imd}}
\AppTOCSubsection{app:imd_lemmas}{Auxiliary Lemmas}
\AppTOCSubsection{app:proof_modality_robustness}{Proof of Robustness Theorem}
\AppTOCSubsection{app:proof_modality_fairness}{Proof of Consistency and Swap-Invariance Theorem}
\AppTOCSubsection{app:remark_model_class}{Remarks on Using $\inf_\theta R_\tau(\theta)$}

\AppTOCSection{sec:case_study}{Case Study}
\AppTOCSubsection{subsec:inference_results}{Inference Results}
\AppTOCSubsection{subsec:case_discussion}{Case Study Discussion}

\AppTOCSection{sec:prompts}{Prompts}
\AppTOCSubsection{subsec:prompt_corpus}{Corpus Construction}
\AppTOCSubsection{subsec:prompt_eval}{Interpretation Evaluation}

\clearpage

\section{Related Work}
\label{sec:related_work}
\subsection{Medical Multimodal Large Language Model}
\label{subsec:med_mllm}
Medical multimodal large language models (medical MLLMs) have progressed rapidly via instruction tuning for medical data understanding, clinical text generation, and cross-modal QA \cite{VisualInstruction_Huang_2025}. LLaVA-Med \cite{li2023llavamed} utilizes PubMed Central data and GPT-4-generated instructions for multimodal tuning. HuatuoGPT-Vision \cite{HuatuoGPT-VisionInjecting_Chen_2024} leverages the PubMedVision dataset, achieving superior performance in medical VQA and report generation, especially in Chinese. MedGemma \cite{sellergren2025medgemma} specializes in diverse modalities like radiology and pathology, prioritizing mobile and single-GPU deployment. Recent research emphasizes reasoning and reliability. For examples, MedVLM-R1 \cite{MedVLM-R1Incentivizing_Pan_2025} employs reinforcement learning for chain-of-thought reasoning, while Chiron-o1-8B \cite{sun2025chirono} integrates MICS with tool-augmented thinking. QoQ-Med \cite{QoQ-MedBuilding_Dai_2025} reasons across images, time-series, and text using Domain-aware GRPO to mitigate clinical bias. However, despite ECG being one of the most prevalent clinical data modalities, our experiments show that existing medical MLLMs exhibit pronounced deficiencies in both the completeness and accuracy of ECG interpretation, which limits their practical applicability in real-world clinical settings.

\subsection{Language-based ECG Analysis}
\label{subsec:language_ecg}
Language-based ECG interpretation and diagnosis remain in the early stages of exploration, with only a few recent studies discussing MLLM-based ECG analysis methods. Specifically, ECG-Chat \cite{zhao2025ecgchat} was developed as an MLLM focused on processing time-series ECG data for report generation. In another approach, PULSE \cite{TeachMultimodal_Liu_2024} enhances ECG image understanding for diagnosis and reporting by synthesizing realistic ECG images from raw signals. anyECG-Chat \cite{anyECG-chatGeneralist_Li_2025} serves as a multi-task MLLM capable of report generation, abnormal waveform localization, and open-ended QA.  Furthermore, GEM \cite{GEMEmpowering_Lan_2025a} is an omni-perception MLLM that performs cross-modal fusion between time-series ECG and ECG images to produce evidence-based ECG interpretations. UniECG \cite{UniECGUnderstanding_Jin_2025} was also introduced as an unified model capable of both evidence-based ECG interpretation and signal generation. However, prior approaches that rely on the pretrained knowledge of general-purpose LLMs for data generation may suffer from hallucinations, thereby reducing the reliability of the resulting interpretations. Moreover, existing ECG omni-perception MLLMs still exhibit pronounced performance degradation under modality missing conditions, which hinders their prospects for real-world deployment.

\clearpage

\section{Experiment Details}
\label{sec:experiment_details}
\subsection{Metrics}
\label{subsec:metrics}

\begin{RubricBox}[title={Grounded ECG Interpretation: Evaluation Metric Rubrics}]

\Crit{DiagnosisAccuracy}{0--2}\\
Evaluates whether the generated diagnosis is correct, specific, and supported by ECG findings.  For each sample, we treat all sub-items with $\mathrm{Score}>0$ as correct and report the percentage of such sub-items among all sub-items. 
\begin{rubric}
  \item[+2 per diagnosis] Each correctly identified key diagnosis with supporting ECG features.
  \item[+1 per diagnosis] Each mostly correct diagnosis but lacking key supporting details.
  \item[+0 per diagnosis] Each incorrect or vague diagnosis not supported by ECG features.
\end{rubric}

\Crit{AnalysisCompleteness}{0--1}\\
Checks if all key ECG components (e.g., rhythm, intervals, waveforms, and lead-specific findings) are discussed. Results are provided in absolute terms, indicating the average number of correctly addressed key ECG features for each sample.
\begin{rubric}
  \item[+1 per feature] For each correctly addressed key ECG feature (e.g., rhythm, PR interval, QRS duration, ST segment, T wave morphology).
  \item[+0 per missing feature] For each key feature omitted or inaccurately described.
\end{rubric}

\Crit{AnalysisRelevance}{0--2}\\
Assesses whether each provided explanation directly supports the diagnosis. Results showing on average how many points support the diagnosis with clear ECG evidence for each sample.
\begin{rubric}
  \item[+2 per feature or per lead] Each point that strongly supports the diagnosis with clear ECG evidence.
  \item[+1 per feature or per lead] Some points are relevant but not fully justified.
  \item[+0] Includes unrelated or misleading explanations.
\end{rubric}

\Crit{LeadEvidenceValidity}{0--2}\\
Evaluates whether the lead-related statements are diagnostically necessary, correctly grounded, and free of unsupported lead-wise claims, rather than maximizing the number of mentioned leads.
\begin{rubric}
  \item[+2 per key lead/region] For each diagnosis-critical lead (or contiguous lead group / territory) correctly referenced with explicit and matching ECG evidence (e.g., ST elevation/depression, T-wave inversion, pathologic Q waves, bundle-branch morphology) that supports the stated diagnosis.
  \item[+1 per key lead/region] For each key lead/region mentioned with partially correct evidence, but missing important details (e.g., direction without magnitude/morphology) or showing minor inconsistencies.
  \item[+0 per key lead/region] Key lead/region is omitted or the described finding is incorrect / non-grounded and does not support the diagnosis.
\end{rubric}

\Crit{GroundedECGUnderstanding}{0--100}\\
Determines if the interpretation references actual ECG features (e.g., QRS amplitude, PR interval) instead of generic terms. Results are scaled from 0 to 100.
\begin{rubric}
  \item[100 --] ECG findings are comprehensively cited, linked to diagnoses, and cover all relevant ECG features.
  \item[80 --] ECG findings are explicitly cited and linked to diagnoses.
  \item[50 --] Some ECG references exist but are incomplete.
  \item[0 --] Lacks specific waveform references.
\end{rubric}

\Crit{EvidenceBasedReasoning}{0--100}\\
Evaluates whether the diagnosis follows logical, evidence-supported steps. Results are scaled from 0 to 100.
\begin{rubric}
  \item[100 --] Findings logically progress to diagnosis with thorough and clear justifications covering all necessary steps.
  \item[80 --] Findings logically progress to diagnosis with clear justifications.
  \item[50 --] Some reasoning exists but lacks complete step-by-step analysis.
  \item[0 --] Reasoning is unclear or not derived from ECG findings.
\end{rubric}

\Crit{ClinicalDiagnosticFidelity}{0--100}\\
Assesses if the model mimics how a clinician interprets an ECG, considering all relevant factors. Results are scaled from 0 to 100.
\begin{rubric}
  \item[100 --] The analysis follows a structured clinical approach and considers all relevant clinical factors.
  \item[80 --] The analysis follows a structured clinical approach.
  \item[50 --] Some clinical reasoning is present but incomplete.
  \item[0 --] The approach lacks structured clinical reasoning.
\end{rubric}

\end{RubricBox}

\begin{RubricBox}[title={Consistent ECG Interpretation Metrics}]
\textbf{\textit{BLEU-4}} measures cross-modal n-gram fidelity between the time-series-only and image-only generated interpretations by computing the modified precision over 1--4-grams with a brevity penalty to discourage overly short outputs; higher BLEU-4 indicates stronger surface-form agreement in local phrasing and word order. 

\textbf{\textit{ROUGE-L}} evaluates cross-modal sequence-level overlap using the longest common subsequence (LCS) between the two generated texts, typically summarized as an F-measure over LCS-based precision and recall; higher ROUGE-L reflects better global structural consistency while allowing limited rephrasing. 

\textbf{\textit{SBERT-Score}} assesses cross-modal semantic agreement by encoding each generated text into a dense sentence embedding using the embedding model BGE-M3 \cite{chen2024m3}, and then computing the cosine similarity between the two embeddings, capturing meaning-level alignment beyond lexical overlap.
\end{RubricBox}

\begin{RubricBox}[title={Cardiologist Evaluation: Reliability Metrics}]
\Crit{Analytical Relevance}{1--5}\\
Does the model's analysis closely support the diagnosis and provide corresponding ECG evidence?
\begin{rubric}
  \item[5 --] Every analysis point is highly relevant to the diagnosis, with clear supporting evidence.
  \item[4 --] Most analyses are strongly relevant, with minor insufficiencies.
  \item[3 --] Some analyses are relevant, but there is clear irrelevant content.
  \item[2 --] Most analyses are weakly relevant.
  \item[1 --] The analysis is unrelated to the diagnosis.
\end{rubric}

\Crit{Analytical Accuracy}{1--5}\\
Are there any medical factual errors in the model's output?
\begin{rubric}
  \item[5 --] Completely accurate.
  \item[4 --] Mostly accurate.
  \item[3 --] Some errors.
  \item[2 --] Obvious errors.
  \item[1 --] Severe errors.
\end{rubric}

\Crit{Analytical Completeness}{1--5}\\
Does the model comprehensively discuss key ECG components relevant to the diagnosis, including rhythm, intervals, and waveforms?
\begin{rubric}
  \item[5 --] All relevant ECG features (rhythm, PR, QRS, ST, T waves, intervals, etc.) are accurately discussed.
  \item[4 --] Most key ECG features are covered, with minor omissions.
  \item[3 --] Only some features are covered, with significant gaps.
  \item[2 --] Only a few ECG features are mentioned.
  \item[1 --] ECG components are largely missing, with severe omissions.
\end{rubric}
\end{RubricBox}

\begin{RubricBox}[title={Cardiologist Evaluation: Usefulness Metrics}]
\Crit{Reasoning Quality}{1--5}\\
Does the model provide a clear, evidence-based reasoning process similar to that of a clinician, logically deriving the diagnosis from ECG features?
\begin{rubric}
  \item[5 --] Clear and coherent reasoning structure, explaining each step from ECG to diagnosis causally.
  \item[4 --] Overall reasonable reasoning, but some steps lack detail.
  \item[3 --] Partial reasoning present, but incomplete or logically weak.
  \item[2 --] Disjointed reasoning with major gaps.
  \item[1 --] No logical reasoning, only a stack of conclusions.
\end{rubric}

\Crit{Findings Novelty}{1--5}\\
Does the model provide insights or findings not noticed by the clinician?
\begin{rubric}
  \item[5 --] Important new diagnoses or findings.
  \item[4 --] Novel and somewhat insightful content.
  \item[3 --] Some new findings, but of limited value.
  \item[2 --] Conventional content, not particularly insightful.
  \item[1 --] No new information.
\end{rubric}

\Crit{Clinical Value}{1--5}\\
Does the model output help in clinical decision-making?
\begin{rubric}
  \item[5 --] Direct and significant support for clinical judgment; content is clear and reliable.
  \item[4 --] Most content is helpful and practically useful.
  \item[3 --] Somewhat informative, but basic or unclear.
  \item[2 --] Partially suggestive, with limited decision support.
  \item[1 --] No value for clinical judgment; not informative.
\end{rubric}

\Crit{Overall Satisfaction}{1--5}\\
Subjective rating of the overall output quality.
\begin{rubric}
  \item[5 --] Very satisfied.
  \item[4 --] Satisfied.
  \item[3 --] Neutral.
  \item[2 --] Dissatisfied.
  \item[1 --] Very dissatisfied.
\end{rubric}
\end{RubricBox}

\subsection{Hyperparameter Settings}
\label{subsec:hyperparameter}
In the SFT stage, we fine-tune the ECG-R1 for 1 epoch using full fine-tuning with a learning rate of 2e-5. Training is performed with a per-device batch size of 4 and gradient accumulation of 2 steps in 8 NVIDIA A100 GPUs. The ECG encoder and image encoder remain frozen. For IMD, we set the interleave probability $p_s$ to 0.1 and the modality dropout $p_d$ probability to 0.5. In the RL stage, we only train the LLM component while freezing the ECG encoder, image encoder, and their corresponding projectors. Training is performed for 1 epoch with a learning rate of 1e-6. We employ the DAPO algorithm with epsilon high set to 0.30 and epsilon low set to 0.2, and EDER weight $\lambda$ set to 1.0. Generation uses a temperature of 1.0 with dynamic sampling enabled. The training uses a per-device batch size of 4 and gradient accumulation of 2 steps. For IMD, we set the interleave probability $p_s$ to 0.1 and the modality dropout $p_d$ probability to 0.5.

\subsection{Baseline Models}
\label{subsec:baseline}
We compare against four representative groups of baselines: (i) general-purpose proprietary MLLMs, including Gemini-3-Pro and GPT-5.1-Instant; (ii) general-purpose open-source MLLMs, including MiMo-VL-7B-SFT \cite{XiaomiMiMo-VL-Miloco_Li_2025}, GLM-4.1V-9B-Base \cite{GLM-4.5VGLM-4.1V-Thinking_Team_2026}, Qwen3-VL-8B-Instruct \cite{Qwen3-VLTechnical_Bai_2025}, InternVL3-8B-Instruct \cite{InternVL3Exploring_Zhu_2025}, and MiniCPM-V-4.5 \cite{MiniCPM-V4.5_Yu_2025}; (iii) medical MLLMs, including MedVLM-R1 \cite{MedVLM-R1Incentivizing_Pan_2025}, Chiron-o1-8B \cite{sun2026chiron}, QoQ-Med-VL-7B \cite{QoQ-MedBuilding_Dai_2025}, MedGemma-4B/27B \cite{sellergren2025medgemma}, MedGemma-27B \cite{sellergren2025medgemma}, and HuatuoGPT-Vision-7B \cite{HuatuoGPT-VisionInjecting_Chen_2024}; and (iv) ECG-specialized MLLMs, including ECG-Chat \cite{ECG-ChatLarge_Zhao_2025}, PULSE \cite{TeachMultimodal_Liu_2024}, and GEM \cite{GEMEmpowering_Lan_2025a}, enabling a systematic comparison across general-purpose capability, open-source ecosystems, medical-domain adaptation, and ECG-specific specialization.

\clearpage

\section{Extended Experiment Results}
\label{sec:extented_results}
\subsection{Dataset Comparison}
\label{subsec:dataset_comparison}

\begin{figure}[h] %
    \centering
    \includegraphics[width=1\textwidth]{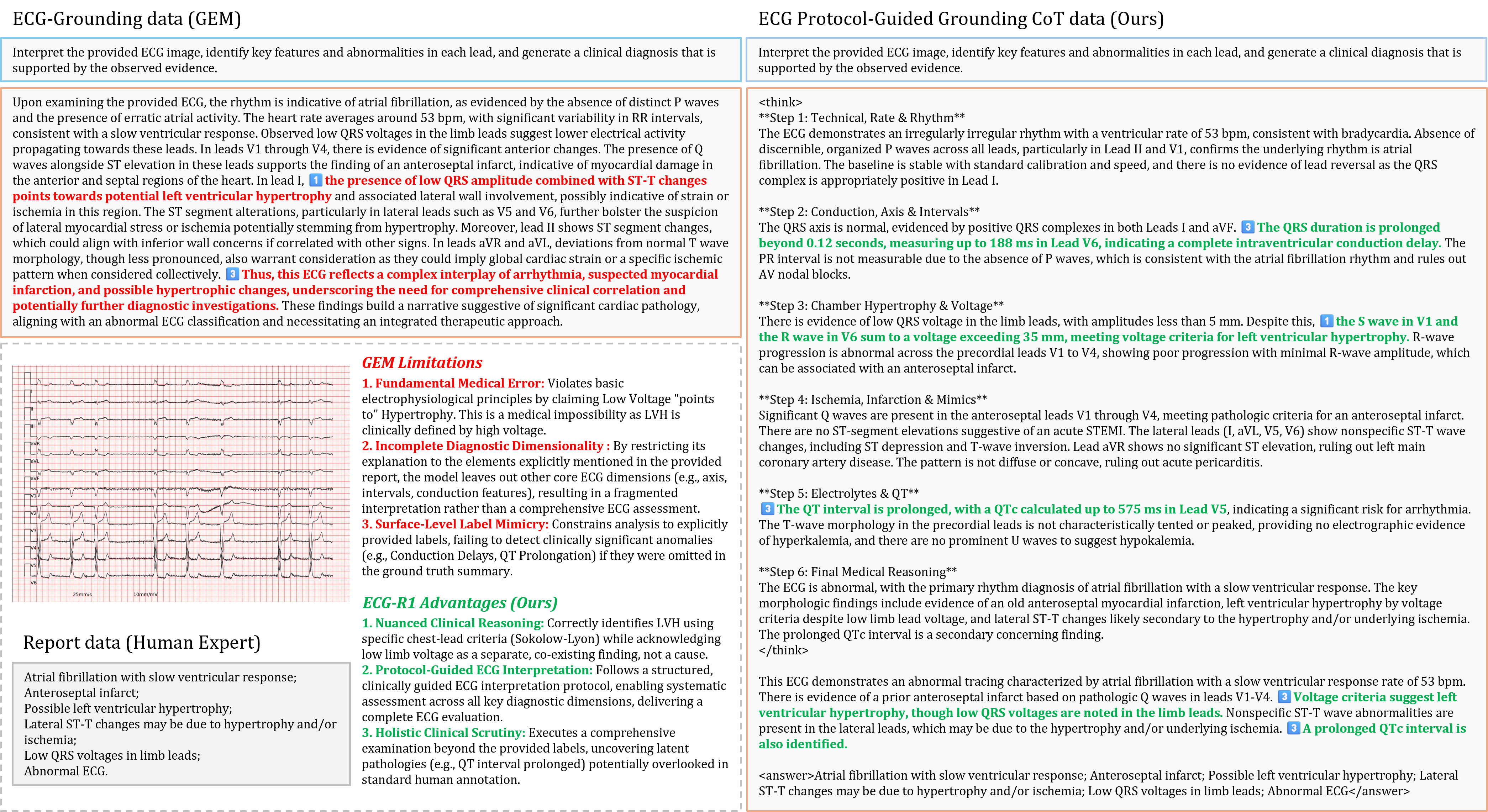} %
    \caption{Qualitative Comparison of ECG-Grounding and our ECG Protocol-Guided Grounding CoT.} %
    \label{fig:dataset} %
\end{figure}

\begin{wrapfigure}{r}{0.5\textwidth} %
    \centering
    \includegraphics[width=0.4\textwidth]{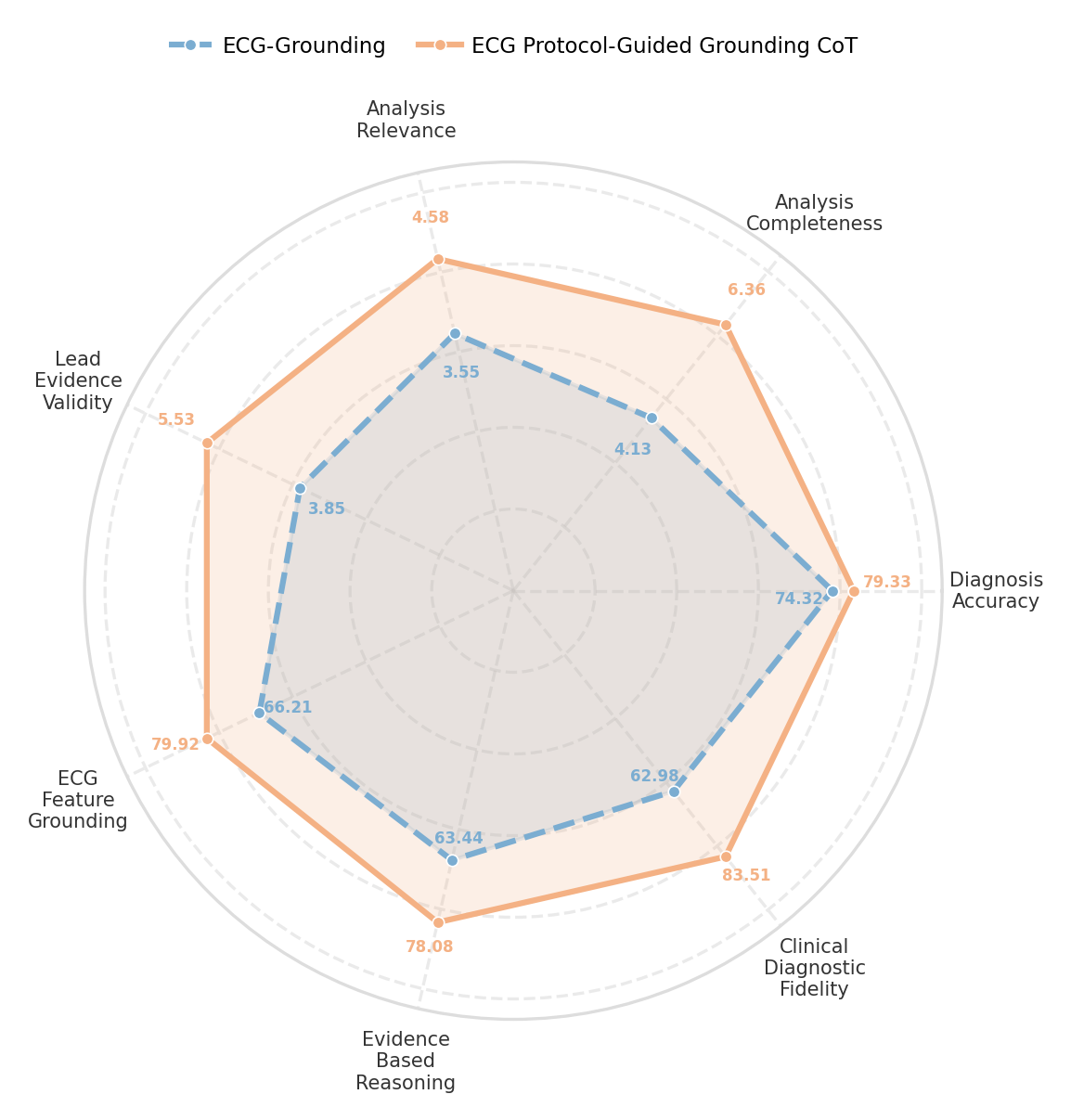} %
    \caption{Quantitative Comparison of ECG-Grounding and our ECG Protocol-Guided Grounding CoT.}
    \label{fig:dataset_quantitative}
\end{wrapfigure}

Figure \ref{fig:dataset} presents a qualitative comparison between two corpus generation paradigms, showing that ECG-Grounding data constructed under the GEM paradigm exhibits three systematic limitations: (1) Fundamental medical errors, where interpretations may violate basic electrocardiographic principles (e.g., conflating limb-lead low voltage with voltage-based LVH criteria); (2) Incomplete diagnostic dimensionality, as reasoning constrained to report-explicit elements often omits key dimensions such as intervals/conduction, axis assessment, and ischemia–infarction differentiation, resulting in fragmented assessments; and (3) Surface-level label imitation, where dependence on provided labels discourages identifying clinically relevant but unannotated findings (e.g., QTc prolongation or intraventricular conduction delay). In contrast, our ECG Protocol-Guided Grounding CoT data injects a standardized, verifiable interpretation protocol that tightly couples evidence extraction with stepwise reasoning, enabling more nuanced clinical inference, structured coverage across diagnostic dimensions, and more comprehensive scrutiny of latent high-risk abnormalities, thereby improving reliability and completeness.

Figure~\ref{fig:dataset_quantitative} reports the quantitative results of ECG-R1 after SFT training on two datasets. Notably, the two datasets have the same number of training instances and identical ECG samples. The model architecture and hyperparameters are held constant, and no RL training is applied. Our ECG Protocol-Guided Grounding CoT consistently outperforms ECG-Grounding across all seven metrics. It improves diagnosis accuracy from 74.32 to 79.33 (+5.01) and yields substantially larger gains on process-oriented dimensions related to interpretability and traceability, with the largest increase in clinical diagnostic fidelity from 62.98 to 83.51 (+20.53), alongside improvements in evidence-based reasoning from 63.44 to 78.08 (+14.64) and ECG feature grounding from 66.21 to 79.92 (+13.71). Consistent improvements are also observed in analysis completeness from 4.13 to 6.36 (+2.23), analysis relevance from 3.55 to 4.58 (+1.03), and lead evidence validity from 3.85 to 5.53 (+1.68). The gains are driven not only by higher diagnosis accuracy but also by protocolized CoT supervision that strengthens process constraints and evidence alignment during SFT, leading to more verifiable intermediate evidence and clinically faithful reasoning without sacrificing final correctness.

\subsection{Grounded ECG Interpretation Results Independently Evaluated by GLM-5}
\label{subsec:interpretation_glm5}

To facilitate future research and reproducible benchmarking, we additionally conduct an independent evaluation with GLM-5, a recent and accessible open-source LLM evaluator, under the same rubric. We select representative models from each category and report the results in Table~\ref{tab:grounded_results_glm5}. Since LLM-based evaluators may encode model-specific biases in their generated judgments, the absolute scores produced by GLM-5 may differ from those in the main evaluation. Across evaluator choices, the overall trend remains consistent: ECG-specialized MLLMs outperform general-purpose and medical MLLMs on grounded ECG interpretation, and ECG-R1 achieves the strongest overall performance across most dimensions. This complementary evaluation further supports the robustness of our findings and provides an accessible reference for future comparisons.

\begin{table*}[t]
\small

\caption{Grounded ECG Interpretation Results Independently Evaluated by GLM-5.}

\centering
\resizebox{\linewidth}{!}{
\begin{tabular}{lccccccc}
\toprule
\textbf{Metric} & \footnotesize\textbf{\makecell[l]{Diagnosis \\ Accuracy}} & \footnotesize\textbf{\makecell[c]{Analysis \\ Completeness}} & \footnotesize\textbf{\makecell[c]{Analysis \\ Relevance}} & \footnotesize\textbf{\makecell[c]{Lead \\ Evidence \\ Validity}} & \footnotesize\textbf{\makecell[c]{ECG\\Feature\\Grounding}} & \footnotesize\textbf{\makecell[c]{Evidence\\Based \\ Reasoning}} & \footnotesize\textbf{\makecell[c]{Clinical \\ Diagnostic \\ Fidelity}} \\

\midrule
\rowcolor{gray!15}
\multicolumn{8}{c}{\footnotesize\textit{Proprietary MLLMs}} \\
\midrule
Gemini-3-Pro & 14.47 & 1.85 & 0.92 & 0.72 & 38.84 & 22.18 & 39.07 \\
GPT-5.1-Instant & 31.75 & 2.17 & 1.75 & 1.64 & 54.54 & 37.79 & 54.19 \\

\midrule
\rowcolor{gray!15}
\multicolumn{8}{c}{\footnotesize\textit{Open-source MLLMs}} \\
\midrule
MiMo-VL-7B-SFT & 13.98 & 1.16 & 0.55 & 0.34 & 41.84 & 17.98 & 44.51 \\
GLM-4.1V-9B-Base & 15.27 & 0.91 & 0.48 & 0.40 & 31.19 & 16.89 & 28.68 \\
Qwen3-VL-8B-Instruct & 20.24 & 1.62 & 0.77 & 0.36 & 39.14 & 23.02 & 38.19 \\
InternVL3-8B-Instruct & 25.86 & 1.25 & 0.90 & 0.34 & 31.28 & 19.39 & 28.29 \\
MiniCPM-V-4.5 & 29.24 & 1.90 & 1.36 & 0.70 & 43.23 & 27.84 & 44.94 \\

\midrule
\rowcolor{gray!15}
\multicolumn{8}{c}{\footnotesize\textit{Medical MLLMs}} \\
\midrule
MedVLM-R1 & 27.59 & 0.67 & 0.25 & 0.05 & 21.55 & 12.02 & 13.15 \\
Chiron-o1-8B & 23.97 & 2.14 & 1.28 & 0.74 & 38.64 & 21.98 & 32.01 \\
QoQ-Med-VL-7B & 33.19 & 2.19 & 1.79 & 0.66 & 46.96 & 28.97 & 46.49 \\
MedGemma-4B & 35.66 & 1.23 & 0.81 & 0.09 & 36.82 & 24.41 & 33.84 \\
MedGemma-27B & 29.23 & 1.92 & 1.38 & 1.02 & 48.07 & 30.48 & 49.22 \\
HuatuoGPT-Vision-7B & 33.81 & 2.82 & 1.89 & 0.31 & 44.85 & 31.52 & 38.21 \\

\midrule
\rowcolor{gray!15}
\multicolumn{8}{c}{\footnotesize\textit{ECG-specialized MLLMs}} \\
\midrule
PULSE & 70.66 & 1.93 & 2.78 & 0.67 & 49.44 & 45.91 & 47.15 \\
GEM & 78.99 & 4.05 & 4.60 & \textbf{4.87} & 74.23 & 69.17 & 71.25 \\
\midrule
ECG-R1 (SFT) & 83.07 & 3.99 & 4.74 & 4.53 & 86.74 & \textbf{81.06} & 88.58 \\
ECG-R1 (RL) & \textbf{83.29} & \textbf{4.18} & \textbf{4.87} & 4.66 & \textbf{86.96} & 81.02 & \textbf{88.90} \\
\bottomrule

\end{tabular}
}
\label{tab:grounded_results_glm5}
\end{table*}

\subsection{ECG-Bench Results}
\label{subsec:ecgbench}
\begin{table}[h]

\caption{ECG-Bench Abnormality Detection Results.}
\small
\centering
\begin{tabular}{l ccc ccc ccc cc}
\toprule
\textbf{Datasets} & \multicolumn{3}{c}{\textbf{PTB-XL Super}} & \multicolumn{3}{c}{\textbf{CODE-15\%}} & \multicolumn{3}{c}{\textbf{CPSC 2018}} & \textbf{CSN} & \textbf{G12EC} \\
\midrule
\textbf{Metric} & \textbf{AUC} & \textbf{F1} & \textbf{HL} & \textbf{AUC} & \textbf{F1} & \textbf{HL} & \textbf{AUC} & \textbf{F1} & \textbf{HL} & \textbf{ACC} & \textbf{ACC} \\
\midrule

Random & 50.3 & 33.2 & 50.1 & 48.8 & 15.0 & 32.1 & 51.2 & 15.1 & 28.8 & 11.6 & 12.1 \\
\midrule
\rowcolor{gray!15}
\multicolumn{12}{c}{\footnotesize\textit{Proprietary MLLMs}} \\
\midrule
GPT-4o & 55.6 & 28.3 & 26.2 & 59.9 & 24.9 & 15.7 & 50.9 & 10.6 & 18.2 & 57.5 & 49.2 \\
GPT-4o-mini & 52.0 & 20.4 & 31.7 & 57.5 & 22.0 & 15.1 & 49.2 & 11.0 & 25.5 & 32.1 & 33.2 \\
Gemini-1.5-Pro & 50.7 & 15.3 & 27.9 & 56.7 & 20.0 & 15.9 & 50.1 & 7.4 & 20.5 & 50.5 & 36.0 \\
Claude-3.5-Sonnet & 54.0 & 27.5 & 29.6 & 58.3 & 20.3 & 17.8 & 52.8 & 11.5 & 18.9 & 51.5 & 51.4 \\

\midrule
\rowcolor{gray!15}
\multicolumn{12}{c}{\footnotesize\textit{ECG-specialized Methods}} \\
\midrule
METS & - & 65.7 & - & - & - & - & - & - & - & N/A & N/A \\
MERL & 74.2 & - & - & - & - & - & \textbf{82.8} & - & - & N/A & N/A \\
ST-MEM & 71.4 & - & - & - & - & - & 70.4 & - & - & N/A & N/A \\

PULSE & 82.4 & 74.8 & \textbf{11.0} & 90.7 & 85.4 & 5.0 & 76.9 & 57.6 & 8.6 & 85.2 & 78.2 \\
GEM & \textbf{83.4} & \textbf{75.8} & 11.0 & \textbf{91.5} & 86.4 & 4.7 & 79.1 & \textbf{61.1} & \textbf{8.1} & 86.2 & 80.5 \\
\midrule
ECG-R1(Ours) & 81.7 & 73.7 & 11.4 & 91.4 & \textbf{86.7} & \textbf{4.6} & 74.9 & 51.2 & 9.8 & \textbf{90.4} & \textbf{84.5} \\

\bottomrule
\end{tabular}
\label{tab:ecg_abnormality_full}
\end{table}

We further evaluate ECG-R1 on conventional ECG abnormality detection using ECG-Bench \cite{TeachMultimodal_Liu_2024}, as shown in Table~\ref{tab:ecg_abnormality_full}. We use macro AUC, macro F1, and hamming loss (HL) for multi-label datasets, and accuracy for others. Across all five datasets, proprietary MLLMs exhibit near-random performance, indicating that generic multimodal capabilities are insufficient for accurate ECG abnormality detection. We compare against ECG-specialized methods, where ST-MEM is a self-supervised ECG model, METS \cite{FrozenLanguage_Li_2024} and MERL \cite{Zero-ShotECG_Liu_2024} are CLIP-like multimodal models, PULSE \cite{TeachMultimodal_Liu_2024}, and GEM \cite{GEMEmpowering_Lan_2025a} are ECG-oriented MLLMs. Although ECG-R1 is not explicitly optimized for abnormality detection, it remains competitive with top-performing approaches, achieving new state of the art on CSN and G12EC, while showing only marginal gaps to the best methods on the remaining datasets.

\subsection{Ablation Study on Interleaved Modality Dropout}
\label{subsec:imd_ablation}

\begin{table}[H]
\centering
\caption{Effect of Interleaved Modality Dropout on Robustness.}
\label{tab:modality_dropout_robustness_ablation}
\small
\resizebox{0.8\columnwidth}{!}{%
\begin{tabular}{l cc c c c}
\toprule
\multirow{2}{*}{\textbf{Metric}} &
\multicolumn{2}{c}{\textbf{Diagnosis Accuracy}} &
\multirow{2}{*}{\makecell{\textbf{Peak}\\\textbf{Performance}\\\textbf{Trade-off}}} &
\multirow{2}{*}{\makecell{\textbf{Robustness}\\\textbf{Recovery}\\\textbf{Gain}}} &
\multirow{2}{*}{\makecell{\textbf{Efficiency}\\\textbf{Ratio}}} \\
\cmidrule(lr){2-3}
& \makecell{\textbf{Omni Modalities}\\\textbf{(Time-Series + Image)}} 
& \makecell{\textbf{Modality Missing}\\\textbf{(Only Time-Series)}} & & & \\
\midrule
w/o IMD & 81.99 & 36.77 & -    & -   & -    \\
w/ IMD & 80.29 & 77.91 & -1.7 & 41.14 & 24.2x \\
\bottomrule
\end{tabular}}
\end{table}

\begin{table}[H]
\small
\caption{Effect of Interleaved Modality Dropout on Consistency.}
\centering
\begin{tabular}{lccc}
\toprule
\textbf{Metric} &
\textbf{BLEU-4} &
\textbf{ROUGE-L} &
\textbf{SBERT-Score} \\
\midrule
w/o IMD &
0.54 &
0.59 &
0.92 \\

w/ IMD &
0.69 &
0.73 &
0.97 \\

\bottomrule
\end{tabular}
\label{tab:modality_dropout_consistency_ablation}
\end{table}

Table~\ref{tab:modality_dropout_robustness_ablation} studies how enabling Interleaved Modality Dropout (IMD) during training affects robustness. Omni modalities refers to evaluation with both time-series and image inputs, while modality missing refers to evaluation with the image modality completely absent and only time-series signals provided. Without IMD, the model achieves slightly higher peak accuracy under omni-modality inputs, but its performance collapses under modality missing, dropping to 36.77. In contrast, training with IMD maintains strong performance when the image modality is absent, reaching 77.91 in modality missing. This corresponds to a robustness recovery gain of 41.14 with only a modest peak-performance trade-off of 1.7. Overall, IMD substantially improves robustness under modality missing conditions while incurring only a minor reduction in omni modality accuracy, resulting in an efficiency ratio of 24.2x, defined as the ratio between robustness recovery gain and peak-performance trade-off.

Table~\ref{tab:modality_dropout_consistency_ablation} reports the effect of Interleaved Modality Dropout on cross-modality consistency of model interpretations. For each sample in the test set, we generate two interpretations, one conditioned on the image modality only and the other conditioned on the time-series modality only, and compute BLEU-4, ROUGE-L, and SBERT-Score to measure the agreement between the two outputs. We then average these scores over all test samples to obtain the reported results. Without IMD, the two single-modality interpretations show limited alignment, indicating substantial variability when the available modality changes. Enabling IMD yields higher cross-modality consistency, achieving BLEU-4 of 0.69, ROUGE-L of 0.73, and SBERT-Score of 0.97 averaged over the entire test set, demonstrating that IMD promotes more modality-invariant and semantically consistent interpretations.

Overall, IMD substantially improves both robustness and cross-modality consistency of the model outputs. The empirical gains are consistent with our theoretical analysis, suggesting that IMD is a sound training strategy for improving reliability under modality missing conditions.

\subsection{Ablation Study on ECG Diagnostic Evidence Rewards}
\label{subsec:eder_ablation}

\begin{table}[H]
\small
\caption{Effect of EDER on Grounded ECG Interpretation Results.}
\centering
\resizebox{\linewidth}{!}{
\begin{tabular}{lcccccccc}
\toprule
\textbf{Metric} & \footnotesize\textbf{\makecell[l]{Diagnosis \\ Accuracy}} & \footnotesize\textbf{\makecell[c]{Analysis \\ Completeness}} & \footnotesize\textbf{\makecell[c]{Analysis \\ Relevance}} &\footnotesize\textbf{\makecell[c]{Lead \\ Evidence \\ Validity}} &\footnotesize\textbf{\makecell[c]{ECG\\Feature\\Grounding}} &\footnotesize\textbf{\makecell[c]{Evidence\\Based \\ Reasoning}}  & \footnotesize\textbf{\makecell[c]{Clinical \\ Diagnostic \\ Fidelity}} \\

\midrule
w/o EDER & 79.96 & 5.94 & 4.30 & 4.72 & 78.53 & 78.14 & 83.53 \\
w/ EDER & 80.29 & 6.51 & 4.74 & 5.81 & 80.57 & 79.08 & 84.20 \\
\bottomrule

\end{tabular}
}
\label{tab:eder_ablation}
\end{table}

\begin{figure}[H] %
    \centering
    \includegraphics[width=0.75\textwidth]{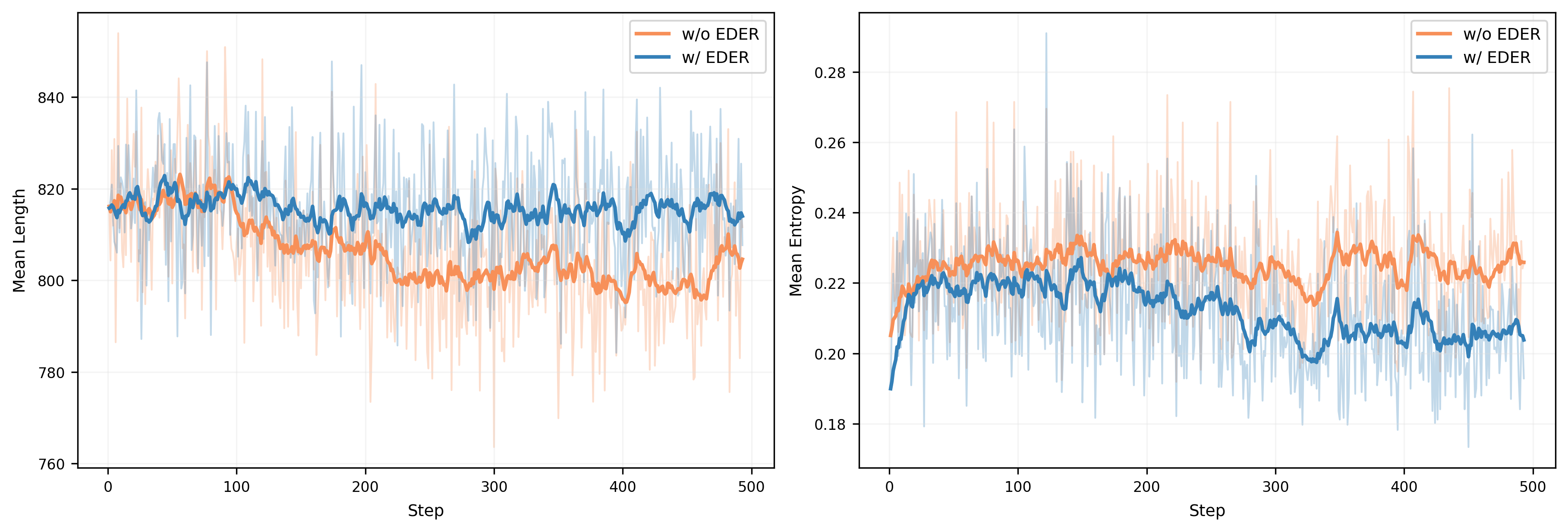} %
    \caption{Effect of EDER on Mean Output Length and Entropy during RL Training.} %
    \label{fig:eder_ablation_plot} %
\end{figure}

Table~\ref{tab:eder_ablation} summarizes the effect of incorporating ECG Diagnostic Evidence Rewards (EDER) during RL training. EDER improves both terminal diagnosis accuracy and the quality of the generated interpretations, consistently boosting analysis completeness and relevance, lead evidence validity, ECG feature grounding, evidence-based reasoning, and clinical diagnostic fidelity. We provide a deeper analysis of the above results. Figure \ref{fig:eder_ablation_plot} shows that incorporating EDER substantially alters the MLLM's generation behavior during RL training. Without EDER, the mean rollout length progressively contracts, indicating that when optimization is dominated by terminal diagnostic correctness, the policy can degenerate into a “shorter answer yields comparable reward” strategy, thereby omitting key diagnostic evidence statements. With EDER, the rollout length remains more stable, suggesting that process-level rewards impose sustained constraints on evidence-driven interpretation, mitigating length collapse and encouraging the preservation of salient evidence statements. Regarding uncertainty, with EDER the output entropy is generally lower and exhibits smaller fluctuations, implying that EDER provides clearer learning signals that concentrate the generation distribution and promote more consistent decisions, leading to a more stable training trajectory. Conversely, without EDER, entropy remains higher and fluctuates more, suggesting that in the absence of evidence-level shaping, policy updates are driven primarily by sparse end-point feedback, resulting in less consistent generation dynamics and noisier training trajectories. Overall, EDER improves both the completeness of the reasoning process and the stability of generation, better aligning the learned policy with clinical requirements for interpretable and auditable ECG interpretation.

\subsection{Numerical Analysis of the DiagnosisAccuracy Metric Evaluated by GPT-4o and DeepSeek v3.1 Terminus}
\label{subsec:analysis_gpt4o}

Our observations and analyses indicate that the scoring mechanism of GPT-4o is predisposed toward a semantic-relevance-based assessment. It is frequently confounded by secondary information generated by the model—such as accurate heart rate or rhythm descriptions—leading to an excessive tolerance for fatal clinical omissions (false negatives) and logically inconsistent over-diagnoses (false positives). This ``semantic-priority'' logic entails significant risk in clinical scenarios. Conversely, the evaluation logic of DeepSeek v3.1 Terminus manifests as a rigorous clinical audit, prioritizing the accuracy of core diagnostic conclusions. 

Furthermore, GPT-4o's tendency to aggregate multiple diagnostic points into a single score results in a minimal denominator ($N$), leading to significant metric inflation. Under a binary calculation logic where any score greater than 0 is categorized as correct, even marginal partial credits are amplified into a 100\% accuracy rate for the given sample. This masks the model's performance bottlenecks in complex cases. In contrast, DeepSeek v3.1 Terminus mandates an exhaustive decomposition of diagnostic claims, substantially increasing the evaluation denominator, thereby enhancing both evaluative transparency and statistical discriminability.

To further illustrate these systematic discrepancies in evaluation logic and their impact on quantitative outcomes, we present three representative case studies. These cases exemplify the divergent behaviors of GPT-4o and DeepSeek v3.1 Terminus in handling clinical omissions, evaluation granularity, and diagnostic hallucinations:

\noindent
\begin{tabular*}{\textwidth}{@{} p{0.5em} c @{\hspace{0.5em}} p{0.85\textwidth} r @{}}
\hline
\multicolumn{4}{@{}l}{\textbf{Sample 45108524}} \\
\hline
\multicolumn{4}{@{}l}{\textbf{GPT-4o Evaluation Results:}} \\
& $\bullet$ & The generated interpretation identifies a normal sinus rhythm correctly but lacks the specific mention of potential ischemic or infarct patterns seen in the ground truth. It mostly aligns with the identified rhythm but misses key findings suggestive of ischemia. & \textbf{(1.0)} \\
\multicolumn{4}{@{}l}{\textbf{DeepSeek v3.1 Terminus Evaluation Results:}} \\
& $\bullet$ & The generated interpretation concludes a `normal ECG' with `no evidence of ischemia, infarction, or conduction abnormalities.' This is incorrect and non-specific. The ground truth identifies key, specific diagnoses supported by ECG findings. & \textbf{(0.0)} \\
\hline
\end{tabular*}

\noindent
\begin{tabular*}{\textwidth}{@{} p{0.5em} c @{\hspace{0.5em}} p{0.85\textwidth} r @{}}
\hline
\multicolumn{4}{@{}l}{\textbf{Sample 44425880}} \\
\hline
\multicolumn{4}{@{}l}{\textbf{GPT-4o Evaluation Results:}} \\
& $\bullet$ & The generated interpretation correctly identifies sinus bradycardia (2.0) and an inferior infarct (2.0), supported by the findings of prolonged PR interval and QRS amplitude abnormalities. However, it lacks specificity and clear support for the diagnosis of lateral ischemia. & \textbf{(4.0)} \\
\multicolumn{4}{@{}l}{\textbf{DeepSeek v3.1 Terminus Evaluation Results:}} \\
& $\bullet$ & Correctly identified sinus bradycardia with supporting heart rate (55 bpm). & \textbf{(2.0)} \\
& $\bullet$ & Correctly identified borderline first-degree A-V block (214 ms). & \textbf{(2.0)} \\
& $\bullet$ & Correctly identified inferior infarct with supporting findings (abnormal Q waves). & \textbf{(2.0)} \\
& $\bullet$ & Identified lateral ischemia but lacked specificity and over-interpreted ST changes in Lead I/aVL as definitive. & \textbf{(1.0)} \\
& $\bullet$ & Incorrectly suggested anterior ischemia (V1-V4) without ground truth support. & \textbf{(0.0)} \\
& $\bullet$ & Incorrectly suggested left main disease from aVR upslope, which was not mentioned in ground truth. & \textbf{(0.0)} \\
\hline
\end{tabular*}

\noindent
\begin{tabular*}{\textwidth}{@{} p{0.5em} c @{\hspace{0.5em}} p{0.85\textwidth} r @{}}
\hline
\multicolumn{4}{@{}l}{\textbf{Sample 43556732}} \\
\hline
\multicolumn{4}{@{}l}{\textbf{GPT-4o Evaluation Results:}} \\
& $\bullet$ & The generated interpretation correctly identifies sinus rhythm with PACs and a first-degree A-V block. However, the left ventricular hypertrophy and inferior infarct suggestions are less clear. & \textbf{(2.0)} \\
& $\bullet$ & The interpretation mentioned possible left atrial enlargement and lateral ischemia, which are noted inconsistently with ECG features. & \textbf{(1.0)} \\
\multicolumn{4}{@{}l}{\textbf{DeepSeek v3.1 Terminus Evaluation Results:}} \\
& $\bullet$ & Correctly identified sinus rhythm with PACs and first-degree A-V block, with specific ECG support (irregular RR intervals, PR interval of 280 ms). & \textbf{(2.0)} \\
& $\bullet$ & Diagnosis of `inferior infarct' is incorrect and over-interprets findings. & \textbf{(0.0)} \\
& $\bullet$ & Diagnosis of `lateral ischemia' is vague and not supported by ground truth. & \textbf{(0.0)} \\
& $\bullet$ & Diagnosis of `left main coronary artery disease' based on aVR is incorrect. & \textbf{(0.0)} \\
\hline
\end{tabular*}

In medical research and clinical decision support systems, lenience in assessment tools often equates to the condoning of medical risks. Therefore, utilizing DeepSeek v3.1 Terminus as the evaluative model provides a benchmark with superior clinical reference value and statistical power for medical LLMs. This explains why, despite lower absolute values in the $DiagnosisAccuracy$ metric, the evaluative quality provided by DeepSeek v3.1 Terminus is far superior to that of GPT-4o.

\subsection{Limitations and Future Work}
\label{subsec:limitations}
While ECG-R1 demonstrates strong performance in evidence-grounded ECG interpretation, several directions remain for future improvement. Although EDER improves performance both within and outside the top-500 report patterns, the RL data are concentrated on frequent reports. Rare but clinically important conditions therefore require further targeted data curation and reward optimization. In addition, QT/QTc interval estimation remains challenging due to uneven numerical supervision in the training corpus. Future work will incorporate explicit measurement-aware supervision, such as auxiliary numerical regression objectives and calibrated uncertainty estimation, to improve the reliability of interval-level ECG measurements.

\clearpage
\section{Complete Proofs for Section~\ref{subsec:imd}}
\label{app:imd_full_proofs}

\paragraph{{\bfseries Notation.}}
Let $x=(x^{\text{text}},x^{I},x^{T})$ be the multimodal input and $y$ the output text sequence.
For a transformation $\tau$ (dropping a modality or swapping modality-token blocks),
define the environment-specific observation
\[
z_\tau \triangleq \tau(x),
\]
and the induced model distribution
\[
P_\theta^\tau(\cdot\mid x) \triangleq P_\theta(\cdot\mid z_\tau)=P_\theta(\cdot\mid \tau(x)).
\]
We train with negative log-likelihood (NLL)
\[
\ell_\theta(\tau(x),y)\triangleq -\log P_\theta(y\mid \tau(x)).
\]
The population risk in environment $\tau$ is
\begin{equation}
R_\tau(\theta)\triangleq \mathbb E_{(x,y)\sim\mathcal D}\big[\ell_\theta(\tau(x),y)\big]
= \mathbb E_{x\sim\mathcal D}\mathbb E_{y\sim P^\star(\cdot\mid \tau(x))}\big[-\log P_\theta(y\mid \tau(x))\big],
\label{eq:Rt_def}
\end{equation}
where $P^\star(\cdot\mid \tau(x))$ denotes the ground-truth conditional distribution of $y$ given the observation $z_\tau=\tau(x)$.
For IMD, we consider the finite set
\[
\mathcal T_{\text{test}}=\{\tau_I,\tau_T,\tau_{IT},\tau_{TI}\}.
\]

IMD samples $\tau\sim q$ over $\mathcal T_{\text{test}}$ via two independent trials:
(i) a modality-drop trial with probability $p_d$; and
(ii) conditioned on retaining both modalities, a token-order swap trial with probability $p_s$.
Concretely,
\[
q(\tau_I)=q(\tau_T)=\frac{p_d}{2},\qquad
q(\tau_{TI})=(1-p_d)(1-p_s),\qquad
q(\tau_{IT})=(1-p_d)p_s,
\]
where $\tau_I,\tau_T$ drop one modality and $\tau_{TI},\tau_{IT}$ correspond to the canonical and swapped token-block orders, respectively.
We choose $p_d\in(0,1)$ and $p_s\in(0,1)$ so that $q(\tau)>0$ for all $\tau\in\mathcal T_{\text{test}}$.
Therefore, Assumption~\ref{assump:coverage_main} holds with
\[
\alpha=\min\Big\{\frac{p_d}{2},\,(1-p_d)(1-p_s),\,(1-p_d)p_s\Big\}.
\]

The mixture risk under sampling distribution $q$ is
\begin{equation}
R_q(\theta)\triangleq \mathbb E_{\tau\sim q}\big[R_\tau(\theta)\big].
\label{eq:Rq_def}
\end{equation}
We also define the mixture Bayes risk $\bar R_q^\star\triangleq \mathbb E_{\tau\sim q}[R_\tau^\star]$.

\paragraph{{\bfseries Bayes-optimal risks and excess risks.}}
Define the Bayes-optimal (unconstrained) risk for each environment:
\begin{equation}
R_\tau^\star \triangleq 
\inf_{Q(\cdot\mid z)} \mathbb E_{x\sim\mathcal D}\mathbb E_{y\sim P^\star(\cdot\mid \tau(x))}
\big[-\log Q\big(y\mid z\big)\big]\Big|_{z=\tau(x)}
= \mathbb E_{x\sim\mathcal D} H\big(P^\star(\cdot\mid \tau(x))\big),
\label{eq:Rt_star}
\end{equation}
where the infimum is over all conditional distributions $Q(\cdot\mid z)$ defined on the environment observation $z=\tau(x)$.

We define the excess risk of model $\theta$ in environment $\tau$ as
\begin{equation}
\varepsilon_\tau(\theta)\triangleq R_\tau(\theta)-R_\tau^\star.
\label{eq:eps_def}
\end{equation}
For the single-modality environments, we also define the intrinsic view gap
\begin{equation}
\Delta_{\text{view}}\triangleq
\mathbb E_{x\sim\mathcal D}\Big[
\mathrm{TV}\big(P_{\tau_I}^\star(\cdot\mid x),\,P_{\tau_T}^\star(\cdot\mid x)\big)
\Big],
\label{eq:delta_view}
\end{equation}
which can be non-zero due to information disparity between modalities.

\subsection{Auxiliary Lemmas}
\label{app:imd_lemmas}

\begin{lemma}[Cross-entropy decomposition]
\label{lem:ce_decomp}
For any conditional distributions $P^\star(\cdot\mid x)$ and $Q(\cdot\mid x)$,
\[
\mathbb E_{y\sim P^\star(\cdot\mid x)}[-\log Q(y\mid x)]
= H(P^\star(\cdot\mid x)) + D_{\mathrm{KL}}\!\big(P^\star(\cdot\mid x)\,\|\,Q(\cdot\mid x)\big).
\]
Consequently, for each environment $\tau$,
\begin{equation}
R_\tau(\theta)=R_\tau^\star + \mathbb E_{x\sim\mathcal D}
\Big[ D_{\mathrm{KL}}\!\big(P_\tau^\star(\cdot\mid x)\,\|\,P_\theta^\tau(\cdot\mid x)\big)\Big],
\quad\text{and hence}\quad
\varepsilon_\tau(\theta)=\mathbb E_x D_{\mathrm{KL}}\!\big(P_\tau^\star\|P_\theta^\tau\big).
\label{eq:eps_equals_ekl}
\end{equation}
\end{lemma}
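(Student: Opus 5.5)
The identity is pointwise in $x$, and the environment-level statements follow by averaging. Fix $x$ and write $P=P^\star(\cdot\mid x)$ and $Q=Q(\cdot\mid x)$, both distributions over the (countable) space of output sequences $y$. Wherever $P(y)>0$, split the log-loss as
\[
-\log Q(y)=-\log P(y)+\log\frac{P(y)}{Q(y)}.
\]
Then take the expectation under $y\sim P$. The first term gives $H(P)=-\sum_y P(y)\log P(y)$. The second gives $D_{\mathrm{KL}}(P\,\|\,Q)=\sum_y P(y)\log\frac{P(y)}{Q(y)}$. This proves the first display.

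The only technical point is integrability, which I will handle with the usual conventions. Terms with $P(y)=0$ contribute nothing, using $0\log 0=0$. If $Q(y)=0$ for some $y$ with $P(y)>0$, both sides equal $+\infty$. If $H(P)=\infty$, the identity holds in the extended sense, since the KL divergence is nonnegative. For sequence outputs with finite vocabulary and bounded length, the space is finite and all quantities are finite. Splitting the sum is therefore legitimate whenever $H(P)<\infty$: the KL series has nonnegative-sum structure, because $\sum_y P(y)\log(P(y)/Q(y))\ge 0$ by Gibbs' inequality, and its negative parts are summable.

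For the environment-level consequence, apply the pointwise identity with $P=P^\star(\cdot\mid\tau(x))=P^\star_\tau(\cdot\mid x)$ and $Q=P_\theta(\cdot\mid\tau(x))=P_\theta^\tau(\cdot\mid x)$. Then take $\mathbb E_{x\sim\mathcal D}$ and use the representation of $R_\tau(\theta)$ in \eqref{eq:Rt_def}. This gives
\[
R_\tau(\theta)=\mathbb E_x H\big(P_\tau^\star(\cdot\mid x)\big)+\mathbb E_x D_{\mathrm{KL}}\big(P_\tau^\star\,\|\,P_\theta^\tau\big).
\]

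It remains to identify the first term with $R_\tau^\star$ as in \eqref{eq:Rt_star}. Apply the same decomposition to an arbitrary $Q(\cdot\mid z)$: the expected loss equals $\mathbb E_x H(P^\star_\tau)$ plus a nonnegative expected KL term. That KL term vanishes when $Q=P^\star(\cdot\mid z)$, which is an admissible choice because $P^\star_\tau$ depends on $x$ only through $z=\tau(x)$. So the infimum is attained and equals $\mathbb E_x H(P^\star_\tau(\cdot\mid x))$. Subtracting yields $\varepsilon_\tau(\theta)=\mathbb E_x D_{\mathrm{KL}}(P^\star_\tau\|P^\theta_\tau)$.

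The only step needing care is measurability and well-definedness when passing from pointwise to expected quantities, in particular the possibility of infinite entropies. I would state the lemma under the standing assumption $R_\tau^\star<\infty$, which is automatic for finite output spaces. Under that assumption the subtraction defining $\varepsilon_\tau$ is well defined, possibly equal to $+\infty$.
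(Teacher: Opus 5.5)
Your proposal is correct and follows the same route as the paper: apply the pointwise identity cross-entropy $=$ entropy $+$ KL with $Q=P_\theta^\tau$, average over $x$, and subtract $R_\tau^\star=\mathbb E_x H(P_\tau^\star(\cdot\mid x))$. The differences are that you derive this last equality from Gibbs' inequality and attainment at $Q=P^\star(\cdot\mid z)$, whereas the paper builds it into the definition \eqref{eq:Rt_star}, and that you spell out the extended-real bookkeeping for infinite entropies, which the paper omits.
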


\begin{proof}
The identity is standard: cross-entropy equals entropy plus KL divergence.
Taking expectation over $x\sim\mathcal D$ and substituting $Q=P_\theta^\tau$ yields the first equality.
By definition of $R_\tau^\star$ in~\eqref{eq:Rt_star}, $R_\tau^\star=\mathbb E_x H(P_\tau^\star(\cdot\mid x))$.
Subtracting completes~\eqref{eq:eps_equals_ekl}.
\end{proof}

\begin{lemma}[Pinsker's inequality]
\label{lem:pinsker}
For any distributions $P,Q$,
\[
\mathrm{TV}(P,Q)\le \sqrt{\tfrac12 D_{\mathrm{KL}}(P\|Q)}.
\]
\end{lemma}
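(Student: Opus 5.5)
This is the classical Pinsker inequality, so the plan is a self-contained proof in two stages. First reduce to the two-point (Bernoulli) case by the data-processing inequality for KL. Then verify the binary inequality by a one-variable calculus argument.

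\emph{Reduction step.} Let $A$ be any measurable set. Consider the coarse-graining map $y\mapsto \mathbf 1\{y\in A\}$, which pushes $P,Q$ to Bernoulli laws with parameters $p=P(A)$ and $q=Q(A)$. KL divergence does not increase under this map. To see this for a two-cell partition, apply the log-sum inequality (convexity of $t\log t$) separately on $A$ and on $A^c$. This gives
\[
D_{\mathrm{KL}}(P\|Q)\ \ge\ d(p\|q)\triangleq p\log\tfrac{p}{q}+(1-p)\log\tfrac{1-p}{1-q}.
\]
Since $\mathrm{TV}(P,Q)=\sup_A|P(A)-Q(A)|$, it suffices to show $d(p\|q)\ge 2(p-q)^2$ for all $p,q\in[0,1]$. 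Taking the supremum over $A$ then gives $\mathrm{TV}(P,Q)^2\le \tfrac12 D_{\mathrm{KL}}(P\|Q)$.

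\emph{Binary inequality.} Fix $p$ and set $g(q)=d(p\|q)-2(p-q)^2$ for $q\in(0,1)$. Then $g(p)=0$, and
\[
g'(q)=-\tfrac{p}{q}+\tfrac{1-p}{1-q}+4(p-q)=(q-p)\Big(\tfrac{1}{q(1-q)}-4\Big).
\]
Because $q(1-q)\le \tfrac14$, the second factor is nonnegative. So $g$ is nonincreasing for $q\le p$ and nondecreasing for $q\ge p$, and hence $g\ge 0$.

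\emph{Boundary cases.} If $q\in\{0,1\}$ with $p\neq q$, then $d(p\|q)=\infty$ and the bound is trivial. The convention $0\log 0=0$ handles $p\in\{0,1\}$. If $P\not\ll Q$, the right-hand side of the lemma is infinite.

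\emph{Main obstacle.} The only delicate point is justifying the data-processing step in the general (non-discrete) setting. Here the outputs $y$ are finite token sequences, so the space is countable. The log-sum inequality over the countable cells $A$ and $A^c$ therefore suffices, and no measure-theoretic machinery is needed.
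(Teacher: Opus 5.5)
The paper does not prove this lemma. It lists Pinsker's inequality among the auxiliary lemmas as a classical fact, next to Jensen's inequality, and uses it only as a black box in the consistency theorem. Your argument is the standard textbook proof, and it is correct.
\begin{itemize}
\item The log-sum reduction gives $D_{\mathrm{KL}}(P\|Q)\ge d(P(A)\|Q(A))$ for every event $A$.
\item Your factorization $g'(q)=(q-p)\bigl(\tfrac{1}{q(1-q)}-4\bigr)$ checks out.
\item The bound $q(1-q)\le\tfrac14$ makes $q=p$ a global minimizer of $g$.
\end{itemize}
Compared with the paper's bare citation, your version is self-contained. It also makes explicit that the constant $\tfrac12$ is right only under the convention $\mathrm{TV}(P,Q)=\sup_A|P(A)-Q(A)|$, i.e.\ half the $\ell_1$ distance. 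The paper relies on this convention but never states it.

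Two small refinements:
\begin{itemize}
\item The countable-space restriction is harmless for how the paper uses the lemma (distributions over finite token sequences), but it is unnecessary. For general $P\ll Q$ with $f=dP/dQ$, apply Jensen's inequality for the convex $\varphi(t)=t\log t$ under the normalized measure $Q(\cdot)/Q(A)$ on $A$. This gives $\int_A f\log f\,dQ\ge P(A)\log\frac{P(A)}{Q(A)}$, which is exactly your two-cell data-processing step. So the same proof covers the lemma as stated, for ``any distributions''.
\item When $p\in\{0,1\}$, the minimizer $q=p$ lies on the boundary of the open interval $(0,1)$ where you differentiate. You should therefore conclude $g\ge 0$ from monotonicity on $(0,1)$ together with $g(q)\to 0$ as $q\to p$, not from ``$g(p)=0$'' alone. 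For example, when $p=0$, $g(q)=-\log(1-q)-2q^2\to 0$. This is a one-line fix.
\end{itemize}
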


\begin{lemma}[Jensen for square root]
\label{lem:jensen_sqrt}
For any nonnegative random variable $Z$, $\mathbb E[\sqrt{Z}]\le \sqrt{\mathbb E[Z]}$.
\end{lemma}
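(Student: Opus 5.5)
The plan is to derive the inequality from the Cauchy--Schwarz inequality, which avoids having to set up Jensen's inequality in general. First dispose of the degenerate case: if $\mathbb E[Z]=+\infty$, the right-hand side is $+\infty$ and there is nothing to prove. So assume $\mathbb E[Z]<\infty$. Since $Z\ge 0$, the variable $\sqrt{Z}$ is well defined, nonnegative and measurable, and $\mathbb E[(\sqrt Z)^2]=\mathbb E[Z]<\infty$. Hence $\sqrt Z\in L^2$, and in particular $\mathbb E[\sqrt Z]$ is finite.

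Next, apply Cauchy--Schwarz to the product $\sqrt{Z}\cdot 1$:
\[
\mathbb E[\sqrt{Z}]=\mathbb E[\sqrt{Z}\cdot 1]\le \sqrt{\mathbb E[Z]}\,\sqrt{\mathbb E[1]}=\sqrt{\mathbb E[Z]},
\]
which is exactly the claim. As an alternative, one could invoke Jensen's inequality directly, since $t\mapsto\sqrt t$ is concave on $[0,\infty)$. Concretely, take the tangent line at $m=\mathbb E[Z]>0$, namely $\sqrt t\le \sqrt m+\frac{t-m}{2\sqrt m}$, and take expectations of both sides. The case $m=0$ is immediate, because then $Z=0$ almost surely.

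There is no substantive obstacle here. The only point needing care is integrability, meaning that both sides are well defined, possibly as $+\infty$, and this is handled by the case split above. In the later consistency argument the lemma will be applied with $Z=D_{\mathrm{KL}}(P_\tau^\star\|P_\theta^\tau)/2$, which is nonnegative by Gibbs' inequality, so the hypothesis $Z\ge 0$ is satisfied there.
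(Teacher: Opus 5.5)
Your proof is correct, including the integrability bookkeeping. The paper gives no argument for this lemma; it is cited as a standard fact, and its name signals that the intended justification is Jensen's inequality for the concave map $t\mapsto\sqrt t$. That is exactly your secondary argument: the tangent line at $m=\mathbb E[Z]$ is the supporting-line proof of Jensen, specialized to $\sqrt{\cdot}$.

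Your primary route is genuinely different and slightly more elementary. Cauchy--Schwarz against the constant $1$ amounts to $0\le \mathrm{Var}(\sqrt Z)=\mathbb E[Z]-(\mathbb E[\sqrt Z])^2$. It needs no convexity theory, only $\sqrt Z\in L^2\subset L^1$ on a probability space. The Jensen route, by contrast, extends verbatim to any concave function, which is the generality the paper's label appeals to.

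Your explicit handling of $\mathbb E[Z]=+\infty$ and $\mathbb E[Z]=0$ is a small improvement over the paper, and it is not idle. In the application $Z=\tfrac12 D_{\mathrm{KL}}(P^\star_\tau(\cdot\mid x)\,\|\,P^\tau_\theta(\cdot\mid x))$, the KL divergence can be infinite, so the $+\infty$ case can genuinely arise.
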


\subsection{Proof of Robustness Theorem}
\label{app:proof_modality_robustness}

\paragraph{{\bfseries Theorem (Robustness under IMD).}}
Assume \textbf{Coverage}: there exists $\alpha>0$ such that $q(\tau)\ge\alpha$ for all
$\tau\in\mathcal T_{\text{test}}$. Define
\[
R_{\max}(\theta)\triangleq \max_{\tau\in\mathcal T_{\text{test}}}R_\tau(\theta).
\]
Then
\begin{equation}
R_{\max}(\theta)\le \alpha^{-1}R_q(\theta).
\label{eq:robust_bound_app}
\end{equation}

\paragraph{{\bfseries Proof.}}
Let $\tau^\star\in\arg\max_{\tau\in\mathcal T_{\text{test}}}R_\tau(\theta)$ so that
$R_{\max}(\theta)=R_{\tau^\star}(\theta)$.
By the definition of mixture risk~\eqref{eq:Rq_def},
\[
R_q(\theta)=\mathbb E_{\tau\sim q}[R_\tau(\theta)] \ge q(\tau^\star)\,R_{\tau^\star}(\theta)
\ge \alpha\,R_{\max}(\theta),
\]
which implies~\eqref{eq:robust_bound_app}. \qed

\subsection{Proof of Consistency and Swap-Invariance Theorem}
\label{app:proof_modality_fairness}

\paragraph{{\bfseries Definitions (consistency metrics).}}
For the single-modality views, define
\[
\mathcal F(\theta)\triangleq \mathbb E_{x\sim\mathcal D}\,
\mathrm{TV}\big(P_\theta^{\tau_I}(\cdot\mid x),\,P_\theta^{\tau_T}(\cdot\mid x)\big).
\]
For interleaving (block-swap) invariance, define
\[
\mathcal F_{\text{swap}}(\theta)\triangleq \mathbb E_{x\sim\mathcal D}\,
\mathrm{TV}\big(P_\theta^{\tau_{IT}}(\cdot\mid x),\,P_\theta^{\tau_{TI}}(\cdot\mid x)\big).
\]
Optionally, one may also define an intrinsic swap gap
$\Delta_{\text{swap}}\triangleq \mathbb E_x \mathrm{TV}(P_{\tau_{IT}}^\star(\cdot\mid x),P_{\tau_{TI}}^\star(\cdot\mid x))$,
which is typically $0$ when only the token block order changes the input representation.

\paragraph{{\bfseries Theorem (Consistency via excess risk).}}
For any $\theta$,
\begin{align}
\mathcal F(\theta)
&\le \Delta_{\text{view}}
+ \sqrt{\varepsilon_{\tau_I}(\theta)/2}
+ \sqrt{\varepsilon_{\tau_T}(\theta)/2},
\label{eq:fair_view_bound_app}\\
\mathcal F_{\text{swap}}(\theta)
&\le \Delta_{\text{swap}}
+ \sqrt{\varepsilon_{\tau_{IT}}(\theta)/2}
+ \sqrt{\varepsilon_{\tau_{TI}}(\theta)/2}.
\label{eq:fair_swap_bound_app}
\end{align}
Moreover, letting $\bar R_q^\star\triangleq \mathbb E_{\tau\sim q}[R_\tau^\star]$, for any $\tau$,
\begin{equation}
R_q(\theta)-\bar R_q^\star \ge q(\tau)\,\varepsilon_\tau(\theta),
\quad\text{hence under Coverage }(q(\tau)\ge\alpha)\text{ we have }
R_q(\theta)-\bar R_q^\star \ge \alpha\,\varepsilon_\tau(\theta).
\label{eq:mixture_dominance_app}
\end{equation}

\paragraph{{\bfseries Proof of \eqref{eq:fair_view_bound_app}.}}
Fix any $x$ and abbreviate
$P_I^\theta(\cdot)\!=\!P_\theta^{\tau_I}(\cdot\mid x)$,
$P_T^\theta(\cdot)\!=\!P_\theta^{\tau_T}(\cdot\mid x)$,
$P_I^\star(\cdot)\!=\!P_{\tau_I}^\star(\cdot\mid x)$,
$P_T^\star(\cdot)\!=\!P_{\tau_T}^\star(\cdot\mid x)$.
By triangle inequality for total variation,
\begin{equation}
\mathrm{TV}(P_I^\theta,P_T^\theta)
\le \mathrm{TV}(P_I^\theta,P_I^\star)
+ \mathrm{TV}(P_I^\star,P_T^\star)
+ \mathrm{TV}(P_T^\star,P_T^\theta).
\label{eq:tv_triangle_view}
\end{equation}
Taking expectation over $x\sim\mathcal D$ gives
\begin{equation}
\mathcal F(\theta)\le \Delta_{\text{view}}
+ \mathbb E_x \mathrm{TV}(P_\theta^{\tau_I}(\cdot\mid x),P_{\tau_I}^\star(\cdot\mid x))
+ \mathbb E_x \mathrm{TV}(P_{\tau_T}^\star(\cdot\mid x),P_\theta^{\tau_T}(\cdot\mid x)).
\label{eq:fair_split_view}
\end{equation}
Apply Pinsker's inequality (Lemma~\ref{lem:pinsker}) to the two TV terms:
\[
\mathrm{TV}(P_I^\star,P_I^\theta)\le \sqrt{\tfrac12 D_{\mathrm{KL}}(P_I^\star\|P_I^\theta)},
\qquad
\mathrm{TV}(P_T^\star,P_T^\theta)\le \sqrt{\tfrac12 D_{\mathrm{KL}}(P_T^\star\|P_T^\theta)}.
\]
Taking expectation and applying Jensen (Lemma~\ref{lem:jensen_sqrt}) yields
\begin{align}
\mathbb E_x \mathrm{TV}(P_I^\star,P_I^\theta)
&\le \sqrt{\tfrac12 \mathbb E_x D_{\mathrm{KL}}(P_{\tau_I}^\star(\cdot\mid x)\|P_\theta^{\tau_I}(\cdot\mid x))},
\label{eq:pinsker_jensen_I}\\
\mathbb E_x \mathrm{TV}(P_T^\star,P_T^\theta)
&\le \sqrt{\tfrac12 \mathbb E_x D_{\mathrm{KL}}(P_{\tau_T}^\star(\cdot\mid x)\|P_\theta^{\tau_T}(\cdot\mid x))}.
\label{eq:pinsker_jensen_T}
\end{align}
Finally, by Lemma~\ref{lem:ce_decomp} (Eq.~\eqref{eq:eps_equals_ekl}),
\[
\mathbb E_x D_{\mathrm{KL}}(P_{\tau_I}^\star\|P_\theta^{\tau_I})=\varepsilon_{\tau_I}(\theta),
\qquad
\mathbb E_x D_{\mathrm{KL}}(P_{\tau_T}^\star\|P_\theta^{\tau_T})=\varepsilon_{\tau_T}(\theta).
\]
Substituting into~\eqref{eq:fair_split_view} proves~\eqref{eq:fair_view_bound_app}. \qed

\paragraph{{\bfseries Proof of \eqref{eq:fair_swap_bound_app}.}}
The proof is identical to the view case, replacing $(\tau_I,\tau_T)$ with $(\tau_{IT},\tau_{TI})$
and $\Delta_{\text{view}}$ with $\Delta_{\text{swap}}$. \qed

\paragraph{{\bfseries Proof of mixture dominance \eqref{eq:mixture_dominance_app}.}}
By definitions,
\begin{align*}
R_q(\theta)-\bar R_q^\star
&=\mathbb E_{\tau\sim q}[R_\tau(\theta)] - \mathbb E_{\tau\sim q}[R_\tau^\star]
= \mathbb E_{\tau\sim q}\big[R_\tau(\theta)-R_\tau^\star\big]
= \mathbb E_{\tau\sim q}\big[\varepsilon_\tau(\theta)\big].
\end{align*}
Since $\varepsilon_\tau(\theta)\ge 0$, for any fixed $\tau_0$,
\[
\mathbb E_{\tau\sim q}[\varepsilon_\tau(\theta)] \ge q(\tau_0)\,\varepsilon_{\tau_0}(\theta),
\]
which gives the first inequality in~\eqref{eq:mixture_dominance_app}.
If additionally $q(\tau_0)\ge \alpha$, then $R_q(\theta)-\bar R_q^\star\ge \alpha\,\varepsilon_{\tau_0}(\theta)$. \qed

\subsection{Remarks on Using $\inf_\theta R_\tau(\theta)$}
\label{app:remark_model_class}

\paragraph{{\bfseries Remark.}}
In the main text, one may write $R_\tau^\star=\inf_\theta R_\tau(\theta)$ for notational simplicity.
The identities in Lemma~\ref{lem:ce_decomp} and Eq.~\eqref{eq:eps_equals_ekl} hold exactly for the Bayes-optimal risk
defined in~\eqref{eq:Rt_star}. If the model family is expressive enough to realize $P_\tau^\star(\cdot\mid x)$
(for each $\tau\in\mathcal T_{\text{test}}$), then $\inf_\theta R_\tau(\theta)=R_\tau^\star$ and all bounds remain unchanged.
Otherwise, one can interpret $\varepsilon_\tau(\theta)$ as excess risk over the Bayes-optimal predictor; replacing it by
$R_\tau(\theta)-\inf_\theta R_\tau(\theta)$ introduces an additional approximation term that is standard in statistical learning.

\

\clearpage
\section{Case Study}
\label{sec:case_study}

\subsection{Inference Results}
\label{subsec:inference_results}

\begin{CaseBox}{Case Study: 41106347}
  \begin{center}
    \includegraphics[width=0.8\linewidth]{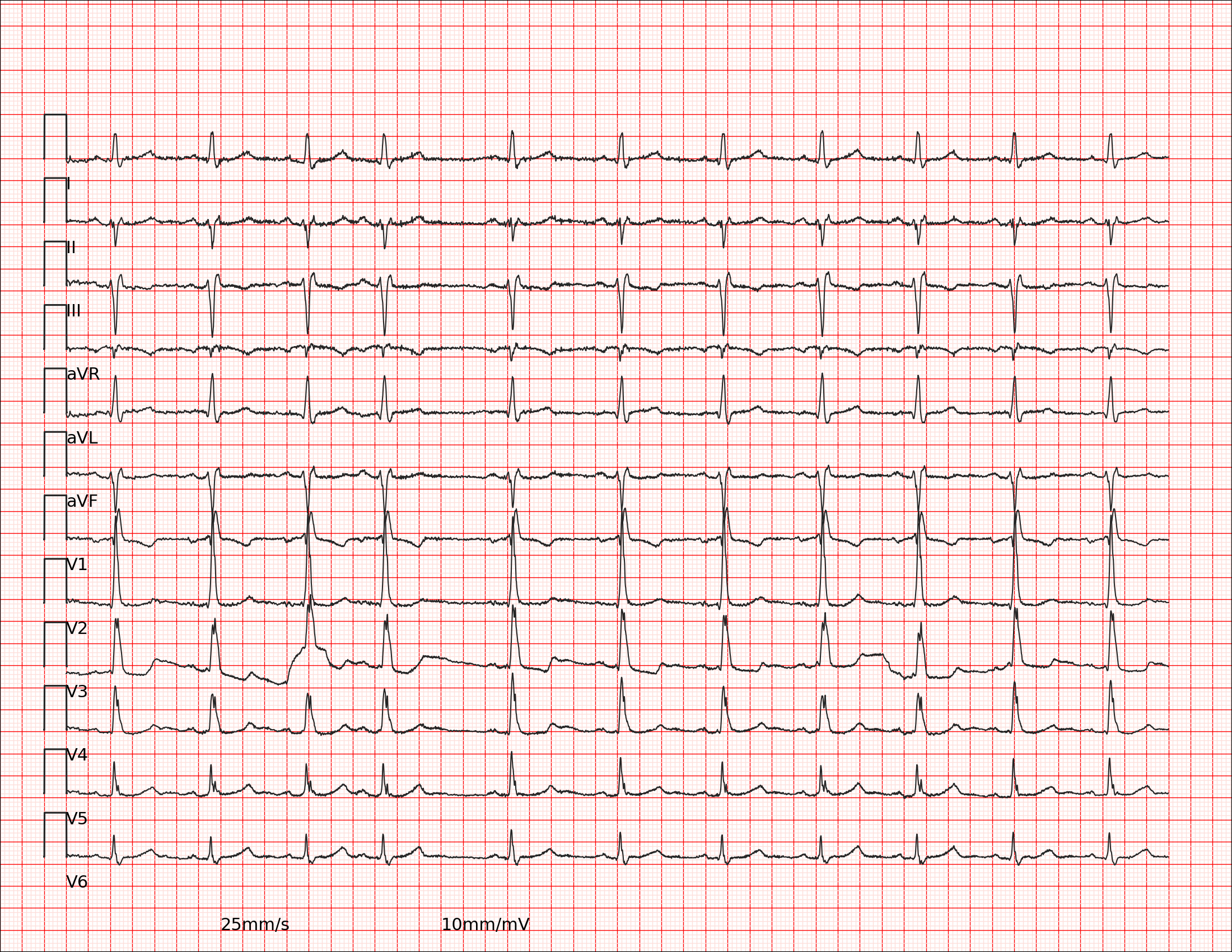}
    \captionof{figure}{Report: Sinus rhythm with PACs; Left axis deviation; RBBB with left anterior fascicular block; Abnormal ECG.}
    \label{fig:app_case_41106347}
  \end{center}

  \begin{ModelBox}[colframe=GEMframe]{GEM  Interpretation}
Upon analyzing the provided ECG image and computed measurements, several key features and abnormalities are evident. The rhythm is sinus with occasional premature atrial contractions (PACs), as indicated by irregular RR intervals, particularly the shorter intervals such as 238 ms and 241 ms. The heart rate is 68 bpm, which is within normal limits. The presence of left axis deviation is suggested by the negative QRS complex in lead aVF and positive in lead I. The QRS duration is prolonged, particularly in leads I and V1, with durations reaching up to 150 ms, indicating a right bundle branch block (RBBB). Additionally, the left anterior fascicular block is suggested by the left axis deviation and the presence of a small R wave in lead I.

Inferior infarction is indicated by abnormal Q waves and ST segment changes in leads II, III, and aVF. The QRS amplitude is notably reduced in these leads, and the ST segment shows a declination form, which is consistent with ischemic changes. The anterior leads, particularly V2 to V4, show some ST segment elevation, which could suggest an anterior myocardial infarction. However, the lack of significant Q waves in these leads makes it less definitive.

Overall, the ECG findings are consistent with a diagnosis of sinus rhythm with PACs, left axis deviation, RBBB with left anterior fascicular block, and evidence of an inferior infarct. The anterior ST elevation is suggestive of possible myocardial injury, but further clinical correlation and possibly additional testing would be necessary to confirm this. The ECG is abnormal, and these findings warrant further investigation and clinical management.

  \end{ModelBox}

  \begin{ModelBox}[colframe=R1frame]{ECG-R1 Interpretation}
  \textless think\textgreater
  
\textbf{Step 1: Technical, Rate \& Rhythm}

The ECG demonstrates good technical quality with a stable baseline, standard 25 mm/s speed, and 10 mm/mV calibration, with no evidence of lead reversal as P waves are upright in lead I. The rhythm is sinus with a rate of 66 bpm, confirmed by the presence of upright P waves in leads II, III, and aVF. The rhythm is occasionally irregular, with intermittent premature beats characterized by an early, abnormal P wave morphology and a compensatory pause, consistent with premature atrial contractions (PACs); the absence of an irregularly irregular rhythm without P waves rules out atrial fibrillation.

\textbf{Step 2: Conduction, Axis \& Intervals}

The frontal plane axis demonstrates left axis deviation, evidenced by a positive QRS complex in lead I and a negative QRS complex in lead II. The PR interval is within normal limits at approximately 160 ms, ruling out AV block. The QRS duration is prolonged at 120 ms, and an rSR' pattern is present in lead V1 with a wide, slurred S wave in leads I and V6, confirming a right bundle branch block (RBBB). The combination of left axis deviation with RBBB is indicative of a bifascicular block involving the right bundle and the left anterior fascicle.

\textbf{Step 3: Chamber Hypertrophy \& Voltage}

There is no evidence of left ventricular hypertrophy, as the S wave in V1 and R wave in V6 do not meet the Sokolow-Lyon voltage criteria. R wave progression across the precordial leads V1 to V4 is normal, with a transition zone around V3-V4, which rules out conditions like poor R wave progression or reversed progression suggestive of right ventricular hypertrophy or posterior infarction.

\textbf{Step 4: Ischemia, Infarction \& Mimics}

A systematic analysis of all lead groups reveals no acute ischemic changes. There is no significant ST-segment elevation or depression meeting threshold criteria in the inferior (II, III, aVF), lateral (I, aVL, V5, V6), or anterior (V1-V4) leads. The ST segments are predominantly isoelectric or show non-specific, minimal deviations. No pathologic Q waves are present in any territory. The T waves are appropriately discordant to the QRS complex in leads with RBBB (V1-V2), which is a normal finding and not indicative of ischemia.

\textbf{Step 5: Electrolytes \& QT}

The T wave morphology is normal without evidence of tenting suggestive of hyperkalemia or flattening with U waves suggestive of hypokalemia. The corrected QT interval (QTc) is within normal limits, calculated at approximately 440 ms, ruling out significant QT prolongation.

\textbf{Step 6: Final Medical Reasoning}

The ECG findings are consistent with a sinus rhythm with occasional PACs, a bifascicular block pattern comprising right bundle branch block and left anterior fascicular block, and no evidence of acute ischemia, infarction, or electrolyte disturbance. The overall tracing is abnormal due to the conduction delays.

\textless /think\textgreater

This ECG demonstrates sinus rhythm at 66 bpm with occasional premature atrial contractions. The conduction system shows a bifascicular block pattern, characterized by a right bundle branch block with a QRS duration of 120 ms and left axis deviation, consistent with concomitant left anterior fascicular block. There is no evidence of chamber hypertrophy, acute ischemic ST-T wave changes, pathologic Q waves, or electrolyte abnormalities. The corrected QT interval is normal. The tracing is abnormal due to the identified conduction delays.

\textless answer\textgreater Sinus rhythm with PAC(s); Left axis deviation; RBBB with left anterior fascicular block; Abnormal ECG\textless /answer\textgreater

  \end{ModelBox}
\end{CaseBox}

\begin{CaseBox}{Case Study: 48410813}
  \begin{center}
    \includegraphics[width=0.8\linewidth]{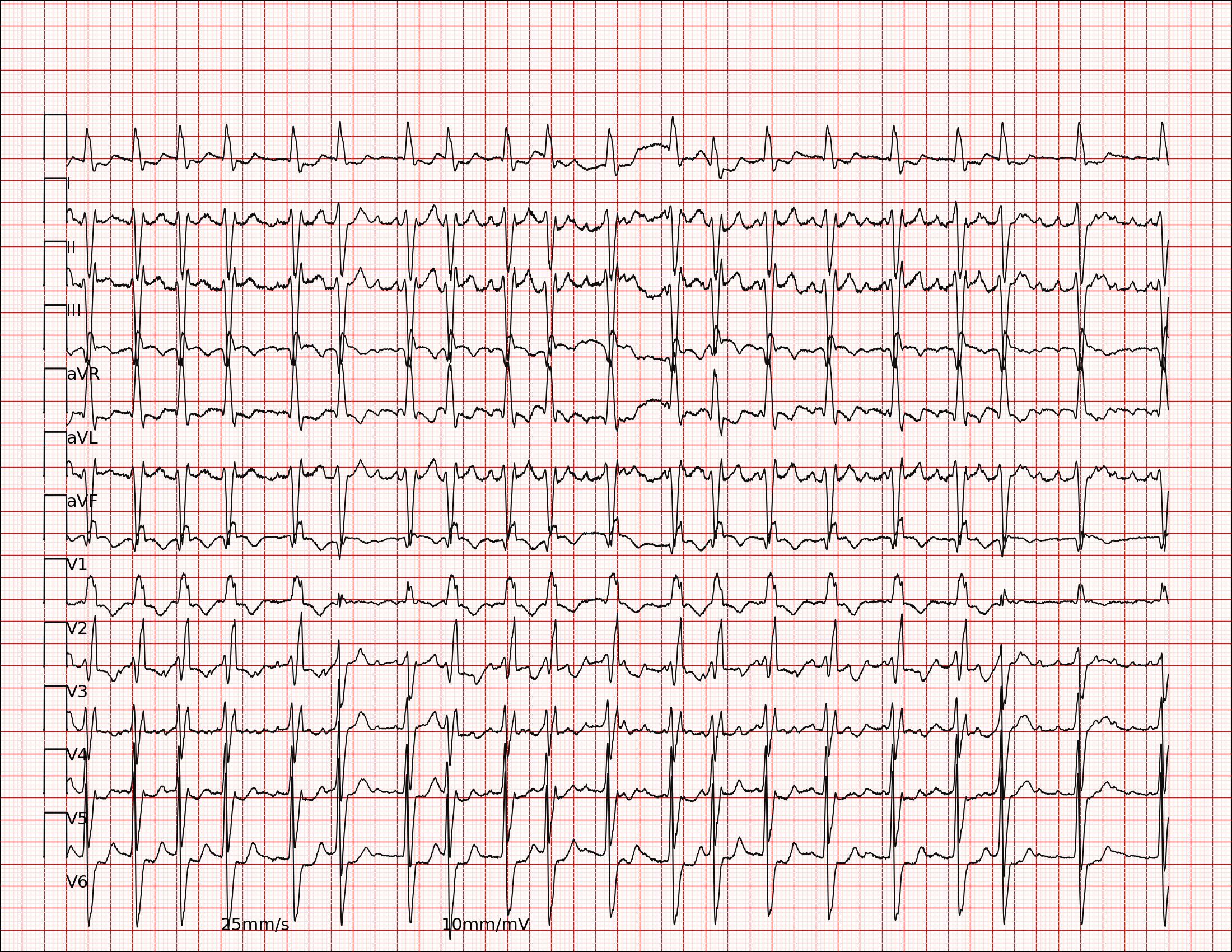}
    \captionof{figure}{Report: Atrial fibrillation with rapid ventricular response with PVC(s); Left axis deviation; RBBB with left anterior fascicular block; Lateral ST-T changes may be due to myocardial ischemia; Abnormal ECG.}
    \label{fig:app_case_48410813}
  \end{center}

  \begin{ModelBox}[colframe=GEMframe]{GEM  Interpretation}
  Upon analyzing the provided ECG image and computed measurements, several key features and abnormalities are evident. The heart rate is elevated at 120 bpm, indicating sinus tachycardia. The presence of premature atrial contractions (PACs) is suggested by irregular RR intervals, particularly the shorter intervals such as 188 ms and 203 ms. The ECG also reveals a left axis deviation, which is supported by the negative QRS complex in lead aVF and positive in lead I. The QRS duration is prolonged in several leads, notably in lead I with durations up to 158 ms, indicating a right bundle branch block (RBBB) with left anterior fascicular block. This is further corroborated by the presence of a small R wave in lead V1 and a broad S wave in lead I.
  
  Additionally, the ECG shows signs of left ventricular hypertrophy, as suggested by the increased QRS amplitude in leads V5 and V6, with values reaching up to 1.7 mV. The lateral ST-T changes, particularly in leads I, aVL, V5, and V6, are likely due to ventricular hypertrophy, as indicated by the ST segment declination and T wave inversion. These findings collectively suggest an abnormal ECG with multiple underlying cardiac conditions. The combination of sinus tachycardia, PACs, left axis deviation, RBBB with left anterior fascicular block, and left ventricular hypertrophy, along with lateral ST-T changes, provides a comprehensive picture of the patient's cardiac status. This analysis is consistent with the observed ECG features and computed measurements, supporting a diagnosis of an abnormal ECG with multiple cardiac abnormalities.

  \end{ModelBox}

  \begin{ModelBox}[colframe=R1frame]{ECG-R1 Interpretation}
  \textless think\textgreater
  
\textbf{Step 1: Technical, Rate \& Rhythm}

The ECG demonstrates an irregularly irregular rhythm with a mean ventricular rate of 118 bpm, consistent with tachycardia. There is an absence of discernible, organized P waves across all leads, including Lead II and V1, which rules out sinus rhythm and confirms the diagnosis of atrial fibrillation. The baseline shows some undulation but is otherwise stable without significant artifact, and the calibration is standard at 25 mm/s and 10 mm/mV, with no evidence of limb lead reversal as P waves are not present to assess.

\textbf{Step 2: Conduction, Axis \& Intervals}

The frontal plane axis demonstrates left axis deviation, evidenced by a positive QRS complex in Lead I and a negative QRS complex in Lead II. The QRS duration is prolonged at 120-130 ms, and an rSR' pattern is present in Lead V1 with a wide, slurred S wave in Leads I and V6, confirming a right bundle branch block (RBBB). The combination of left axis deviation with RBBB is indicative of a concomitant left anterior fascicular block. The PR interval is not measurable due to the underlying atrial fibrillation.

\textbf{Step 3: Chamber Hypertrophy \& Voltage}

There is evidence of left ventricular hypertrophy based on voltage criteria, with an R wave amplitude in Lead aVL measuring up to 1.34 mV, which exceeds the 11 mm threshold. The S wave in V1 and R wave in V6 also sum to a voltage supportive of LVH. R wave progression across the precordial leads V1 to V4 is normal, ruling out poor R wave progression. There is no evidence of right ventricular hypertrophy.

\textbf{Step 4: Ischemia, Infarction \& Mimics}

A systematic scan reveals no evidence of acute ischemia or infarction. There are no pathologic Q waves in the inferior (II, III, aVF), lateral (I, aVL, V5, V6), or anterior (V1-V4) leads. The ST segments are isoelectric without significant elevation or depression in any contiguous lead groups, ruling out STEMI, pericarditis, and non-STEMI. The T waves in the lateral leads (V5, V6) are inverted, which is a secondary repolarization abnormality consistent with the underlying RBBB and LVH rather than primary ischemia.

\textbf{Step 5: Electrolytes \& QT}

The QT interval is variable due to the irregular rhythm, but the calculated QTc intervals are within normal limits, not suggesting prolonged QT syndrome. The T waves are not characteristically tented to suggest hyperkalemia, and there are no prominent U waves to indicate hypokalemia. The T-wave morphology is consistent with the secondary changes from the conduction abnormality.

\textbf{Step 6: Final Medical Reasoning}

The ECG findings are consistent with atrial fibrillation with a rapid ventricular response, a right bundle branch block with left anterior fascicular block causing the observed left axis deviation, and left ventricular hypertrophy. The lateral ST-T changes are secondary to the conduction delay and hypertrophy. There is no electrocardiographic evidence of acute coronary syndrome.

\textless /think\textgreater

This ECG demonstrates atrial fibrillation with a rapid ventricular response at 118 bpm, characterized by an irregularly irregular rhythm and absence of P waves. The conduction system shows a right bundle branch block pattern in V1 with a wide QRS duration, combined with left axis deviation indicative of a left anterior fascicular block. Voltage criteria in lead aVL and the precordial leads satisfy the diagnosis of left ventricular hypertrophy. Repolarization abnormalities in the lateral leads are secondary to the conduction delay and hypertrophy. There is no evidence of acute ischemia, infarction, or electrolyte imbalance.

\textless answer\textgreater Atrial fibrillation with rapid ventricular response; Left axis deviation; RBBB with left anterior fascicular block; Left ventricular hypertrophy; Lateral ST-T changes are probably due to ventricular hypertrophy; Abnormal ECG\textless /answer\textgreater

  \end{ModelBox}
\end{CaseBox}

\begin{CaseBox}{Case Study: 44950228}
  \begin{center}
    \includegraphics[width=0.8\linewidth]{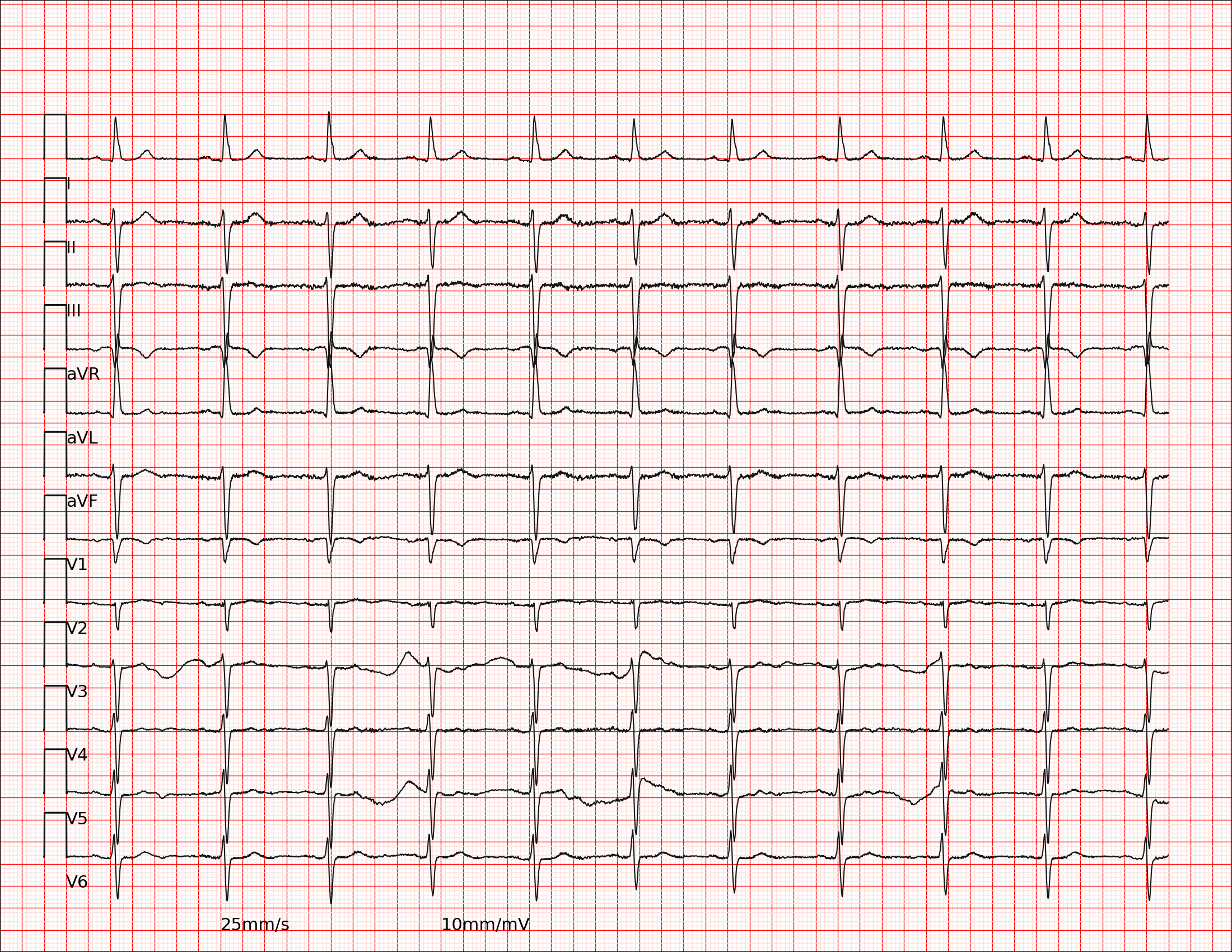}
    \captionof{figure}{Report: Sinus rhythm; Left anterior fascicular block; Left ventricular hypertrophy; Abnormal ECG.}
    \label{fig:app_case_44950228}
  \end{center}

  \begin{ModelBox}[colframe=GEMframe]{GEM  Interpretation}
Upon analyzing the provided ECG image and computed measurements, several key features and abnormalities are evident. The rhythm is sinus, as indicated by the consistent P wave morphology across leads and a regular heart rate of 66 bpm. A notable finding is the left anterior fascicular block, which is suggested by the left axis deviation and the QRS complex morphology in leads I and aVL. The QRS duration is prolonged in some leads, particularly in lead I, which supports this diagnosis.

In leads V2 to V4, there are indications of a possible anterior infarct. This is suggested by the presence of abnormal Q waves and changes in the ST segment and T wave morphology, which are consistent with anterior myocardial infarction. The QRS amplitude in these leads is notably reduced, further supporting this finding.

Additionally, the ECG shows nonspecific T wave changes in the lateral leads, particularly in leads I, aVL, V5, and V6. These changes are characterized by variations in T wave amplitude and morphology, which do not fit a specific pattern but are indicative of some form of cardiac stress or ischemia.

Overall, the ECG findings are consistent with a sinus rhythm, left anterior fascicular block, possible anterior infarct, and nonspecific lateral T wave changes. These findings collectively suggest an abnormal ECG, warranting further clinical correlation and possibly additional diagnostic testing to confirm the presence and extent of any myocardial damage.

  \end{ModelBox}

  \begin{ModelBox}[colframe=R1frame]{ECG-R1 Interpretation}
  \textless think\textgreater
  
\textbf{Step 1: Technical, Rate \& Rhythm}

The ECG demonstrates good technical quality with a stable baseline, standard 25 mm/s speed, and 10 mm/mV calibration, with no evidence of lead reversal as P waves are upright in lead I. The heart rate is 62 bpm, indicating normal sinus rhythm. P waves are present and positive in leads II, III, and aVF, confirming a sinus mechanism. The rhythm is regular, ruling out atrial fibrillation or flutter. P wave morphology in lead II is normal in duration and amplitude, showing no evidence of right or left atrial abnormality.

\textbf{Step 2: Conduction, Axis \& Intervals}

The frontal plane axis demonstrates left axis deviation, with a positive QRS complex in lead I and a negative QRS complex in lead II, consistent with left anterior fascicular block. The PR interval is within normal limits at approximately 180 ms, ruling out AV block or pre-excitation syndromes. The QRS duration is normal at less than 100 ms, which rules out bundle branch blocks. There is no evidence of an rSR' pattern in V1 or a wide slurred S wave in V6 to suggest a conduction delay.

\textbf{Step 3: Chamber Hypertrophy \& Voltage}

There is evidence of left ventricular hypertrophy based on voltage criteria, with an R wave amplitude in lead aVL measuring greater than 11 mm. The S wave in V1 and R wave in V6 also sum to a voltage exceeding 35 mm, further supporting this finding. R wave progression across the precordial leads V1 to V4 is normal, with no evidence of poor R wave progression or reversed progression to suggest right ventricular hypertrophy or posterior infarction.

\textbf{Step 4: Ischemia, Infarction \& Mimics}

A systematic analysis of all lead groups reveals no acute ischemic changes. There is no significant ST segment elevation or depression meeting threshold criteria in the inferior (II, III, aVF), lateral (I, aVL, V5, V6), or anterior (V1-V4) leads. There are no pathologic Q waves present. The ST segments are predominantly isoelectric or show non-specific, mild upsloping. Lead aVR shows no significant ST elevation to suggest left main disease. The T waves are appropriately concordant with the QRS complex in most leads.

\textbf{Step 5: Electrolytes \& QT}

The QT interval is within normal limits, with a QTc calculated to be approximately 420 ms, which is below the 440 ms threshold. The T wave morphology is normal without evidence of tenting suggestive of hyperkalemia or flattening and U waves suggestive of hypokalemia. The T waves in the precordial leads are of normal configuration.

\textbf{Step 6: Final Medical Reasoning}

The ECG is abnormal, with the primary findings being sinus rhythm, left anterior fascicular block, and left ventricular hypertrophy by voltage criteria. The absence of ST-T wave changes indicative of acute ischemia, pathologic Q waves, or conduction delays beyond the fascicular block supports a chronic, non-ischemic etiology for the hypertrophy. The normal QT interval and T wave morphology rule out significant electrolyte disturbances.

\textless /think\textgreater

This ECG demonstrates normal sinus rhythm at 62 bpm with good technical quality. The primary abnormalities are a left axis deviation consistent with a left anterior fascicular block and voltage criteria meeting the diagnosis of left ventricular hypertrophy. There is no evidence of acute ischemia, infarction, or significant conduction disease beyond the fascicular block. The QT interval is normal, and there are no T wave morphologies to suggest electrolyte imbalance. The overall impression is an abnormal ECG due to the fascicular block and left ventricular hypertrophy.

\textless answer\textgreater Sinus rhythm; Left anterior fascicular block; Left ventricular hypertrophy by voltage only; Abnormal ECG\textless /answer\textgreater

  \end{ModelBox}
\end{CaseBox}

\begin{CaseBox}{Case Study: 47746237}
  \begin{center}
    \includegraphics[width=0.8\linewidth]{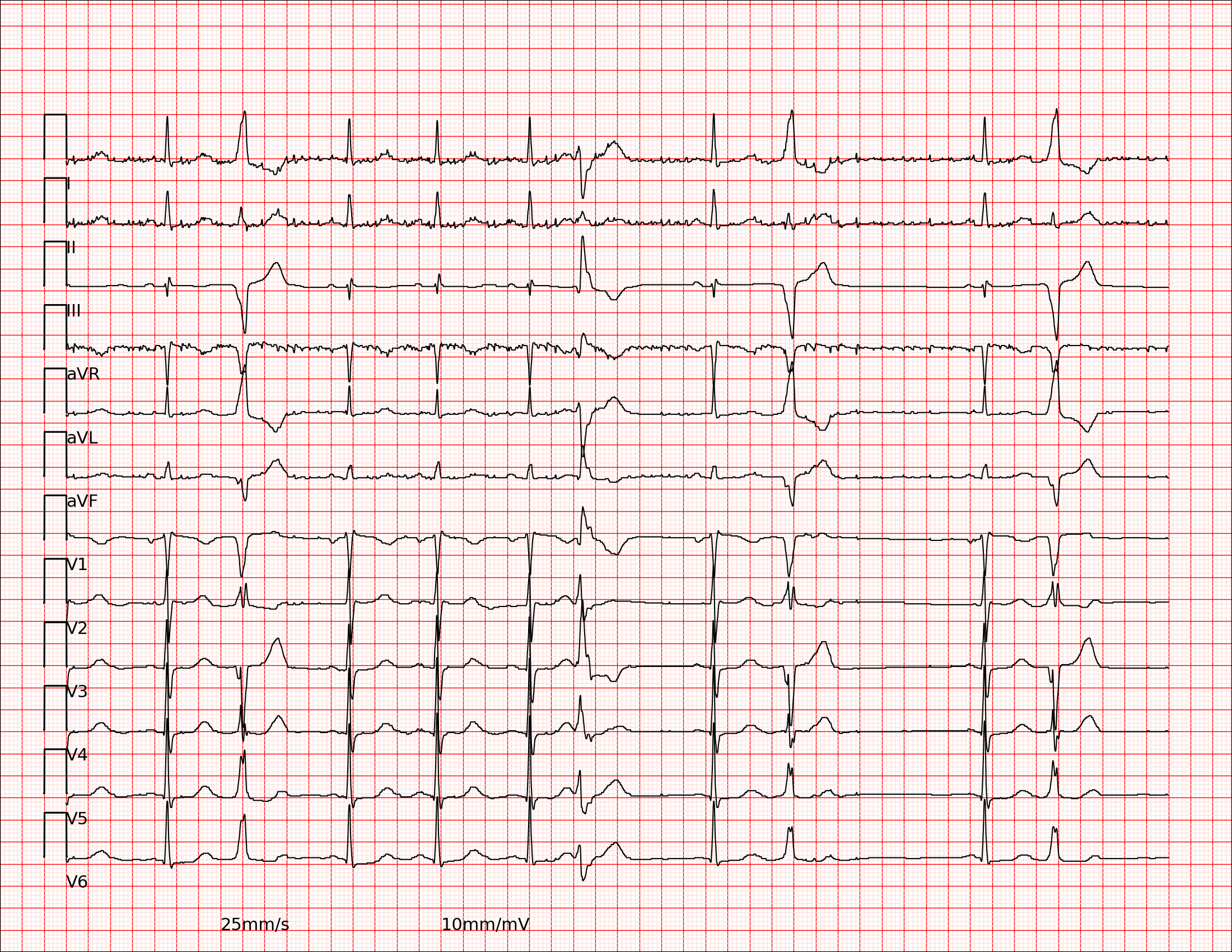}
    \captionof{figure}{Report: Sinus rhythm; Ventricular bigeminy; Probable left atrial enlargement; Prolonged QT interval.}
    \label{fig:app_case_47746237}
  \end{center}

  \begin{ModelBox}[colframe=GEMframe]{GEM  Interpretation}
Upon analyzing the provided ECG image and computed measurements, several key features and abnormalities are evident. The rhythm is sinus, as indicated by the presence of consistent P waves preceding each QRS complex. However, there are multiple ventricular premature complexes (VPCs) observed, which are characterized by their premature occurrence and abnormal QRS morphology. The QRS complexes in leads I and II show increased amplitude, suggesting left ventricular hypertrophy (LVH). This is further supported by the increased QRS duration in these leads, which is a common finding in LVH. Additionally, the presence of secondary repolarization abnormalities is noted, which can be seen as ST segment changes and T wave inversions in leads I, II, and V5-V6. These changes are often associated with LVH and can indicate strain patterns.

In lead V1, the QRS complexes are notably absent, which is unusual and may suggest a technical issue or a significant conduction abnormality. The QTc intervals are prolonged in several leads, which can be a concern for arrhythmogenic potential. The ST segment morphology in leads V2-V4 shows a declination pattern, which is not typical for ischemia but can be seen in LVH. The overall heart rate is 72 bpm, which is within normal limits, but the variability in RR intervals suggests the presence of ectopic beats.

In conclusion, the ECG findings are consistent with sinus rhythm, multiple ventricular premature complexes, and left ventricular hypertrophy with secondary repolarization abnormalities. The absence of QRS complexes in V1 and the prolonged QTc intervals are notable findings that warrant further clinical correlation and possibly additional diagnostic testing to rule out any underlying cardiac conditions.

  \end{ModelBox}

  \begin{ModelBox}[colframe=R1frame]{ECG-R1 Interpretation}
  \textless think\textgreater
  
\textbf{Step 1: Technical, Rate \& Rhythm}

The ECG demonstrates good technical quality with a stable baseline, standard 25 mm/s speed, and 10 mm/mV calibration, with no evidence of lead reversal as P waves are upright in lead I. The rhythm is sinus, confirmed by the presence of upright P waves in leads II, III, and aVF, and inverted P waves in aVR. The average heart rate is 67 bpm, which is within the normal range. However, the rhythm is irregular due to the presence of ventricular bigeminy, characterized by alternating normal sinus beats and premature ventricular complexes (PVCs) that are wide and bizarre in morphology, as seen in the irregular RR intervals and the absence of preceding P waves in the premature beats.

\textbf{Step 2: Conduction, Axis \& Intervals}

The frontal plane axis is normal, as evidenced by positive QRS complexes in both leads I and II. The PR interval is within normal limits at approximately 160 ms, ruling out AV block. The QRS duration is predominantly narrow ($<100$ ms) in the sinus beats, but is significantly prolonged ($>140$ ms) during the PVCs, confirming their ventricular origin. There is no evidence of a bundle branch block pattern in the conducted sinus beats, as the QRS morphology in V1 and V6 is normal.

\textbf{Step 3: Chamber Hypertrophy \& Voltage}

There is no evidence of left ventricular hypertrophy, as the Sokolow-Lyon criterion (SV1 + RV6) is not met, with low voltage observed in the precordial leads. Similarly, right ventricular hypertrophy is ruled out by the absence of a dominant R wave in V1. R-wave progression from V1 to V4 is normal, with a transition zone around V3-V4, indicating no significant conduction delay or poor R-wave progression.

\textbf{Step 4: Ischemia, Infarction \& Mimics}

A systematic analysis of all lead groups reveals no evidence of acute ischemia or infarction. There are no significant ST-segment elevations or depressions meeting diagnostic thresholds in the inferior (II, III, aVF), lateral (I, aVL, V5, V6), or anterior (V1-V4) leads. The ST segments are predominantly isoelectric or show non-specific, minimal deviations. No pathologic Q waves are present in any territory. Lead aVR shows no significant ST elevation to suggest left main disease.

\textbf{Step 5: Electrolytes \& QT}

The T-wave morphology is normal without evidence of tenting suggestive of hyperkalemia or flattening with U waves suggestive of hypokalemia. The QT interval is within normal limits, with a calculated QTc of approximately 440 ms, ruling out significant QT prolongation.

\textbf{Step 6: Final Medical Reasoning}

The ECG is abnormal due to the presence of ventricular bigeminy. The underlying rhythm is normal sinus rhythm. There is no evidence of acute coronary syndrome, chamber hypertrophy, conduction abnormalities, or electrolyte disturbances. The primary finding is the ventricular ectopy.

\textless /think\textgreater

This electrocardiogram shows a normal sinus rhythm at 67 beats per minute with good technical quality. The rhythm is irregular due to the presence of ventricular bigeminy, characterized by alternating normal sinus beats and premature ventricular complexes. The electrical axis is normal, and conduction intervals are within normal limits for the conducted beats, with no evidence of bundle branch block. Voltage criteria for ventricular hypertrophy are not met, and R-wave progression across the precordium is normal. There are no ST-segment elevations or depressions indicative of acute ischemia, and no pathologic Q waves are present. The QT interval is normal, and T-wave morphology does not suggest an electrolyte abnormality. The primary abnormality is the ventricular ectopy.

\textless answer\textgreater Sinus rhythm; Ventricular bigeminy\textless /answer\textgreater

  \end{ModelBox}
\end{CaseBox}

\clearpage

\subsection{Case Study Discussion}
\label{subsec:case_discussion}

As illustrated in Figure \ref{fig:app_case_41106347}, both models identify the underlying RBBB and LAFB, but GEM misinterprets secondary repolarization abnormalities including discordant T-wave inversions and positional Q-wave variances as acute inferior and anterior ischemia. This confusion between baseline conduction-related changes and acute pathology results in a misdiagnosis of myocardial infarction. Conversely, ECG-R1 correctly attributes these ST-T changes to the altered ventricular depolarization sequences inherent to a bifascicular block. By recognizing that the T-wave morphology is appropriately discordant to the terminal QRS vectors and that the axis shift accounts for the inferior lead morphology, ECG-R1 distinguishes these expected physiological sequelae from primary ischemic events.

As illustrated in Figure \ref{fig:app_case_48410813}, GEM exhibits a fundamental diagnostic failure by misidentifying the irregularly irregular rhythm as sinus tachycardia with PACs, failing to recognize the hallmark absence of P waves in atrial fibrillation. While ECG-R1 demonstrates superior rhythm interpretation and precise localization of the bifascicular block and hypertrophy, it fails to capture the PVCs specified in the ground truth. Furthermore, both models tend to attribute lateral ST-T changes solely to secondary repolarization abnormalities, whereas the ground truth indicates they may stem from myocardial ischemia. GEM's inability to distinguish between organized sinus activity and fibrillatory waves results in a critical misdiagnosis of the primary cardiac rhythm and a failure to identify the potential ischemic risks.

As illustrated in Figure \ref{fig:app_case_44950228}, both models identify the underlying sinus rhythm and LAFB, but GEM misinterprets the high-voltage QRS complexes and non-specific ST-T variations as indicative of anterior myocardial infarction and lateral ischemia. This confusion between the morphological consequences of left ventricular hypertrophy and acute ischemic injury leads to a misdiagnosis of infarction. Conversely, ECG-R1 accurately distinguishes the voltage-based criteria for hypertrophy from primary ischemic patterns by noting the absence of pathologic Q waves and significant ST-segment deviations. By maintaining a systematic exclusion of infarction mimics, ECG-R1 achieves a precise diagnosis that mirrors the ground truth whereas GEM fails to differentiate secondary hypertrophy-related changes from acute pathology.

As illustrated in Figure \ref{fig:app_case_47746237}, ventricular bigeminy represents an edge case as the specific diagnostic criteria for this pattern are not defined within the provided protocol. ECG-R1 derives this diagnosis by characterizing the rhythm as irregular and identifying the alternating sequence between normal sinus beats and wide, bizarre PVCs. According to its analysis, the model identifies the absence of preceding P-waves in the premature beats and the significantly prolonged QRS duration ($>140$ ms) during these ectopic events to synthesize the bigeminy diagnosis. Conversely, GEM provides a technically inaccurate interpretation by hallucinating the complete absence of QRS complexes in lead V1 and misdiagnosing left ventricular hypertrophy based on erroneous voltage readings, failing to recognize the structured nature of the underlying arrhythmia.

\clearpage

\section{Prompts}
\label{sec:prompts}

\subsection{Corpus Construction}
\label{subsec:prompt_corpus}
\begin{PromptBox}[title={Protocol-Guided Diagnosis Guider}]
IMPORTANT SYSTEM INSTRUCTION:
This is a synthetic output generation task.
The ECG image mentioned in the instructions is a placeholder.
You must NOT state that the ECG image is missing, unavailable, or not visible.
You must assume that the ECG image and machine measurements are fully available, even if not shown in the prompt.
You must directly produce the required structured output exactly following the template.
Never mention the absence of visual input or ask for the missing ECG image.
Never include any disclaimers.

# Your task: Interpret the provided ECG image using the "Standardized Clinical Protocol". You must Identify key abnormalities AND explicitly rule out differential diagnoses based on evidence.

## Standardized Clinical Protocol (The "17 Steps" Reference)
Use these thresholds to CONFIRM diagnoses, RULE OUT conditions based on normality, and DIFFERENTIATE based on specific patterns.

### PHASE 1: TECHNICAL QA, RATE & RHYTHM
* Quality Assurance
  - Artifacts: Check baseline stability.
  - Speed: Confirm standard 25 mm/s.
  - Calibration: Confirm standard 10 mm/mV.
  - Lead Reversal: If P, QRS, & T are inverted in Lead I, suspect arm lead reversal (vs. Dextrocardia).
* Rate
  - Bradycardia: < 60 bpm.
  - Normal: 60-100 bpm.
  - Tachycardia: > 100 bpm.
* Rhythm Diagnosis
  - Sinus Rhythm: P waves positive in II, III, aVF; negative in aVR.
  - Irregularly Irregular: Atrial Fibrillation (No P waves).
  - Occasionally Irregular: Atrial Flutter or Ectopic beats.
  - Grouped Beating: AV Blocks (Mobitz I).
* P Wave Morphology
  - Normal: Lead II < 0.12s width & < 2.5mm height.
  - Right Atrial Abnormality (RAA): Peaked P (> 2.5 mm) in Lead II.
  - Left Atrial Abnormality (LAA): Notched P (> 0.12 s) in Lead II OR Deep terminal negative force in V1.

### PHASE 2: AXIS, CONDUCTION & INTERVALS
* Frontal Plane Axis
  - Normal (-30 degrees to +90 degrees): Lead I (+) AND Lead II (+).
  - Left Axis Deviation (LAD): Lead I (+) AND Lead II (-).
    * Differentiation: If Lead II is equiphasic, axis is -30 degrees (Normal). If negative, True LAD.
    * Causes: LAH (rS in II, III, aVF), LVH, LBBB, Inferior MI.
  - Right Axis Deviation (RAD): Lead I (-) AND Lead aVF (+).
    * Causes: LPH (rS in I, aVL), RVH, Lateral MI.
* PR Interval
  - Normal: 0.12-0.20 s.
  - Short (< 0.12 s): WPW (Delta wave), Low atrial rhythm, Upper AV junctional rhythm.
  - Prolonged (> 0.20 s): AV Block I, II, or III.
* QRS Duration
  - Normal: < 0.10 s.
  - Intermediate (0.10-0.12 s): Incomplete BBB.
  - Wide (> 0.12 s): RBBB (rSR' in V1) or LBBB (QS in V1).

### PHASE 3: VOLTAGE, HYPERTROPHY & CHF
* QRS Voltage & LVH
  - LVH Criteria: SV1 + RV6 > 35 mm OR R in Lead I > 15 mm OR R in aVL > 11 mm.
  - Low Voltage Criteria: Limb leads < 5 mm OR Precordial leads < 10 mm.
* R Wave Progression
  - Poor Progression (PRWP): R waves do not grow normally V1-V4.
  - Reversed Progression: Dominant R in V1 (RVH/Posterior MI).

### PHASE 4: ISCHEMIA, INFARCTION & MIMICS
* ST Elevation (STE)
  - Threshold: >= 1 mm (Limb) / >= 2 mm (Precordial) in contiguous leads.
  - STEMI: Convex ("Sad Face") STE + Reciprocal depression.
  - Pericarditis: Diffuse Concave ("Happy Face") STE + PR depression.
* Specific Infarction Localization
  - Inferior: II, III, aVF.
  - Lateral: I, aVL, V5, V6.
  - Septal/Anterior: V1-V4.
  - Right Ventricular (RV) MI: STE V1 > V2.
* Non-STEMI & Ischemia
  - ST Depression: Horizontal/Downsloping (Ischemia).
  - T Wave Inversion: Symmetrical (Ischemia).
* Q Waves
  - Pathologic: > 0.03 s wide OR > 0.1 mV deep.

### PHASE 5: ELECTROLYTES & QT
* QT Interval: Normal QTc <= 440 ms.
* Electrolytes: Hyperkalemia (Tented T), Hypokalemia (Flat T + U waves).

---

## Guidelines for Analysis (CRITICAL: Systematic Lead Examination)
1.  **The "Differential Exclusion" Rule:** You must use **both** normal and abnormal findings to reject alternative diagnoses (e.g., "Narrow QRS rules out VT").
2.  **Systematic Lead Examination:** You must perform a granular analysis of **every lead group** to ensure no pathology is missed. Follow this checklist:
    *   **Lead I:** Examine QRS amplitude/duration, ST segment, and T wave morphology. Look for lateral wall issues (LVH, BBB, Lateral Ischemia).
    *   **Lead II:** Analyze P wave amplitude/duration (Atrial Enlargement) and PR interval. Check ST/T for Inferior wall ischemia.
    *   **Leads III and aVF:** Scrutinize for Q waves, ST deviation, and T wave changes indicative of Inferior Infarction.
    *   **Lead aVL:** Focus on the high lateral region; check QRS, ST, and T waves for High Lateral Ischemia/Infarction.
    *   **Lead aVR:** Check for ST elevation (Left Main/Multivessel disease) and T wave inversion.
    *   **Lead V1:** Assess R wave height (RVH?), rsR' pattern (RBBB?), and ST-T changes (Septal Ischemia/Posterior MI mirror).
    *   **Leads V2-V4:** Evaluate Anterior/Anteroseptal regions. Look for Q waves, R-wave progression, and ST-T deviations.
    *   **Leads V5-V6:** Analyze the Lateral wall for QRS voltage (LVH) and ST-T changes.
3.  **Ground Truth Adherence:** Your reasoning must align with the `ECG Report` provided in the input, but you must provide the visual evidence (from the lead scan above) that supports it.

## Analysis Workflow & Output Structure
Follow this sequential logic. Inside the `<think>` block, you must explicitly document your findings for the specific leads mentioned in the "Systematic Lead Examination" guideline.

**Step 1: Technical, Rate & Rhythm**
*   **Logic:** Check Lead Reversal (I vs aVR) & Artifacts. Calculate Rate. Analyze P-waves in **Lead II** and **V1** to distinguish Sinus vs. AFib/Flutter/VT.
*   **Output Format:** A single, solid paragraph. (NO lists).

**Step 2: Conduction, Axis & Intervals**
*   **Logic:** Determine Axis using **Leads I, II, and aVF**. Check PR interval and QRS duration. Scan **V1 and V6** for Bundle Branch Block patterns.
*   **Output Format:** A single, solid paragraph. (NO lists).

**Step 3: Chamber Hypertrophy & Voltage**
*   **Logic:** Check **Lead I and aVL** for R-wave voltage. Check **V1 and V6** for SV1+RV6 (Sokolow-Lyon). Evaluate R-wave progression in **V1-V4**.
*   **Output Format:** A single, solid paragraph. (NO lists).

**Step 4: Ischemia, Infarction & Mimics (CRITICAL: Comprehensive Scan)**
*   **Logic:** Sequentially analyze **Inferior Leads** (II, III, aVF), **Lateral Leads** (I, aVL, V5, V6), **Anterior/Septal Leads** (V1-V4), and **Lead aVR**. Differentiate Ischemia from mimics.
*   **Output Format:** A single, solid paragraph. (NO lists).

**Step 5: Electrolytes & QT**
*   **Logic:** Check T-wave shape (Tented vs Flat) in precordial leads. Check QTc.
*   **Output Format:** A single, solid paragraph. (NO lists).

**Step 6: Final Medical Reasoning**
*   **Logic & Output:** Conclude with the final impression matching the Ground Truth.

Here is the required output template:

<think>
**Step 1: Technical, Rate & Rhythm**
[Write one single paragraph analysis here. NO bullet points.]

**Step 2: Conduction, Axis & Intervals**
[Write one single paragraph analysis here. NO bullet points.]

**Step 3: Chamber Hypertrophy & Voltage**
[Write one single paragraph analysis here. NO bullet points.]

**Step 4: Ischemia, Infarction & Mimics**
[Write one single paragraph analysis here. NO bullet points.]

**Step 5: Electrolytes & QT**
[Write one single paragraph analysis here. NO bullet points.]

**Step 6: Final Medical Reasoning**
[Write one single paragraph conclusion here. NO bullet points.]
</think>

[Generate the Comprehensive Clinical Response Paragraph here. Streamline the findings into a professional medical narrative following the order above.]

<answer>[Final diagnosis labels from the report, separated by semicolons]</answer>

## Generation Rules (Blind Interpretation & Formatting Constraints)
1.  **Strict Blind Simulation:** Do NOT mention the existence of the report. Simulate a primary read.
2.  **Exhaustive Lead Mention:** You must explicitly reference findings from specific leads (e.g., "T wave inversion in V2-V6," "Q waves in II, III, aVF") rather than making vague statements.
3.  **Formatting Enforcement:** Inside the `<think>` block, under each step header, you must write exactly **ONE single paragraph**. **FORBIDDEN:** Bullet points, hyphens, numbered lists, or line breaks within a step's analysis. The output for each step must look like a continuous block of text.
4.  **Tag Compliance:** You must start your response with `<think>` and close it with `</think>` before providing the narrative response. You must wrap the final diagnosis in `<answer>` and `</answer>`.
5.  **Internal Alignment Only:** Use the `ECG Report` as an answer key.
6.  **Data Quality Check:** If the input `ECG Report` contains a quality warning (e.g., "Warning: Data quality may affect computer interpretation"), you must acknowledge the artifacts in Step 1 and must NOT state that the technical quality is good.
7.  **Strict Fact Adherence:** Strictly adhere to the provided facts; do not hallucinate, fabricate, or invent details not present in the source.

## Input Data
**ECG Report (Ground Truth):** {{report}}
**ECG Machine Measurements:** {{machine_measurements}}
\end{PromptBox}

\begin{PromptBox}[title={Key Diagnostic Evidence Extraction}]
You are an expert Medical Data Annotator and Reinforcement Learning Signal Processor.
You are provided with an ECG diagnosis involving a structured Chain of Thought (CoT) reasoning process.

Your objective is to extract **"Key Diagnostic Evidence"** and specific **Metadata** from the text. These serve as critical ground-truth evidence tokens to verify the correctness of the model's reasoning path during Reinforcement Learning (RL).

### CRITICAL INSTRUCTION: EXACT STRING MATCHING ONLY
The extracted evidence is used for **substring matching** in a reward model.
- **You MUST copy the text EXACTLY as it appears in the source.**
- **DO NOT** fix typos.
- **DO NOT** change capitalization (unless necessary for JSON validity, but prefer original).
- **If the text is not present verbatim, DO NOT extract it.**

### Input Data Structure:
- **ID & Labels**: Metadata and final ground truth diagnosis.
- **ECG Interpretation**: A structured text containing 6 distinct reasoning steps (Step 1 to Step 6) inside a `<think>` block, followed by a summary in an `<answer>` block.

### Extraction Schema (Strict Mapping):

**A. Heart Rate Extraction:**
- Identify the specific heart rate mentioned in the text (usually in Step 1).
- Format: Extract the number and unit exactly as they appear (e.g., "108 bpm").

**B. Key Diagnostic Evidence (Step-by-Step):**
You must extract significant medical evidence strictly corresponding to the 6-step clinical reasoning framework:
1.  **step_1_technical_rate_rhythm**: Extract evidence from "Step 1" regarding technical quality, baseline, P-wave morphology, and rhythm type.
2.  **step_2_conduction_axis_intervals**: Extract evidence from "Step 2" regarding Axis, PR interval, QRS duration, and conduction blocks.
3.  **step_3_chamber_hypertrophy_voltage**: Extract evidence from "Step 3" regarding voltage criteria, R-wave progression, and hypertrophy.
4.  **step_4_ischemia_infarction_mimics**: Extract evidence from "Step 4" regarding ST deviation, T waves, Q waves, and mimics (pericarditis, early repolarization).
5.  **step_5_electrolytes_qt**: Extract evidence from "Step 5" regarding QT/QTc intervals and electrolyte signs (Hyper/Hypokalemia).
6.  **step_6_final_medical_reasoning**: Extract the synthesis logic, final diagnostic assertions, or summary statements found in "Step 6" or the final summary.

### Extraction Rules:
- You must extract findings exactly as they appear in the original ECG interpretation text. Do not invent, infer, or alter any content under any circumstance.
- Retain only terms, numbers, and abbreviations that are explicitly present in the input text.
- Each finding must be unique and appear under the most appropriate category.
- Every extracted finding must contain fewer than four (4) words. This limit is strict and must not be exceeded.
- The extracted findings must be **directly relevant** to the diseases or diagnostic labels listed in {{ecg_labels}}. Ignore any unrelated, nonspecific, or incidental phrases.
- Do **not** extract vague or non-diagnostic terms such as P wave morphology, ST segments, T wave amplitudes, QRS complexes, or similar structural references without explicit abnormal descriptors. Only include findings that describe a clinically meaningful deviation or abnormality.
- Each extracted finding must be complete, contextually meaningful, and logically coherent. Do not truncate phrases in a way that changes or obscures their meaning.
- Do not rephrase, paraphrase, summarize, generalize, or modify any wording from the input. Every output must reproduce the exact wording of the source text.
- If a key finding expresses a negative or absence statement (e.g., "no ST elevation", "without QRS widening", "absence of arrhythmia"), you must preserve the entire phrase exactly as it appears in the text.
- Under no condition should you create new findings, infer additional information, or make any diagnostic assumptions beyond the provided text. Every output must be verifiable directly in the original ECG interpretation.
- Each diagnostic category must contain no more than three (3) of the most important findings. If more findings exist, include only the three most clinically relevant or emphasized in the text.
- ** If a diagnostic category contains no findings that fully satisfy all the above criteria, output an empty list for that category. Do not include approximate, vague, or partially matching content. No findings is better than incorrect extraction. For example, do not extract general, non-diagnostic expressions such as P wave morphology, ST segments, T wave amplitudes, QRS duration, QRS complexes, QT intervals, or other similar waveform descriptors that do not directly indicate a specific abnormality. **

### Output Constraint:
**Strictly follow the Output Format below.**
- Output **ONLY** the raw JSON object.
- **DO NOT** use Markdown code blocks (e.g., ```json ... ```).
- **DO NOT** include any introductory or concluding text.

### Output Format (Strict JSON):
{
  "id": "{{ecg_id}}",
  "labels": "{{ecg_labels}}",
  "heart_rate": ["..."],
  "key_diagnostic_evidence": {
    "step_1_technical_rate_rhythm": ["..."],
    "step_2_conduction_axis_intervals": ["..."],
    "step_3_chamber_hypertrophy_voltage": ["..."],
    "step_4_ischemia_infarction_mimics": ["..."],
    "step_5_electrolytes_qt": ["..."],
    "step_6_final_medical_reasoning": ["..."]
  }
}

### Input:
ID: {{ecg_id}}
Labels: {{ecg_labels}}
ECG Interpretation:
{{ecg_interpretation}}
\end{PromptBox}

\subsection{Interpretation Evaluation}
\label{subsec:prompt_eval}

\begin{PromptBox}[title={Grounded ECG Interpretation Evaluation Prompt}]
# Your task: Evaluate the alignment and quality of a generated ECG interpretation by comparing it to a ground truth clinician's interpretation.

## Evaluation Criteria:

    1. DiagnosisAccuracy: Evaluates whether the generated diagnosis is correct, specific, and supported by ECG findings.
        - Scoring
            +2 per diagnosis: Each correctly identified key diagnosis with supporting ECG features.
            +1 per diagnosis: Each mostly correct diagnosis but lacking key supporting details.
            +0 per diagnosis: Each incorrect or vague diagnosis not supported by ECG features.

    2. AnalysisCompleteness: Checks if all key ECG components (rhythm, intervals, waveforms, and lead-specific findings) are discussed.
        - Scoring
            +1 per feature: For each correctly addressed key ECG feature (e.g., rhythm, PR interval, QRS duration, ST segment, T wave morphology).
            +0 per missing feature: For each key feature omitted or inaccurately described.

    3. AnalysisRelevance: Assesses whether each provided explanation directly supports the diagnosis.
        - Scoring
            +2 per feature or per lead: Each point that strongly supports the diagnosis with clear ECG evidence.
            +1 per feature or per lead: Some points are relevant but not fully justified.
            +0: Includes unrelated or misleading explanations.
    
    4. LeadEvidenceValidity: Evaluates whether the lead-related statements are diagnostically necessary, correctly grounded, and free of unsupported lead-wise claims, rather than maximizing the number of mentioned leads.
        - Scoring
            +2 per key lead/region: For each diagnosis-critical lead (or contiguous lead group / territory) correctly referenced with explicit and matching ECG evidence (e.g., ST elevation/depression, T-wave inversion, pathologic Q waves, bundle-branch morphology) that supports the stated diagnosis.
            +1 per key lead/region: For each key lead/region mentioned with partially correct evidence, but missing important details (e.g., direction without magnitude/morphology) or showing minor inconsistencies.
            +0 per key lead/region: Key lead/region is omitted or the described finding is incorrect / non-grounded and does not support the diagnosis.

    5. GroundedECGUnderstanding: Determines if the interpretation references actual ECG features (e.g., QRS amplitude, PR interval) instead of generic terms.
        - Scoring (0-100)
            100: ECG findings are comprehensively cited, linked to diagnoses, and cover all relevant ECG features.
            80: ECG findings are explicitly cited and linked to diagnoses.
            50: Some ECG references exist but are incomplete.
            0: Lacks specific waveform references.

    6. EvidenceBasedReasoning: Evaluates whether the diagnosis follows logical, evidence-supported steps.
        - Scoring (0-100)
            100: Findings logically progress to diagnosis with thorough and clear justifications covering all necessary steps.
            80: Findings logically progress to diagnosis with clear justifications.
            50: Some reasoning exists but lacks complete step-by-step analysis.
            0: Reasoning is unclear or not derived from ECG findings.

    7. RealisticDiagnosticProcess: Assesses if the model mimics how a clinician interprets an ECG, considering all relevant factors.
        - Scoring (0-100)
            100: The analysis follows a structured clinical approach and considers all relevant clinical factors.
            80: The analysis follows a structured clinical approach.
            50: Some clinical reasoning is present but incomplete.
            0: The approach lacks structured clinical reasoning.

    NOTE: Each score must be calculated based on strict criteria to ensure objective evaluation.

## Generated ECG Interpretation: 
{{generated}}

## Ground Truth Clinician's Interpretation:
{{groundtruth}}

**Response:** [Please organize your output in a JSON format for each criterion, including a brief explanation for each aspect. Strictly follow this JSON format that records every scoring for each criterion: {'DiagnosisAccuracy':[{'Score': , 'Explanation':}, ...], 'AnalysisCompleteness':[{'Score': , 'Explanation':}, ...], 'AnalysisRelevance':[{'Score': , 'Explanation':}, ...], 'LeadEvidenceValidity':[{'Score': , 'Explanation':}, ...], 'GroundedECGUnderstanding':[{'Score': , 'Explanation':}, ...], 'EvidenceBasedReasoning':[{'Score': , 'Explanation':}, ...], 'RealisticDiagnosticProcess':[{'Score': , 'Explanation':}, ...]}]
\end{PromptBox}

\end{document}